\documentclass[10pt]{article}
\usepackage[margin=1in]{geometry}
\usepackage{iftex}
\ifPDFTeX\else
  \usepackage{fontspec}
\fi
\usepackage{microtype}
\usepackage{amsmath,amssymb,mathtools,amsthm}
\usepackage{booktabs}
\usepackage{xcolor}
\usepackage[numbers,sort&compress]{natbib}
\usepackage[colorlinks=true,allcolors=blue!55!black]{hyperref}
\usepackage[ruled,vlined,linesnumbered]{algorithm2e}
\newtheorem{theorem}{Theorem}
\newtheorem{lemma}[theorem]{Lemma}
\newtheorem{proposition}[theorem]{Proposition}

\numberwithin{equation}{section}
\numberwithin{theorem}{section}
\newcommand{\Reg}{\mathsf{Reg}}
\newcommand{\inner}[2]{\left\langle #1,#2\right\rangle}
\newcommand{\E}{\mathbb E}
\newcommand{\R}{\mathbb R}
\newcommand{\ind}{\mathbf 1}
\newcommand{\ellhat}{\widehat{\ell}}

\newcommand{\diag}{\operatorname{diag}}
\newcommand{\poly}{\operatorname{poly}}
\newcommand{\argmin}{\operatorname*{arg\,min}}

\newcommand{\otil}{\widetilde{\mathcal{O}}}
\newcommand{\order}{\mathcal{O}}
\newcommand{\firstorder}{Q_{\infty,1}}
\newcommand{\secondorder}{Q_{\infty,2}}
\DontPrintSemicolon
\RestyleAlgo{ruled}
\SetAlgoVlined
\SetKwInput{KwInput}{Input}
\SetKwInput{KwOutput}{Output}

\title{Toward Optimal Second-Order Path-Length Guarantee for Adversarial Multi-Armed Bandits}
\author{  Mengxiao Zhang \\
  University of Iowa \\  \texttt{mengxiao-zhang@uiowa.edu} \\}
\date{}

\begin{document}
\maketitle

\begin{abstract}

We study second-order path-length regret in adversarial
$K$-armed bandits against oblivious loss sequences.  \citet{bubeck2019path} designed an algorithm that achieves $\otil(K+\sqrt{K\firstorder})$ regret, where $\firstorder$ is the first-order path length, and left open whether $\otil(\poly(K)\sqrt{1+\secondorder})$ regret is achievable under bandit feedback, where $\secondorder$ is the second-order path length. Somewhat surprisingly, we resolve this question positively by showing that with a more involved analysis, the exact same algorithm of \citet{bubeck2019path} achieves $\order\left(K\log(KT)+\sqrt{K\log(KT)\bigl(1+\secondorder\bigr)}\right)$ expected regret when $\secondorder$ is known, where $T$ is the horizon. This matches the $\Omega(\sqrt{KQ_{\infty,2}})$ lower bound up to logarithmic factors and additive terms.  We further remove the knowledge of $\secondorder$ using an adaptive restart scheme whose path-length estimator has uniformly bounded increments.
\end{abstract}

\section{Introduction}\label{sec:intro}

The adversarial multi-armed bandit problem is a canonical model of
sequential decision making under partial feedback. At the beginning of the game, the environment secretly decides a loss sequence $\ell_1,\ell_2,\dots,\ell_T\in[0,1]^K$ for $T$ rounds where $K$ is the number of actions.\footnote{This environment is also commonly referred to as an oblivious adversary.} In each of $T$ rounds,
a learner chooses one of $K$ actions, or {arms}. Then, she incurs
and observes only the selected coordinate $\ell_{t,I_t}$; the other
coordinates remain hidden.  Performance is measured against the best single
arm in hindsight.  This elementary protocol captures the central tension
between exploration and exploitation and has consequently served as a basic
model for online recommendation, routing, allocation, and repeated decision
making.  Without further structure on the loss sequence, the minimax regret
is $\Theta(\sqrt{KT})$ \citep{auer2002nonstochastic,zimmert2021tsallis}.

The minimax guarantee is derived for an environment that may
change arbitrarily at every round, and can therefore be pessimistic on more
regular data.  A substantial literature develops adaptive guarantees that
replace the horizon by an observable or instance-dependent measure of
difficulty.  Examples include first-order bounds governed by the loss of a
good comparator \citep{wei2018adaptive,pogodin2020first},
best-of-both-worlds guarantees that are minimax in adversarial environments
and logarithmic in stochastic ones \citep{bubeck2012best,zimmert2021tsallis},
and bounds depending on empirical variance or related second-order
quantities \citep{hazan2011benign,bubeck2018sparsity,ito2022robust}.  These
results express the same broad principle: the regret should depend on the
complexity actually present in the loss sequence rather than automatically
scaling as the worst-case rate $\sqrt{KT}$.

This paper concerns adaptivity to temporal variation, also called the path
length of the loss sequence.  For loss vectors
$\ell_1,\ldots,\ell_T\in[0,1]^K$, define the first- and second-order path
lengths by
$$
 \firstorder\triangleq\sum_{t=2}^T\|\ell_t-\ell_{t-1}\|_\infty,
 \qquad
 \secondorder\triangleq\sum_{t=2}^T\|\ell_t-\ell_{t-1}\|_\infty^2.
$$
The first quantity measures movement linearly, whereas the second measures it
quadratically.  This distinction is substantial when the environment moves
through many small increments: if each increment has infinity norm
$T^{-1/2}$, then $\secondorder=\order(1)$ although the corresponding
first-order movement can be $\order(\sqrt T)$.

For adversarial MAB, \citet{wei2018adaptive} proved the guarantee
$\otil(K+\sqrt{KQ_{1,1}})$, where $Q_{1,1}=\sum_{t=2}^T\|\ell_t-\ell_{t-1}\|_1$. \footnote{We use $\otil(\cdot)$ to suppress poly-logarithmic terms in $K$ and $T$.} \citet{bubeck2019path} subsequently
introduced a recent-arm-biased optimistic mirror-descent algorithm and proved
the sharper guarantee of $\otil(K+\sqrt{K\firstorder})$, where
$\firstorder=\sum_{t=2}^T\|\ell_t-\ell_{t-1}\|_\infty\le Q_{1,1}$.\footnote{\citet{bubeck2019path} in fact prove this guarantee against a stronger adaptive adversary. They also propose another algorithm achieving $\otil(K^{1/3}Q_{1,1}^{1/3}T^{1/6}+K^2)$ for an oblivious adversary.}
Both guarantees nevertheless depend linearly on the magnitude of each
temporal increment.  This is different from the full-information experts
setting, in which the complete vector $\ell_t$ is observed after every
round.  In that setting, \citet{steinhardt2014adaptivity} showed that
second-order path-length dependence is achievable.  More recently,
\citet{chen2021impossible} resolved the associated ``impossible tuning''
problem: a single expert algorithm can adapt simultaneously to
comparator-specific second-order prediction errors without knowing their
scales in advance.  Thus full information permits such second-order
path-length guarantees,
while the corresponding possibility under bandit feedback remained open.
This leads to the following question left by
\citet{wei2018adaptive,bubeck2019path}:

\begin{center}
\emph{Is $\otil\!\left(\poly(K)(1+\sqrt{\secondorder})\right)$ regret
achievable for adversarial multi-armed bandits?}
\end{center}

\paragraph{Contributions.}
We answer the question affirmatively for oblivious loss
sequences.  Somewhat surprisingly, when $\secondorder$ is known, we prove that \emph{the exact same
algorithm} proposed by \citet{bubeck2019path}, without any change to its
sampler, estimator, or mirror-descent update, achieves the regret of
$$
 \order\left(K\log(KT)+
 \sqrt{K\log(KT)(1+\secondorder)}\right),
$$
under oracle tuning. The contribution in Section~\ref{sec:known} is therefore a new analysis, not
a new algorithm.  Standard OMD reduces the proof to a stability divergence
plus the bias caused by recent-arm sampling.  Instead of taking an absolute
value and bounding it by a first-order movement term, as in \citet{bubeck2019path},
we combine the bias exactly with the prediction-error square and retain a
signed difference term. This refined analysis leads to the desired second-order path-length guarantee. We then remove knowledge of $\secondorder$ in
Section~\ref{sec:parameter-free} using a different, adaptively tuned
algorithm with a modified sampler and a bounded-increment path-length estimator. The resulting parameter-free regret is
$\order(K\log(KT)+\log T+\sqrt{K\secondorder\log(KT)\log T})$, which is a factor of $\sqrt{\log T}$ worse than the one with oracle tuning. 

\paragraph{Related work.}
Under full-information feedback, path-length guarantees were developed through gradual-variation and predictable-sequence analyses where
optimistic online learning can compare the current loss vector with a predictable hint and obtain regret whose data-dependent term is governed by the prediction error \citep{chiang2012gradual,rakhlin2013predictable}. Using the
previous loss vector as the hint turns the squared prediction error into squared temporal variation.  \citet{steinhardt2014adaptivity} developed an adaptive optimistic exponentiated-gradient algorithm and obtained variance and
squared-path regret guarantees for the best expert. \citet{chen2021impossible}
later solved the impossible-tuning problem for experts by combining
mirror descent, a correction term, and a weighted entropy regularizer.  Their
algorithm adapts simultaneously to the second-order prediction error of every
expert. Both results use the full loss vector. They therefore do not
directly tune the hidden squared path length in ordinary MAB, where only one coordinate is observed per round and the squared increments are themselves hidden.

The bandit path-length literature replaces the missing vector observation by
importance weighting.  \citet{wei2018adaptive} developed the BROAD-OMD
framework and obtained bounds involving the first-order movement of the best
arm or of all arms.  The latter comes with a negative stability term that can cancel the opponents' movement in repeated games. This gives faster convergence of the average play to equilibrium in two-player zero-sum games with bandit feedback, continuing the full-information connection between no-regret learning and game
dynamics \citep{syrgkanis2015fast}.  \citet{bubeck2019path} improved the MAB
dependence from the $\ell_1$ path measure to the smaller $\ell_\infty$ path measure by using a common optimistic prediction and biasing play toward the most recently selected arm.  Our oracle-tuned result uses precisely that
algorithm.  What changes is the analysis: their proof bounds the bias by a first-order selected-coordinate movement, whereas our signed identity and contraction reduce the same algorithm to $\secondorder$. 

Beyond temporal variation, adaptive adversarial-bandit guarantees have been
developed for small comparator loss
\citep{wei2018adaptive,pogodin2020first}, loss variance
\citep{hazan2011benign,bubeck2018sparsity,ito2022robust}, sparse loss vectors
\citep{kwon2016gains,bubeck2018sparsity}, and best-of-both-worlds adaptation
\citep{bubeck2012best,zimmert2021tsallis,jin2023improved,lee2024follow}. These guarantees are generally
incomparable because they exploit different structure, including cumulative magnitude,
dispersion, support size, or stochastic separation. None of these is
characterized by temporal smoothness alone. Related instance-adaptive
guarantees also extend beyond ordinary MABs, including bandits with feedback
graphs \citep{lykouris2018small,lee2020graph}, adversarial MDPs
\citep{lee2020bias,jin2020simultaneously,jin2021best}, linear bandits
\citep{bubeck2019path,ito2023linear}, combinatorial semi-bandits
\citep{neu2015first,wei2018adaptive}, and bandit convex optimization
\citep{vanderhoeven2020comparator,yu2026gradient}.

\section{Notations and Problem Setting}
\label{sec:preliminaries}

\paragraph{Notation.}
For $K\ge2$, write $[K]=\{1,\ldots,K\}$. Let $e_i$ be the $i$th standard
basis vector and for a vector $v\in\R^K$, denote its $i$th coordinate as $v_i$. Let $\Delta_K=\{p\in\R^K: \sum_{i=1}^Kp_i=1,~p_i\geq 0~\forall i\in[K] \}$ be the probability simplex. Inner products
and Euclidean norms are denoted by $\langle\cdot,\cdot\rangle$ and
$\|\cdot\|_2$.  For a differentiable convex function $\Psi$, its Bregman
divergence is $D_\Psi(x,y)\triangleq\Psi(x)-\Psi(y)-\langle\nabla\Psi(y),x-y\rangle$.
For $z\in\mathbb R^K$, the notation $z^{\odot2}$ means coordinatewise
squaring: $(z^{\odot2})_i=z_i^2$ for all $i\in[K]$. By convention, we use $0/0=0$.

\paragraph{Problem setting.}
Before interaction, an oblivious adversary fixes loss vectors
$\ell_1,\ldots,\ell_T\in[0,1]^K$.  At round $t$, the learner selects
$I_t\in[K]$ using only its internal randomness and the history
$\mathcal F_{t-1}=\sigma(I_1,c_1,\ldots,I_{t-1},c_{t-1})$, incurs loss
$c_t=\ell_{t,I_t}$, and observes only $c_t$.  The performance criterion is
expected static (pseudo) regret
\begin{equation}
 \Reg_T\triangleq\max_{i\in[K]}\Reg_T(i)\triangleq\max_{i\in[K]}\E\left[\sum_{t=1}^T\ell_{t,I_t}
 -\sum_{t=1}^T\ell_{t,i}\right].
 \label{eq:regret-definition}
\end{equation}
All expectations in the upper bounds are over the learner's internal randomization (and over the adversary's initial randomization if it draws the oblivious loss table from a distribution). The goal is to obtain an adaptive regret bound depending on the second-order path length $\secondorder\triangleq \sum_{t=2}^T\|\ell_t-\ell_{t-1}\|_\infty^2.$ For conciseness, we also write $\E_t[\cdot]=\E[\cdot\mid\mathcal F_{t-1}]$ when conditioning on all the randomness before $I_t$ is drawn.

\paragraph{Lower bound.}
Adapting the binary-loss construction used in \citet{bubeck2019path}, a lower bound of $\Omega(\sqrt{KQ_{\infty,2}})$ can be directly derived by the standard lower bound instance in adversarial MAB. We include this lower bound in the following for completeness.

\begin{proposition}
\label{prop:squared-path-lower-bound}
For every
$K\ge2$, $T\ge2$, and $2K\le q\le T$, every algorithm has to suffer $\Omega(\sqrt{Kq})$ regret under certain oblivious loss sequence $\ell_{1},\ell_2,\dots,\ell_T\in[0,1]^K$ where $Q_{\infty,2}\leq q$.
\end{proposition}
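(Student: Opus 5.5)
The plan is to reduce the statement to the standard $\Omega(\sqrt{KN})$ minimax lower bound for adversarial $K$-armed bandits over a horizon of $N$ ``effective'' rounds. The reduction uses the binary block construction of \citet{bubeck2019path}. Set $N\triangleq\lfloor q/2\rfloor$; the assumption $q\ge 2K$ gives $N\ge K$. Partition $[T]$ into $N$ consecutive blocks of length $L\triangleq\lfloor T/N\rfloor\ge 1$, and append any leftover rounds to the last block. Within a block the loss vector is held constant. This makes $\ell_t-\ell_{t-1}\neq 0$ only at the at most $N-1$ block boundaries, where $\|\ell_t-\ell_{t-1}\|_\infty\le 1$. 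Hence $Q_{\infty,2}\le N-1\le q$ holds for every realization, which is why we use binary (or $[0,1]$) losses. An unnormalized block sum would be harmless here, because the squared path length only counts boundaries.

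Next we specify the randomized instance. Draw a uniformly random arm $i^\star$. In each block $b\in[N]$, set the constant loss vector $\ell^{(b)}$ to have independent Bernoulli coordinates: mean $1/2-\varepsilon$ for $i^\star$ and mean $1/2$ for every other arm. Choose $\varepsilon\triangleq c\sqrt{K/N}$ with a small constant $c$; it is at most $1/4$ because $N\ge K$. The adversary's table is drawn once before interaction, so the instance is oblivious, and the paper's expectation convention allows this initial randomization. By Yao's principle it suffices to lower bound the expected regret of any deterministic learner against this distribution.

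The core step views the learner as playing a $K$-armed stochastic bandit over $N$ block-rounds. Within block $b$ it may pull several arms, but each pull reveals one coordinate of the same vector $\ell^{(b)}$. A repeated pull of the same arm inside a block reveals nothing new. Pulling different arms inside a block reveals more coordinates, but that is at most as informative as full observation of the sampled coordinates.

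I would redo the standard KL chain-rule argument of \citet{auer2002nonstochastic} with $T$ replaced by the total number of \emph{distinct (block, arm)} observations. Let $n_i$ be the number of blocks in which arm $i$ is observed. The KL divergence between the environment with $i^\star=i$ and the reference environment with all means $1/2$ is at most $O(\varepsilon^2)\,\E_0[n_i]$.

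One subtlety is that $\sum_i n_i$ can exceed $N$, since a learner may explore several arms per block. The regret accounting handles this, because regret is counted per round. Let $m_i$ be the number of rounds in which arm $i$ is pulled. The regret is $\varepsilon\,\E[T-m_{i^\star}]$. We have $n_i\le m_i$, and each block in which a suboptimal arm is observed costs at least $\varepsilon$. So it suffices to relate $\E_i[m_i]$ to $\E_0[m_i]$ through Pinsker, using the fact that the rewards revealed are determined by the $n_i\le m_i$ observations.

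I expect this bookkeeping to be the main obstacle: a learner could in principle pull many arms once per block to gather information cheaply while playing a single arm for the long stretch of rounds. Such cheap exploration still costs $\varepsilon$ per probe of a suboptimal arm. I would first try to bound $\E_i[m_i]/T$ through $\E_0[n_i]\le \E_0[m_i]$ and use $\sum_i\E_0[m_i]=T$. If the ratio $L$ breaks this, I would instead fall back to $L=1$ on the first $N$ rounds with losses frozen afterwards. Then every pull is a fresh observation, and the bound is exactly the classical $\Omega(\sqrt{KN})$ with $\varepsilon=c\sqrt{K/N}$, giving $\Omega(\sqrt{Kq})$. The remaining rounds after the frozen point add nothing to the path length and can only increase the regret in expectation. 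This fallback is in fact cleaner, so I would commit to it.
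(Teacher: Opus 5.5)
Your first-phase argument is fine: the classical $\Omega(\sqrt{KN})$ bound on $N=\lfloor q/2\rfloor$ fresh Bernoulli rounds with $\varepsilon=c\sqrt{K/N}$, using $N\ge K$. You were also right to distrust the block construction.

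\textbf{The gap.} It is in the step you commit to: holding the loss vector frozen at $\ell_N$ for the remaining $T-N$ rounds. Your claim that these rounds ``can only increase the regret in expectation'' is false.
\begin{itemize}
\item A constant vector held for many rounds is fully learnable under bandit feedback. A learner can probe each of the $K$ arms once after round $N$ and then play $\argmin_j\ell_{N,j}$. That arm has loss $0$ except when $\ell_N=\mathbf 1$, which has probability at most $2^{-K}$.
\item The planted arm $i^\star$ has $\ell_{N,i^\star}=1$ with probability $1/2-\varepsilon$.
\item Under the convention you invoke (random table, comparator fixed outside the expectation), every fixed arm has expected cumulative loss at least $(1/2-\varepsilon)T$. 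This learner's expected loss is at most $N+K+2^{-K}T$. Since $2^{-K}<1/2-\varepsilon$ for small $c$, $\Reg_T<0$ once $T\gg N+K$, which the hypothesis $q\le T$ allows. So the instance as you set it up does not witness the bound.
\item Comparing against $i^\star$ inside the expectation, as in a Yao-style argument, fails for the same reason.
\item If you derandomize and compare against the realized best arm, the continuation effectively restricts the comparator to $\argmin_j\ell_{N,j}$. That set excludes $i^\star$ about half the time, so total regret can fall below first-phase regret. Recovering $\Omega(\sqrt{KN})$ there would need a more delicate argument that you do not give.
\end{itemize}
This is exactly the phenomenon that breaks your block construction with $L>1$.

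\textbf{The fix.} This is what the paper does, following \citet{bubeck2019path}: pad with a vector that is the same for every arm, e.g.\ $\ell_t=\mathbf 0$ for all $t>N$.
\begin{itemize}
\item On the padded rounds the learner and every arm incur identical losses. So the regret against every comparator equals the regret on the first $N$ rounds, and the horizon-$N$ lower bound transfers verbatim.
\item The only extra path length is the single transition into the zero vector, which is at most $1$. Hence $Q_{\infty,2}\le (N-1)+1\le q$.
\end{itemize}
The paper takes $n=\lfloor q\rfloor$ and cites \citet[Theorem~5.1]{auer2002nonstochastic} directly, so there is no need to redo the KL chain-rule argument. Your choice $N=\lfloor q/2\rfloor$ works equally well once the padding is fixed.
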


\begin{proof}
Let $n=\lfloor q\rfloor$.  By the standard adversarial MAB lower bound
\citep[Theorem~5.1]{auer2002nonstochastic}, for any algorithm, one can find
an oblivious binary loss sequence of length $n$ on which the algorithm
suffers $\Omega(\sqrt{Kn})$ expected regret.  Following
\citet[Section~2]{bubeck2019path}, use this sequence during the first $n$
rounds and set all losses to zero afterward. Since the losses are binary, every
$\|\ell_t-\ell_{t-1}\|_\infty^2$ is at most one.  There are at most $n$
transitions before the loss sequence becomes identically zero, and hence
$Q_{\infty,2}\le n\le q$.  Finally, $q\ge2K\ge2$ implies
$n=\lfloor q\rfloor\ge q/2$, so the regret is
$\Omega(\sqrt{Kn})=\Omega(\sqrt{Kq})$.
\end{proof}

\section{Known \texorpdfstring{$\secondorder$}{Q-infinity-2}: a Refined Analysis of
\texorpdfstring{\citet{bubeck2019path}}{Bubeck et al. (2019)}}
\label{sec:known}

In this section, we first consider the case in which the learner knows the
value of $\secondorder$. In Section~\ref{sec:parameter-free}, we will remove this
restriction using a more adaptive algorithm. Somewhat surprisingly, no new
algorithm is needed when $\secondorder$ is known, and we show that Algorithm~1 of
\citet{bubeck2019path} in fact already achieves the desired guarantee when its
learning rate is chosen using $\secondorder$.  Our contribution is then a more refined analysis of that algorithm.

For completeness, here we briefly introduce Algorithm~1 of \citet{bubeck2019path}. The algorithm maintains two distributions $p_t$ and $x_t$ at each round $t$. Specifically, the algorithm first selects an action $I_t$ from distribution $p_t$. After observing the loss $c_t\triangleq \ell_{t,I_t}$, the algorithm constructs an unbiased loss estimator $\ellhat_{t,i}$ using both $c_{t-1}$, the loss suffered in the previous round, and $c_t$. A direct calculation shows that $\E_t[\widehat\ell_{t,i}]=\ell_{t,i}$ for every $i\in[K]$ and more importantly, the variance of this estimator can be much smaller when the path length of the loss vectors is small. Next, the algorithm computes $x_{t+1}$ by applying a step of online mirror descent with log-barrier regularizer using $\ellhat_t$. Finally, the next-round strategy $p_{t+1}$ is obtained by biasing $x_{t+1}$ toward the arm $I_t$ just selected, with weight determined by $\lambda_{t+1}=\alpha(1-c_t)$. Algorithm~\ref{alg:known} shows the full pseudocode for completeness.

\begin{algorithm}[H]
\caption{Oracle-tuned Algorithm~1 of \citet{bubeck2019path}}
\label{alg:known}
\KwInput{$K,T$,$\eta>0$}
Set $x_1= p_1= \frac{1}{K}\mathbf1$, $\alpha=8\eta$, and $c_0=0$\;
Set $\Psi(x)\leftarrow\eta^{-1}\sum_{i=1}^K\log(1/x_i)$\;
\For{$t=1,\ldots,T$}{
  Draw $I_t\sim p_t$ and observe $c_t\leftarrow\ell_{t,I_t}$\;
  Set $\widehat\ell_{t,i}\leftarrow c_{t-1}+
  \ind\{I_t=i\}(c_t-c_{t-1})/p_{t,i}$ for every $i\in[K]$\;
  Set $x_{t+1}\leftarrow\argmin_{x\in\Delta_K}
  \left(\inner{x}{\widehat\ell_t}+D_\Psi(x,x_t)\right)$\;
  Set $\lambda_{t+1}\leftarrow\alpha(1-c_t)$ and
  $p_{t+1}\leftarrow(x_{t+1}+\lambda_{t+1}e_{I_t})/
  (1+\lambda_{t+1})$\;
}
\end{algorithm}

\citet{bubeck2019path} shows that Algorithm~\ref{alg:known} achieves $\otil(K+\sqrt{K\firstorder})$ regret where $\firstorder=\sum_{t=2}^T\|\ell_t-\ell_{t-1}\|_\infty$ is the first-order path-length that can be much larger than $\secondorder$. However, we show in the following theorem that when $\eta$ and $\alpha$ are chosen properly based on the knowledge of $\secondorder$, Algorithm~\ref{alg:known} in fact achieves the desired $\otil(K+\sqrt{K\secondorder})$ regret bound.

\begin{theorem}
\label{thm:known}
For every $K\ge2$, $T\ge2$, and oblivious loss sequence
$\ell_1,\ldots,\ell_T\in[0,1]^K$, Algorithm~\ref{alg:known} with
$\eta=\min\left\{\frac{1}{162},\sqrt{\frac{K\log(KT)}{1+\secondorder}}\right\}$ guarantees that
\begin{equation}
 \Reg_T\le \order\left(K\log(KT)+
 \sqrt{K\log(KT)(1+\secondorder)}\right).
 \label{eq:known-tuned-rate}
\end{equation}
\end{theorem}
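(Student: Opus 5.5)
The plan is to follow the standard optimistic-OMD decomposition of Algorithm~\ref{alg:known}, but to keep the recent-arm bias as a signed quantity instead of bounding its absolute value. Fix a comparator $i$ and let $u=(1-\frac1T)e_i+\frac1{KT}\mathbf 1$; this costs only $O(1)$ in regret. The regret splits as
\[
\Reg_T(i)\le \E\sum_t\inner{x_t-u}{\ell_t}+\E\sum_t\inner{p_t-x_t}{\ell_t}.
\]
For the first sum, unbiasedness of $\widehat\ell_t$ under an oblivious adversary lets us replace $\ell_t$ by $\widehat\ell_t$. The log-barrier OMD lemma then gives the bound $\frac{K\log(KT)}{\eta}+\sum_t\bigl(\inner{x_t-x_{t+1}}{\widehat\ell_t}-D_\Psi(x_{t+1},x_t)\bigr)$.

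Since $\widehat\ell_t-c_{t-1}\mathbf 1$ is supported on $I_t$ and the common shift is irrelevant on the simplex, the stability term is at most
\[
\eta\, x_{t,I_t}^2\,\frac{(c_t-c_{t-1})^2}{p_{t,I_t}^2}.
\]
This needs $\eta|c_t-c_{t-1}|x_{t,I_t}/p_{t,I_t}\le 1/2$, which follows from $p_{t,j}\ge x_{t,j}/(1+\alpha)$ and $\eta\le 1/162$. Taking $\E_t$, the stability contributes at most
\[
\eta\sum_j\frac{x_{t,j}^2}{p_{t,j}}\,\E\bigl[(\ell_{t,j}-c_{t-1})^2\bigr]\le 2\eta\sum_j x_{t,j}\,\E\bigl[(\ell_{t,j}-c_{t-1})^2\bigr].
\]

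The key step is to expand $(\ell_{t,j}-\ell_{t-1,I_{t-1}})^2$ for $j\neq I_{t-1}$. Bubeck et al.\ bound this by $|\ell_{t,j}-\ell_{t-1,j}|+\ldots$, and that is exactly where first-order terms enter; we will not do that. Write $\ell_{t,j}-c_{t-1}=(\ell_{t,j}-\ell_{t-1,j})+(\ell_{t-1,j}-\ell_{t-1,I_{t-1}})$. The first piece squares to at most $\|\ell_t-\ell_{t-1}\|_\infty^2$, so summing it gives $O(\eta\secondorder)$. The cross-gap piece $x_{t,j}(\ell_{t-1,j}-c_{t-1})^2$ is what the bias must absorb.

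Compute the bias exactly: $p_t-x_t=\frac{\lambda_t}{1+\lambda_t}(e_{I_{t-1}}-x_t)$, so
\[
\inner{p_t-x_t}{\ell_t}=\frac{\lambda_t}{1+\lambda_t}\sum_j x_{t,j}\bigl(\ell_{t,I_{t-1}}-\ell_{t,j}\bigr),
\]
with $\lambda_t=\alpha(1-c_{t-1})$. Replacing $\ell_t$ by $\ell_{t-1}$ here costs a signed difference of $O(\alpha\|\ell_t-\ell_{t-1}\|_\infty)$, which is still first order. To handle it, take the expectation over $I_{t-1}\sim p_{t-1}$ and exploit the sign. The term
\[
\E_{I_{t-1}}\Bigl[(1-c_{t-1})\sum_j x_{t,j}(c_{t-1}-\ell_{t-1,j})\Bigr]
\]
is, up to $x_t\approx x_{t-1}$ (log-barrier stability gives $x_{t,j}\in[\frac12,2]x_{t-1,j}$), a negative ``variance-like'' quantity $-\tfrac12\sum_{j,k}p_{t-1,k}x_{t-1,j}(\ell_{t-1,k}-\ell_{t-1,j})^2$ plus corrections. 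The factor $(1-c_{t-1})$ is precisely what makes this identity produce a square: symmetrizing $(1-a)(a-b)$ yields $-(a-b)^2/2$.

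With $\alpha=8\eta$, this negative term dominates the $2\eta\sum_j x_{t,j}(\ell_{t-1,j}-c_{t-1})^2$ stability piece. For the remaining signed drift term $\alpha\sum_j x_{t,j}(\ell_{t,I_{t-1}}-\ell_{t-1,I_{t-1}}-\ell_{t,j}+\ell_{t-1,j})$, we will not take absolute values. Instead we apply AM-GM against the leftover negative variance, $\alpha|d|\,|g|\le \alpha g^2/4+\alpha d^2$: the gap between increments of arms $k$ and $j$ is paired with the squared gap $(\ell_{t-1,k}-\ell_{t-1,j})^2$, which vanishes in the symmetric case. This should give $O(\eta\secondorder)$ plus a telescoping piece.

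I expect this step to be the main obstacle. When all arms move together the drift cancels exactly, but when the increments differ I must argue the contribution is $O(\eta\|\ell_t-\ell_{t-1}\|_\infty^2)$ or telescoping. If a direct AM-GM fails, the fallback is to write $\sum_t \lambda_t\inner{e_{I_{t-1}}-x_t}{\ell_t-\ell_{t-1}}$ via summation by parts and control it by the $1/K$-type contraction $x_{t+1}-x_t$.

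Collecting everything gives
\[
\Reg_T\le \frac{K\log(KT)}{\eta}+O\bigl(\eta(1+\secondorder)\bigr)+O(1).
\]
Plugging in $\eta=\min\{1/162,\sqrt{K\log(KT)/(1+\secondorder)}\}$ yields \eqref{eq:known-tuned-rate}: the first branch gives $O(K\log(KT))$ because $1+\secondorder\le 162^2K\log(KT)$, and the second branch balances the two terms.
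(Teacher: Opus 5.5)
Your core mechanism is the paper's. Symmetrizing $(1-a)(a-b)$ is what the paper's exact identity $\alpha(1-c_{t-1})(\ell_{t,I_{t-1}}-\ell_{t,i})=4\eta\bigl(v_t^2-(\ell_{t,i}-c_{t-1})^2\bigr)+\alpha\bigl(\varphi(\ell_{t,I_{t-1}})-\varphi(\ell_{t,i})\bigr)$, with $\varphi(z)=z-z^2/2$, encodes, and your negative variance is its $-4\eta R$ term. The paper never splits $r_t$ into increment plus gap, so it has no separate drift term.

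There is a genuine gap exactly where you write ``plus corrections'' and at the step you flag: those corrections are not lower order. Symmetrization needs the two weights to coincide. After averaging over $I_{t-1}$, your weights are $p_{t-1,k}x_{t-1,j}$, with $p_{t-1}=x_{t-1}+b_{t-1}$, $b_{t-1}=\beta(e_{I_{t-2}}-x_{t-1})$ and $\beta\le\alpha$. This leaves remainders $\sum_k b_{t-1,k}\beta_k\bigl(w_k-\langle x_{t-1},w\rangle\bigr)$, where $\beta_k=\alpha(1-\ell_{t-1,k})/(1+\alpha(1-\ell_{t-1,k}))$, with $w=\ell_{t-1}$ in the variance part and $w=\ell_t-\ell_{t-1}$ in the drift part. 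These are \emph{linear} in the loss gap (respectively the increment), with an order-$\alpha^2$ coefficient and no matching square. So AM--GM against the leftover variance cannot absorb them without paying an additive $\alpha^3$ per round. Pathwise, the drift remainder sums to order $\alpha^2Q_{\infty,1}$, which is precisely Bubeck et al.'s first-order term.

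Likewise, replacing $x_t$ by $x_{t-1}$ using only $x_{t,j}\in[\frac12,2]x_{t-1,j}$ costs order $\alpha$ per round. You need the local-norm movement bound $\sum_j(x_{t,j}-x_{t-1,j})^2/x_{t-1,j}^2\lesssim\eta D_{t-1}\lesssim\eta^2r_{t-1}^2$ applied to a centered vector, or the paper's $\E_t[\langle\phi_t,x_t-x_{t+1}\rangle]\le8\E_t[D_t]$. Your target ``$O(\eta\|\ell_t-\ell_{t-1}\|_\infty^2)$ or telescoping'' is not attainable round by round.

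The missing idea is that these remainders become second order only in expectation and only in aggregate. The paper first writes the whole bias as $4\eta v_t^2-4\eta\E_t[r_t^2]+\alpha\langle\phi_t,p_{t-1}-p_t\rangle$ with a \emph{deterministic} $\phi_t$; this is where obliviousness enters. Then the recent-arm bias appears linearly and only $\E[b_t]$ matters. Summation by parts gives $\sum_t\langle\phi_{t+1}-\phi_t,\E[b_t]\rangle\le\sqrt{Q_{\infty,2}}\,(\sum_t\|\E[b_t]\|_1^2)^{1/2}$. The expected bias contracts at rate $\alpha$: $\|\E[b_t]\|_1\le\alpha\|\E[b_{t-1}]\|_1+\alpha\|\E[x_t]-\E[x_{t-1}]\|_1+\alpha\sqrt{3(\E[r_t^2]+\|\ell_t-\ell_{t-1}\|_\infty^2)}$, so $\sum_t\|\E[b_t]\|_1^2\le16\alpha^2(1+R+Q_{\infty,2})$. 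The resulting $\eta^2R$ and $\eta^2\sqrt{RQ_{\infty,2}}$ terms are then absorbed by slack in $-4\eta R$. Your fallback points at the wrong object: what contracts is $\E[b_t]$, at rate $\alpha$, not $x_{t+1}-x_t$ at a $1/K$ rate.

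A smaller bookkeeping slip: with your constants, after $(u+w)^2\le2u^2+2w^2$ the cross-gap piece is $4\eta\sum_jx_{t,j}(\ell_{t-1,j}-c_{t-1})^2$. Its idealized expectation $8\eta\,\mathrm{Var}_x(\ell_{t-1})$ is no smaller than the available $\alpha\,\mathrm{Var}_x(\ell_{t-1})/(1+\alpha)^2$, leaving nothing for the drift. This is repairable by using the sharper stability factor $\eta(1+\alpha)$ instead of $2\eta$, together with an uneven split.
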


To our knowledge, Theorem~\ref{thm:known} gives the first
$\otil(\poly(K)(1+\sqrt{\secondorder}))$ guarantee for adversarial MAB, resolving the open problems proposed in~\citet{wei2018adaptive,bubeck2019path}. In the following, we will show the proof of Theorem~\ref{thm:known}, which is our key novelty.

\subsection{Analysis of Algorithm~\ref{alg:known}}
\label{subsec:known-analysis}

We begin with the standard OMD analysis and highlight how our analysis is different from \citet{bubeck2019path}. Define $D_t\triangleq D_\Psi(x_t,x_{t+1})$ as the Bregman divergence between consecutive OMD updates $x_t$ and $x_{t+1}$, and
$B_t\triangleq\inner{p_t-x_t}{\ell_t}$ as the loss difference between $x_t$ and $p_t$ at round $t$. For each arm $i\in[K]$, set
$\gamma=1/(KT)$ and define the interior comparator
$u^{(i)}=(1-(K-1)\gamma)e_i+\gamma\sum_{j\ne i}e_j$. The OMD first-order condition and the
three-point identity (e.g. \citet[Lemma~6]{wei2018adaptive}) imply that
\begin{equation}
 \inner{x_t-u^{(i)}}{\widehat\ell_t}
 \le D_\Psi(u^{(i)},x_t)-D_\Psi(u^{(i)},x_{t+1})+D_t.
 \label{eq:omd-one-step-preview}
\end{equation}
Summing \eqref{eq:omd-one-step-preview}, taking expectations, using
$\E_t[\widehat\ell_t]=\ell_t$, and adding the sampling bias gives, for every
$i\in[K]$,
\begin{align}
 \Reg_T(i)
 &\le1+\E\left[\sum_{t=1}^T\inner{p_t-u^{(i)}}{\ell_t}\right]\notag\\
 &=1+\E\left[\sum_{t=1}^TB_t\right]
 +\E\left[\sum_{t=1}^T\inner{x_t-u^{(i)}}{\widehat\ell_t}\right]\notag\\
 &\le1+D_\Psi(u^{(i)},x_1)+
 \E\left[\sum_{t=1}^T(D_t+B_t)\right].
 \label{eq:known-omd-decomposition}
\end{align}
A direct calculation from $x_1=\frac{1}{K}\mathbf1$ gives
$D_\Psi(u^{(i)},x_1)\le \order(K\log(KT)/\eta)$, uniformly over $i\in[K]$.  Therefore, it remains to
prove that
\begin{equation}
 \E\left[\sum_{t=1}^T(D_t+B_t)\right]
 \le \order(1+\eta\secondorder).
 \label{eq:known-ledger-target}
\end{equation}

This is the exact point at which our analysis departs from
\citet[proof of Theorem~2]{bubeck2019path}. We briefly discuss how they handle this term. Specifically, their OMD calculation first upper-bounds $\E[\sum_{t=1}^TD_t]$ by
$4\eta\E[\sum_{t=1}^T(c_t-c_{t-1})^2]$. Then, their subsequent control of the sampling bias $\E[\sum_{t=1}^TB_t]$ contains the negative of the same term shown above and an additional first-order quantity
\begin{equation}
 \alpha\E\left[\sum_{t=2}^T
 \left|\ell_{t,I_{t-1}}-\ell_{t-1,I_{t-1}}\right|\right].
 \label{eq:bubeck-first-order-step}
\end{equation}
The two residual-square terms cancel, leaving
\eqref{eq:bubeck-first-order-step}. This is precisely why their argument produces a first-order path length. 

However, we do not make that relaxation in bounding $\E[\sum_{t=1}^T(D_t+B_t)]$.  Instead, we keep the exact divergence
$D_t$, whose coefficient is strictly smaller than the available negative
residual square as we will see later, and rewrite $B_t$ by an equality which replaces
the first-order term in \eqref{eq:bubeck-first-order-step} by a squared
movement plus a signed difference term.

The next four lemmas implement this argument in three steps: an exact
expansion of $B_t$, the stability control of the OMD movement $x_t$, and control of the
recent-arm correction. For the remainder of the section, we define
$r_t\triangleq c_t-c_{t-1}$,
$v_t\triangleq \ell_{t,I_{t-1}}-\ell_{t-1,I_{t-1}}
=\ell_{t,I_{t-1}}-c_{t-1}$, and
$R\triangleq \E[\sum_{t=2}^Tr_t^2]$. Also, for notational convenience, we define $\varphi(z)\triangleq z-z^2/2$ and set
$\phi_t\triangleq (\varphi(\ell_{t,1}),\ldots,\varphi(\ell_{t,K}))$.

\paragraph{Step 1: expand the sampling bias exactly.}
The first lemma replaces the relaxation leading to
\eqref{eq:bubeck-first-order-step} by an identity.  It is a scalar
calculation based only on the explicit difference between $p_t$ and $x_t$.

\begin{lemma}
\label{lem:exact-bias}
Algorithm~\ref{alg:known} satisfies that
\begin{align}
 \E\left[\sum_{t=2}^TB_t\right]
 &=4\eta\E\left[\sum_{t=2}^Tv_t^2\right]
 -4\eta\E\left[\sum_{t=2}^Tr_t^2\right]
 +\alpha\E\left[\sum_{t=2}^T
 \inner{\phi_t}{p_{t-1}-p_t}\right].
 \label{eq:exact-bias}
\end{align}
\end{lemma}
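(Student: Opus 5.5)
The plan is to make the recent-arm bias explicit, take conditional expectations, and then use one scalar identity for $\varphi$ to split the result into squared movements and a telescoping-type $\phi$ term. No earlier lemma is needed beyond the update rule of Algorithm~\ref{alg:known} and obliviousness of the loss sequence.

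\textbf{Step 1: rewrite $B_t$.} Fix $t\ge2$. From $p_t=(x_t+\lambda_te_{I_{t-1}})/(1+\lambda_t)$ we get $x_t=(1+\lambda_t)p_t-\lambda_te_{I_{t-1}}$, and therefore
\[
p_t-x_t=\lambda_t\bigl(e_{I_{t-1}}-p_t\bigr).
\]
Using $\lambda_t=\alpha(1-c_{t-1})$, this gives
\[
B_t=\alpha(1-c_{t-1})\bigl(\ell_{t,I_{t-1}}-\inner{p_t}{\ell_t}\bigr).
\]
The factor $1-c_{t-1}$ is $\mathcal F_{t-1}$-measurable, and $\E_t[c_t]=\inner{p_t}{\ell_t}$ because $\ell_t$ is fixed in advance. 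Hence
\[
\E[B_t]=\alpha\,\E\bigl[(1-c_{t-1})(\ell_{t,I_{t-1}}-c_t)\bigr].
\]

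\textbf{Step 2: the scalar identity for $\varphi$.} For any reals $a,b$,
\[
\varphi(b)-\varphi(a)=(b-a)\bigl(1-\tfrac{a+b}{2}\bigr)=(1-a)(b-a)-\tfrac12(b-a)^2 .
\]
Take $a=c_{t-1}$. With $b=\ell_{t,I_{t-1}}$ (so $b-a=v_t$) this gives $(1-a)(b-a)=\varphi(b)-\varphi(a)+\tfrac12v_t^2$. With $b=c_t$ (so $b-a=r_t$) it gives $(1-a)(b-a)=\varphi(b)-\varphi(a)+\tfrac12r_t^2$. Subtracting the second from the first yields
\[
(1-c_{t-1})(\ell_{t,I_{t-1}}-c_t)=\varphi(\ell_{t,I_{t-1}})-\varphi(c_t)+\tfrac12 v_t^2-\tfrac12 r_t^2 .
\]
Multiplying by $\alpha=8\eta$ turns $\tfrac{\alpha}{2}(v_t^2-r_t^2)$ into the terms $4\eta v_t^2-4\eta r_t^2$ in \eqref{eq:exact-bias}.

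\textbf{Step 3: identify the $\varphi$ terms.}
\begin{itemize}
\item Conditional on $\mathcal F_{t-1}$, $I_t\sim p_t$ and $\ell_t$ is deterministic, so $\E_t[\varphi(c_t)]=\inner{\phi_t}{p_t}$.
\item Conditional on $\mathcal F_{t-2}$, $I_{t-1}\sim p_{t-1}$. Since the adversary is oblivious, $\ell_t$ does not depend on $I_{t-1}$, so $\E[\varphi(\ell_{t,I_{t-1}})\mid\mathcal F_{t-2}]=\inner{\phi_t}{p_{t-1}}$.
\end{itemize}
Thus $\alpha\,\E[\varphi(\ell_{t,I_{t-1}})-\varphi(c_t)]=\alpha\,\E\inner{\phi_t}{p_{t-1}-p_t}$. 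Summing over $t=2,\dots,T$ gives \eqref{eq:exact-bias}.

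The only delicate point is this last step. Replacing $\varphi(\ell_{t,I_{t-1}})$ by $\inner{\phi_t}{p_{t-1}}$ uses obliviousness: $\ell_t$ must be independent of the draw $I_{t-1}$. This is exactly where the argument is restricted to oblivious adversaries, in contrast to \citet{bubeck2019path}. The round $t=1$ is excluded because $p_1=x_1$, so $B_1=0$.
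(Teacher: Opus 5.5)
Your proposal is correct and follows essentially the paper's argument. Your pathwise identity $(1-c_{t-1})(\ell_{t,I_{t-1}}-c_t)=\varphi(\ell_{t,I_{t-1}})-\varphi(c_t)+\tfrac12(v_t^2-r_t^2)$ is the paper's per-arm add-and-subtract identity evaluated at $i=I_t$ (the paper instead averages that identity over $i$ with weights $p_{t,i}$), and your final step uses obliviousness through the tower property exactly as the paper does.
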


\begin{proof}
Fix any $t\ge2$ and condition on $\mathcal F_{t-1}$.  Since $I_t\sim p_t$, we know that
$\E_t[r_t^2]=\sum_{i=1}^Kp_{t,i}(\ell_{t,i}-c_{t-1})^2$, while
$(\ell_{t,I_{t-1}}-c_{t-1})^2=v_t^2$.  Moreover, the definition of $p_t$ gives
$p_t=(x_t+\alpha(1-c_{t-1})e_{I_{t-1}})/(1+\alpha(1-c_{t-1}))$.
Consequently,
$p_t-x_t=\alpha(1-c_{t-1})(e_{I_{t-1}}-p_t)$, and hence
\begin{equation}
 B_t = \alpha(1-c_{t-1})\sum_{i=1}^Kp_{t,i}
 \bigl(\ell_{t,I_{t-1}}-\ell_{t,i}\bigr).
 \label{eq:conditional-bias}
\end{equation}

We next separate the residual-square difference
$4\eta(v_t^2-\E_t[r_t^2])$ from \eqref{eq:conditional-bias}.  This is the
quantity whose negative part will later be combined with the OMD stability
term $D_t$.  For each $i\in[K]$, add and subtract the corresponding
residual-square difference:
\begin{align}
 &\alpha(1-c_{t-1})(\ell_{t,I_{t-1}}-\ell_{t,i})\notag\\
 &=4\eta\left(v_t^2-(\ell_{t,i}-c_{t-1})^2\right)
 +\left[4\eta\left((\ell_{t,i}-c_{t-1})^2-v_t^2\right)
 +\alpha(1-c_{t-1})(\ell_{t,I_{t-1}}-\ell_{t,i})\right]\notag\\
 &=4\eta\left(v_t^2-(\ell_{t,i}-c_{t-1})^2\right)
 +4\eta(\ell_{t,I_{t-1}}-\ell_{t,i})
 (2-\ell_{t,I_{t-1}}-\ell_{t,i})\notag\\
 &=4\eta\left(v_t^2-(\ell_{t,i}-c_{t-1})^2\right)
 +\alpha\left(\varphi(\ell_{t,I_{t-1}})-\varphi(\ell_{t,i})\right).
 \label{eq:add-subtract-residual-difference}
\end{align}
Multiplying \eqref{eq:add-subtract-residual-difference} by $p_{t,i}$ and
summing over $i\in[K]$ now yields
\begin{equation}
 B_t=4\eta v_t^2-4\eta\E_t\left[r_t^2\right]
 +\alpha\left(\varphi(\ell_{t,I_{t-1}})
 -\E_t\left[\varphi(\ell_{t,I_t})\right]\right).
 \label{eq:one-step-exact-bias}
\end{equation}
It remains to identify the expectation of the final term.  As the loss
vectors $\ell_1,\ldots,\ell_T$ are oblivious, the tower property gives
\begin{align}
 &\E\left[\varphi(\ell_{t,I_{t-1}})
 -\E_t\left[\varphi(\ell_{t,I_t})\right]\right]=\E\left[\inner{\phi_t}{p_{t-1}}-\inner{\phi_t}{p_t}\right]
 =\E\left[\inner{\phi_t}{p_{t-1}-p_t}\right].
 \label{eq:expected-signed-transport}
\end{align}
Taking expectations in \eqref{eq:one-step-exact-bias}, combining
\eqref{eq:expected-signed-transport}, and summing over $t=2,\ldots,T$
proves \eqref{eq:exact-bias}.
\end{proof}

\paragraph{Step 2: control the stability of OMD updates $x_t$.}
We next control the two terms involving the OMD updates.  The first bound in
the following lemma compares the stability term $D_t$ with the negative
residual square in Lemma~\ref{lem:exact-bias}, and the second controls the
movement $x_{t+1}-x_t$ in a weighted Euclidean norm.  Both are standard in
the analysis of OMD; see, for example,
\citet[Definition~11 and Lemmas~12--14]{wei2018adaptive}.  We include their
specialization to our update because the orientation of $D_t$ and the strict
constant in \eqref{eq:known-D-upper} are useful later.  The third bound is not inherited from standard OMD analysis and controls the inner product between $\phi_t$ and the OMD update
$x_t-x_{t+1}$. This turns out to be important to bound the part of the final term in
Lemma~\ref{lem:exact-bias} arising from the movement of $x_t$.

\begin{lemma}[Log-barrier stability]
\label{lem:logbarrier-stability}
Suppose $\eta\le1/162$.  For every $t$, Algorithm~\ref{alg:known} satisfies
\begin{align}
 D_t&\le0.6\eta r_t^2,
 \label{eq:known-D-upper}\\
 \left|\inner{z}{x_{t+1}-x_t}\right|
 &\le 1.5\sqrt{\eta D_t}
 \sqrt{\inner{x_t}{z^{\odot2}}}
 \quad\text{for every }z\in\mathbb R^K,
 \label{eq:weighted-response-main}\\
 \E_t\left[\inner{\phi_t}{x_t-x_{t+1}}\right]
 &\le 8\E_t\left[D_t\right].
 \label{eq:direct-response}
\end{align}
In addition, $x_{t+1,i}/x_{t,i}\in[0.99,1.01]$ for every $i\in[K]$, and
$D_\Psi(x_{t+1},x_t)\le2D_t$.
\end{lemma}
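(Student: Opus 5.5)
We analyze the OMD step through its KKT conditions. With the log-barrier $\Psi(x)=\eta^{-1}\sum_i\log(1/x_i)$, the update takes the explicit form
\[
\frac{1}{x_{t+1,i}}=\frac{1}{x_{t,i}}+\eta(\widehat\ell_{t,i}-\mu)
\]
for a scalar normalizer $\mu$. Since every coordinate of $\widehat\ell_t$ equals $c_{t-1}$ except coordinate $I_t$, the shift $c_{t-1}$ is absorbed into $\mu$. The update is therefore effectively driven by the single-coordinate loss $w_t\triangleq (r_t/p_{t,I_t})e_{I_t}$.

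The recentering gives the key inequality $x_{t,I_t}\le (1+\alpha)p_{t,I_t}$, since $p_t\ge x_t/(1+\lambda_t)$ and $\lambda_t\le\alpha=8\eta$. Hence
\[
|\eta\, x_{t,I_t} w_{t,I_t}|\le \eta(1+8\eta)|r_t|\le 2\eta\le 2/162 .
\]
This is the multiplicative-stability regime of the log-barrier. By the standard argument of \citet[Lemmas~12--14]{wei2018adaptive}, namely monotonicity of $\mu\mapsto\sum_i x_{t+1,i}(\mu)$ and an intermediate-value argument, we obtain $x_{t+1,i}/x_{t,i}\in[0.99,1.01]$ for all $i$. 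On this range the local Hessian of $\Psi$ at any point between $x_t$ and $x_{t+1}$ is within a factor $1\pm0.03$ of $\eta^{-1}\diag(x_t^{-2})$. We then have $D_t=D_\Psi(x_t,x_{t+1})\le \tfrac{1.03}{2}\eta^{-1}\sum_i (x_{t,i}-x_{t+1,i})^2/x_{t,i}^2$, and the symmetric comparison $D_\Psi(x_{t+1},x_t)\le 2D_t$ follows from the same Hessian sandwich.

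For \eqref{eq:known-D-upper}, write $D_t\le\langle x_t-x_{t+1},\widehat\ell_t-\mu\mathbf 1\rangle - D_\Psi(x_{t+1},x_t)$. Equivalently, use the dual bound $D_t\le \frac{\eta}{2}(1+O(\eta))\sum_i x_{t,i}^2 (w_{t,i}-\mu')^2$, optimized over the shift. Taking the shift zero on coordinates other than $I_t$ gives
\[
D_t\le \tfrac{\eta}{2}(1.03)\,x_{t,I_t}^2 r_t^2/p_{t,I_t}^2\le 0.515(1+8\eta)^2\eta r_t^2\le 0.6\eta r_t^2 .
\]
Bound \eqref{eq:weighted-response-main} is Cauchy--Schwarz in the local norm:
\[
|\langle z,x_{t+1}-x_t\rangle|\le \sqrt{\sum_i x_{t,i}^2 z_i^2/x_{t,i}}\,\sqrt{\sum_i (x_{t+1,i}-x_{t,i})^2/x_{t,i}} .
\]
Then use $\sum_i x_{t,i}z_i^2\cdot$ and $\sum_i(\Delta_i)^2/x_{t,i}^2\le 2.06\,\eta D_t$, absorbing $x_{t,i}\le1$; the constant $1.5\ge\sqrt{2.06}$ suffices.

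The main obstacle is \eqref{eq:direct-response}, because $\phi_t$ has $O(1)$ entries and a naive application of \eqref{eq:weighted-response-main} gives only $\sqrt{\eta D_t}$, not $D_t$. The plan is to exploit the structure $x_{t+1}-x_t\approx -\eta\,x_t^{\odot 2}\odot(w_t-\mu\mathbf 1)$ to first order, with $\mu\approx \langle x_t^{\odot2},w_t\rangle/\|x_t\|_2^2$. The first-order term is linear in $r_t/p_{t,I_t}$. Its conditional expectation is $\sum_i p_{t,i}\cdot(r\text{-weighted terms})/p_{t,i}$, which equals $\eta\sum_i x_{t,i}^2(\ell_{t,i}-c_{t-1})(\phi_{t,i}-\bar\phi)$ with $\bar\phi$ the $x^{\odot2}$-weighted mean of $\phi_t$.

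We then aim to show that this first-order term is at most a constant times $\eta\sum_i x_{t,i}^2(\ell_{t,i}-c_{t-1})^2/p_{t,i}$, which is $O(\E_t[D_t])$ by the reverse (lower) Hessian bound. Comparing $\phi_{t,i}-\bar\phi$ with $\ell_{t,i}-c_{t-1}$ works because $\varphi$ is $1$-Lipschitz on $[0,1]$ and $p_{t,i}\ge x_{t,i}/(1+\alpha)$. The remaining $\bar\phi$ centering requires care: it is handled by noting that $\langle\phi_t,\cdot\rangle$ applied to a zero-sum vector is shift invariant, so $\phi_t$ may be replaced by $\phi_t-\varphi(c_{t-1})\mathbf 1$, whose entries are bounded by $|\ell_{t,i}-c_{t-1}|$. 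The first-order part is then at most $\eta\sum_i x_{t,i}^2(\ell_{t,i}-c_{t-1})^2/x_{t,i}\cdot O(1)$, matching $\E_t[D_t]$ up to a constant.

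The second-order remainder is bounded by $O(\eta^2 x_{t,I_t}^2 r_t^2/p_{t,I_t}^2)\cdot|\phi-\varphi(c_{t-1})|=O(\eta D_t)$. To turn this into $\le 8\E_t[D_t]$, we also need a matching lower bound $D_t\ge 0.45\eta x_{t,I_t}^2r_t^2/p_{t,I_t}^2\cdot(1-x_{t,I_t})^{\text{-ish}}$ that accounts for the projection through $\mu$. Tracking this projection loss is the delicate point; it is resolved by keeping $\phi_t$ centered at $\varphi(c_{t-1})$ so that only coordinate $I_t$ carries large weight.
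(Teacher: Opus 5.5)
Your arguments for the coordinatewise stability, for \eqref{eq:known-D-upper}, for \eqref{eq:weighted-response-main}, and for $D_\Psi(x_{t+1},x_t)\le 2D_t$ are fine and essentially follow the paper. A Hessian sandwich on $[0.99,1.01]$ replaces the paper's two scalar inequalities, and the conjugate bound $z^2/(2(1-z))$ is the same.

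The gap is in \eqref{eq:direct-response}, at the step asserting that $\eta\sum_i x_{t,i}^2(\ell_{t,i}-c_{t-1})^2/p_{t,i}$ is $O(\E_t[D_t])$. This is false. Because of the normalization through $\mu$, the divergence on the branch $I_t=i$ is, to leading order, $\frac{\eta}{2}\frac{(\ell_{t,i}-c_{t-1})^2}{p_{t,i}^2}\bigl(x_{t,i}^2-x_{t,i}^4/\|x_t\|_2^2\bigr)$. The projection factor $1-x_{t,i}^2/\|x_t\|_2^2=\sum_{j\ne i}x_{t,j}^2/\|x_t\|_2^2$ can be arbitrarily small; near a vertex it scales like $(1-x_{t,i})^2$, not $1-x_{t,i}$.

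Concretely, take $K=2$, $x_t=(1-\epsilon,\epsilon)$ and $\ell_{t,2}=c_{t-1}\ne\ell_{t,1}$. The second branch does not move, so $\E_t[D_t]=\Theta\bigl(\eta\epsilon^2(\ell_{t,1}-c_{t-1})^2\bigr)$. Your intermediate quantity, however, is $\Theta\bigl(\eta(\ell_{t,1}-c_{t-1})^2\bigr)$. Your additive remainder bound also carries no factor $\epsilon^2$, so it is not $O(\eta D_t)$ either.

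Centering at $\varphi(c_{t-1})$ does not repair this. As you observe, pairing with the zero-sum vector $x_t-x_{t+1}$ is shift invariant, so the first-order term is unchanged. Moreover, the centered entries $\varphi(\ell_{t,j})-\varphi(c_{t-1})$ for $j\ne I_t$ can be as large as $|\ell_{t,j}-c_{t-1}|$, which need not be small. So it is not true that only coordinate $I_t$ carries large weight.

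What is missing is an upper bound that carries the same projection factor as $\E_t[D_t]$. For the first-order term, this comes from the symmetrization $\sum_i x_{t,i}^2a_i(\phi_{t,i}-\bar\phi)=\|x_t\|_2^{-2}\sum_{i<j}x_{t,i}^2x_{t,j}^2(a_i-a_j)(\phi_{t,i}-\phi_{t,j})$, with $a_i=\ell_{t,i}-c_{t-1}$. Here $c_{t-1}$ cancels in $a_i-a_j=\ell_{t,i}-\ell_{t,j}$. Lipschitzness of $\varphi$ together with $(a_i-a_j)^2\le2a_i^2+2a_j^2$ then gives the bound $2\sum_ia_i^2\bigl(x_{t,i}^2-x_{t,i}^4/\|x_t\|_2^2\bigr)$.

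The paper avoids a first-order/remainder split altogether. Let $x^{(i)}(s)$ be the update driven by $s\,a_ie_i/p_{t,i}$. The paper integrates the exact ODE for $s\mapsto x^{(i)}(s)$, whose velocity carries the weights $(x^{(i)}_i(s))^2(x^{(i)}_j(s))^2/\|x^{(i)}(s)\|_2^2$. After averaging over $I_t\sim p_t$, the factors $p_{t,i}$ cancel. Stability along the entire path then compares these weights with $x_{t,i}^2x_{t,j}^2/\|x_t\|_2^2$ multiplicatively, within $[0.94,1.07]$, so the error terms keep the projection factor.

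The matching lower bound $\E_t[D_t+D_\Psi(x_{t+1},x_t)]\ge0.94\eta\sum_ia_i^2\bigl(x_{t,i}^2-x_{t,i}^4/\|x_t\|_2^2\bigr)$ comes from the symmetric-Bregman identity $D_\Psi(x_t,x^{(i)}(1))+D_\Psi(x^{(i)}(1),x_t)=\frac{a_i}{p_{t,i}}\bigl(x_{t,i}-x^{(i)}_i(1)\bigr)$, evaluated with the same ODE, together with $p_{t,i}\le1$. Combining with $D_\Psi(x_{t+1},x_t)\le2D_t$ then yields the constant $8$. Without these two matched estimates, your sketch does not establish \eqref{eq:direct-response}.
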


\begin{proof}
Fix the history before round $t$.  For every fixed $i\in[K]$ and
$s\in[0,1]$, let $x^{(i)}(s)$ be the OMD update obtained when the estimator
is $c_{t-1}\mathbf1+s(\ell_{t,i}-c_{t-1})e_i/p_{t,i}$.  Thus
$x^{(i)}(0)=x_t$, $x_{t+1}=x^{(I_t)}(1)$, and
$r_t=\sum_{i=1}^K\ind\{I_t=i\}(\ell_{t,i}-c_{t-1})$.  We first prove that
$x_j^{(i)}(1)/x_{t,j}\in[0.99,1.01]$ for every fixed $i,j\in[K]$.
The common vector
$c_{t-1}\mathbf1$ has zero inner product with every simplex-tangent
direction. Thus $x^{(i)}(1)$ can be equivalently written as
$\argmin_{x\in\Delta_K}\left(\inner{x}{(\ell_{t,i}-c_{t-1})e_i/p_{t,i}}+D_{\Psi}(x,x_t)\right)$.
The dual local norm of $(\ell_{t,i}-c_{t-1})e_i/p_{t,i}$ at $x_t$ is
bounded as
\begin{equation}
 \eta x_{t,i}^2\frac{(\ell_{t,i}-c_{t-1})^2}{p_{t,i}^2}
 \le\eta(1+8\eta)^2(\ell_{t,i}-c_{t-1})^2
 \le\eta(1+8\eta)^2<\frac19,
 \label{eq:known-local-dual-norm}
\end{equation}
where we used $p_{t,i}\ge x_{t,i}/(1+8\eta)$ and
$|\ell_{t,i}-c_{t-1}|\le1$.
We next modify the Dikin calculation in Lemmas~12--14 of
\citet{wei2018adaptive} only by retaining the exact curvature of the
log-barrier. Define the
local displacement associated with the fixed index $i$ by
$\Delta_{t,i}^{\mathrm{loc}}=\left(\sum_{j=1}^K(x_j^{(i)}(1)-x_{t,j})^2/x_{t,j}^2\right)^{1/2}$.
Instead of replacing the Hessian along the update segment by the relaxed
factor $1/3$ in \citet[Eq.~(15)]{wei2018adaptive}, direct integration gives
\begin{align}
 &\inner{\nabla(\eta\Psi)(x^{(i)}(1))-\nabla(\eta\Psi)(x_t)}{x^{(i)}(1)-x_t}\notag\\
 &=\int_0^1\inner{\nabla^2(\eta\Psi)(x_t+s(x^{(i)}(1)-x_t))(x^{(i)}(1)-x_t)}{x^{(i)}(1)-x_t}\,ds\notag\\
 &=\int_0^1\sum_{j=1}^K
 \frac{(x_j^{(i)}(1)-x_{t,j})^2}
 {(x_{t,j}+s(x_j^{(i)}(1)-x_{t,j}))^2}\,ds
 \ge\frac{(\Delta_{t,i}^{\mathrm{loc}})^2}{1+\Delta_{t,i}^{\mathrm{loc}}}.
 \label{eq:known-exact-curvature}
\end{align}
The first equality is the fundamental theorem of calculus applied to
$\nabla(\eta\Psi)$ along the segment from $x_t$ to $x^{(i)}(1)$, and the
second uses
$\nabla^2(\eta\Psi)(x)=\diag(1/x_j^2)$.  To prove the last inequality, the
definition of $\Delta_{t,i}^{\mathrm{loc}}$ gives
$|x_j^{(i)}(1)-x_{t,j}|/x_{t,j}\le\Delta_{t,i}^{\mathrm{loc}}$ for every $j$.
Thus the denominator in the integrand is at most
$x_{t,j}^2(1+s\Delta_{t,i}^{\mathrm{loc}})^2$, and hence the integral is at least
$ \sum_{j=1}^K\frac{(x_j^{(i)}(1)-x_{t,j})^2}{x_{t,j}^2}
 \int_0^1\frac{1}{(1+s\Delta_{t,i}^{\mathrm{loc}})^2}\,ds
 =\frac{(\Delta_{t,i}^{\mathrm{loc}})^2}{1+\Delta_{t,i}^{\mathrm{loc}}}.
$

On the other hand, the KKT equation defining $x^{(i)}(1)$ is
$\nabla\Psi(x^{(i)}(1))-\nabla\Psi(x_t)+((\ell_{t,i}-c_{t-1})/p_{t,i})e_i+\lambda_{t,i}\mathbf1=0$
for a scalar $\lambda_{t,i}$.  Taking its inner product with
$x^{(i)}(1)-x_t$ eliminates the multiplier because
$\inner{\mathbf1}{x^{(i)}(1)-x_t}=0$.  Therefore,
\begin{align}
 &\inner{\nabla(\eta\Psi)(x^{(i)}(1))-\nabla(\eta\Psi)(x_t)}{x^{(i)}(1)-x_t}\notag\\
 &=-\eta\frac{\ell_{t,i}-c_{t-1}}{p_{t,i}}(x_i^{(i)}(1)-x_{t,i})
 \le\eta\frac{|\ell_{t,i}-c_{t-1}|}{p_{t,i}}|x_i^{(i)}(1)-x_{t,i}|
 \le\eta x_{t,i}\frac{|\ell_{t,i}-c_{t-1}|}{p_{t,i}}\Delta_{t,i}^{\mathrm{loc}}.
 \label{eq:known-kkt-curvature}
\end{align}
Combining \eqref{eq:known-exact-curvature} and
\eqref{eq:known-kkt-curvature} gives
\begin{equation}
 \max_{j\in[K]}
 \left|\frac{x_j^{(i)}(1)-x_{t,j}}{x_{t,j}}\right|
 \le\Delta_{t,i}^{\mathrm{loc}}\le
 \frac{\eta x_{t,i}|\ell_{t,i}-c_{t-1}|/p_{t,i}}
 {1-\eta x_{t,i}|\ell_{t,i}-c_{t-1}|/p_{t,i}}
 \le
 \frac{\eta(1+8\eta)}{1-\eta(1+8\eta)}
<0.01.
 \label{eq:known-coordinatewise-stability}
\end{equation}
The first inequality follows from the definition of
$\Delta_{t,i}^{\mathrm{loc}}$.  For the second, if
$\Delta_{t,i}^{\mathrm{loc}}=0$ there is nothing to prove; otherwise, divide
\eqref{eq:known-exact-curvature}--\eqref{eq:known-kkt-curvature} by
$\Delta_{t,i}^{\mathrm{loc}}$ and solve the inequality.  The third
inequality uses $p_{t,i}\ge x_{t,i}/(1+8\eta)$ and
$|\ell_{t,i}-c_{t-1}|\le1$.  The fourth
uses $\eta\le1/162$.  Therefore,
$x_j^{(i)}(1)/x_{t,j}\in[0.99,1.01]$ for every fixed $i,j\in[K]$.
The representation of the realized update above therefore implies
$x_{t+1,j}/x_{t,j}\in[0.99,1.01]$ for every $j\in[K]$.

Next, we prove \eqref{eq:known-D-upper}. We retain the orientation of the Bregman divergence to obtain the strict
constant needed below.  The self-concordant conjugate inequality used in
the proof of \citet[Lemma~14]{wei2018adaptive}, applied with local decrement
$\eta x_{t,i}|\ell_{t,i}-c_{t-1}|/p_{t,i}$, bounds
$\eta D_\Psi(x_t,x^{(i)}(1))$ by $-z-\log(1-z)$ at
$z=\eta x_{t,i}|\ell_{t,i}-c_{t-1}|/p_{t,i}$.  Since
$-z-\log(1-z)\le z^2/(2(1-z))$ for $0\le z<1$,
and $p_{t,i}\ge x_{t,i}/(1+8\eta)$, we obtain that
\begin{equation}
 D_\Psi(x_t,x^{(i)}(1))
 \le \frac{1}{2\eta(1-\eta(1+8\eta))}
 \left(\eta x_{t,i}\frac{|\ell_{t,i}-c_{t-1}|}{p_{t,i}}\right)^2
 \le\frac{(1+8\eta)^2}{2(1-\eta(1+8\eta))}\eta(\ell_{t,i}-c_{t-1})^2.
 \label{eq:known-D-direct}
\end{equation}
For $\eta\le1/162$, the final coefficient is no more than $0.6$, proving
\eqref{eq:known-D-upper} after substituting the realized index $I_t$. 

We next derive the two consequences of coordinatewise stability used below.
By \eqref{eq:known-coordinatewise-stability},
$x_{t+1,i}/x_{t,i}\in[0.99,1.01]$ for every $i\in[K]$.  For every
$z\in[0.99,1.01]$, direct differentiation gives
\begin{align*}
 z-1-\ln z
 &\le2\left(\frac{1}{z}-1+\ln z\right),\\
 \frac{1}{z}-1+\ln z&\ge\frac{(z-1)^2}{2.1}.
\end{align*}
Applying the first inequality coordinatewise gives
\begin{align*}
 D_\Psi(x_{t+1},x_t)
 &=\frac1\eta\sum_{i=1}^K
 \left(\frac{x_{t+1,i}}{x_{t,i}}-1
 -\ln\frac{x_{t+1,i}}{x_{t,i}}\right)\\
 &\le\frac2\eta\sum_{i=1}^K
 \left(\frac{x_{t,i}}{x_{t+1,i}}-1
 +\ln\frac{x_{t+1,i}}{x_{t,i}}\right)\\
 &=2D_\Psi(x_t,x_{t+1})
 =2D_t,
\end{align*}
proving the last argument in the statement. Similarly, the second inequality implies
\begin{align*}
 D_t=\frac1\eta\sum_{i=1}^K
 \left(\frac{x_{t,i}}{x_{t+1,i}}-1
 +\ln\frac{x_{t+1,i}}{x_{t,i}}\right)\ge\frac{1}{2.1\eta}\sum_{i=1}^K
 \left(\frac{x_{t+1,i}-x_{t,i}}{x_{t,i}}\right)^2.
\end{align*}
Rearranging the terms gives the following inequality
\begin{equation}
 \sum_{i=1}^K
 \frac{(x_{t+1,i}-x_{t,i})^2}{x_{t,i}^2}
 \le2.1\eta D_t.
 \label{eq:known-relative-motion}
\end{equation}

Therefore, for every $z\in\mathbb R^K$, applying Cauchy--Schwarz in the
log-barrier local norm gives
\begin{align*}
 \left|\inner{z}{x_{t+1}-x_t}\right|
 \le
 \sqrt{\sum_{j=1}^K x_{t,j}^2z_j^2}
 \sqrt{\sum_{j=1}^K
 \frac{(x_{t+1,j}-x_{t,j})^2}{x_{t,j}^2}}
 \le
 1.5\sqrt{\eta D_t}
 \sqrt{\inner{x_t^{\odot2}}{z^{\odot2}}}
 \le
 1.5\sqrt{\eta D_t}
 \sqrt{\inner{x_t}{z^{\odot2}}},
\end{align*}
where the last inequality uses $x_{t,j}^2\le x_{t,j}$. Therefore, we prove \eqref{eq:weighted-response-main}.

It remains to prove \eqref{eq:direct-response}, which does not follow
from the generic local-norm argument.  We prove it by a direct calculation
from the KKT equation.  Recall that $x^{(i)}(s)$ is defined for each fixed
$i\in[K]$ and $s\in[0,1]$, and write $x_j^{(i)}(s)$ for its $j$th
coordinate.

The KKT equation of $x^{(i)}(s)$ is
$\nabla\Psi(x^{(i)}(s))+s(\ell_{t,i}-c_{t-1})e_i/p_{t,i}
+\lambda^{(i)}(s)\mathbf1
=\nabla\Psi(x_t)$ for a scalar $\lambda^{(i)}(s)$.  Since
$\nabla^2\Psi(x)=\eta^{-1}\diag(1/x_1^2,\ldots,1/x_K^2)$,
differentiating the $j$th coordinate gives
$\frac{d}{ds}x_j^{(i)}(s)=-\eta(x_j^{(i)}(s))^2
((\ell_{t,i}-c_{t-1})\ind(j=i)/p_{t,i}+(\lambda^{(i)})'(s))$.
Summing this identity over $j$ and using
$\sum_{j=1}^K\frac{d}{ds}x_j^{(i)}(s)=0$ determines
$(\lambda^{(i)})'(s)$.  Substitution gives
\begin{align}
 -\frac{d}{ds}x_j^{(i)}(s)
 =\eta\frac{\ell_{t,i}-c_{t-1}}{p_{t,i}}
 \left((x_i^{(i)}(s))^2\ind(j=i)
 -\frac{(x_i^{(i)}(s))^2(x_j^{(i)}(s))^2}
 {\sum_{k=1}^K(x_k^{(i)}(s))^2}\right).
 \label{eq:known-coordinate-response}
\end{align}
Multiplying \eqref{eq:known-coordinate-response} by
$\varphi(\ell_{t,j})$, summing over $j\in[K]$, and integrating over
$s\in[0,1]$ gives
\begin{align}
 \inner{\phi_t}{x_t-x^{(i)}(1)}
 &=-\int_0^1
 \inner{\phi_t}{\frac{d}{ds}x^{(i)}(s)}\,ds\notag\\
 &=\eta\frac{\ell_{t,i}-c_{t-1}}{p_{t,i}}
 \int_0^1\sum_{j=1}^K\varphi(\ell_{t,j})
 \left(
 (x_i^{(i)}(s))^2\ind(j=i)
 -\frac{(x_i^{(i)}(s))^2(x_j^{(i)}(s))^2}
 {\sum_{k=1}^K(x_k^{(i)}(s))^2}
 \right)ds\notag\\
 &=\eta\frac{\ell_{t,i}-c_{t-1}}{p_{t,i}}
 \int_0^1
 \left(
 \varphi(\ell_{t,i})(x_i^{(i)}(s))^2
 -\frac{(x_i^{(i)}(s))^2}
 {\sum_{k=1}^K(x_k^{(i)}(s))^2}
 \sum_{j=1}^K(x_j^{(i)}(s))^2\varphi(\ell_{t,j})
 \right)ds\notag\\
 &=\eta\frac{\ell_{t,i}-c_{t-1}}{p_{t,i}}
 \int_0^1
 \frac{(x_i^{(i)}(s))^2}
 {\sum_{k=1}^K(x_k^{(i)}(s))^2}
 \sum_{j=1}^K(x_j^{(i)}(s))^2
 \left(
 \varphi(\ell_{t,i})-\varphi(\ell_{t,j})
 \right)ds\notag\\
 &=\eta\frac{\ell_{t,i}-c_{t-1}}{p_{t,i}}
 \int_0^1\sum_{j\ne i}
 \frac{(x_i^{(i)}(s))^2(x_j^{(i)}(s))^2}
 {\sum_{k=1}^K(x_k^{(i)}(s))^2}
 \left(
 \varphi(\ell_{t,i})-\varphi(\ell_{t,j})
 \right)ds.
 \label{eq:known-coordinate-response-sum}
\end{align}
We first establish coordinatewise stability along the entire scaled update
path.  Fix $i\in[K]$ and $s\in[0,1]$.  Since $x^{(i)}(s)$ is the OMD
update driven by $s(\ell_{t,i}-c_{t-1})e_i/p_{t,i}$, repeating the
calculation leading to \eqref{eq:known-coordinatewise-stability} gives
\begin{align}
 \max_{j\in[K]}
 \left|
 \frac{x_j^{(i)}(s)-x_{t,j}}{x_{t,j}}
 \right|
 &\le
 \frac{\eta x_{t,i}s|\ell_{t,i}-c_{t-1}|/p_{t,i}}
 {1-\eta x_{t,i}s|\ell_{t,i}-c_{t-1}|/p_{t,i}}\le
 \frac{\eta(1+8\eta)}
 {1-\eta(1+8\eta)}
 <0.01.
 \label{eq:known-scaled-path-stability}
\end{align}
Therefore,
$0.99x_{t,j}\le x_j^{(i)}(s)\le1.01x_{t,j}$ for every $j\in[K]$ and
$s\in[0,1]$.  Consequently,
\begin{align}
 0.94\frac{x_{t,i}^2x_{t,j}^2}{\sum_{k=1}^Kx_{t,k}^2}\leq \frac{0.99^4}{1.01^2}
 \frac{x_{t,i}^2x_{t,j}^2}{\sum_{k=1}^Kx_{t,k}^2}
 &\le
 \frac{(x_i^{(i)}(s))^2(x_j^{(i)}(s))^2}
 {\sum_{k=1}^K(x_k^{(i)}(s))^2}
 \le
 \frac{1.01^4}{0.99^2}
 \frac{x_{t,i}^2x_{t,j}^2}{\sum_{k=1}^Kx_{t,k}^2}\leq 1.07\frac{x_{t,i}^2x_{t,j}^2}{\sum_{k=1}^Kx_{t,k}^2},
 \label{eq:known-branch-weight-comparison}
\end{align}
which implies that
\begin{align}
 \left|
 \frac{(x_i^{(i)}(s))^2(x_j^{(i)}(s))^2}
 {\sum_{k=1}^K(x_k^{(i)}(s))^2}
 -
 \frac{x_{t,i}^2x_{t,j}^2}{\sum_{k=1}^Kx_{t,k}^2}
 \right|
 \le
 0.07
 \frac{x_{t,i}^2x_{t,j}^2}{\sum_{k=1}^Kx_{t,k}^2}.
 \label{eq:known-branch-weight-error}
\end{align}

Since $x_{t+1}=x^{(I_t)}(1)$ and the conditional distribution of $I_t$ is
$p_t$, the definition of conditional expectation gives
$\E_t[\inner{\phi_t}{x_t-x_{t+1}}]
=\sum_{i=1}^Kp_{t,i}\inner{\phi_t}{x_t-x^{(i)}(1)}$.
Substituting \eqref{eq:known-coordinate-response-sum} into this finite sum
cancels each factor $p_{t,i}$ and gives
\begin{align}
 &\E_t\left[\inner{\phi_t}{x_t-x_{t+1}}\right]\notag\\
 &=
 \eta\sum_{1\le i<j\le K}
 \left(\varphi(\ell_{t,i})-\varphi(\ell_{t,j})\right)
 \int_0^1
 \left(
 (\ell_{t,i}-c_{t-1})
 \frac{(x_i^{(i)}(s))^2(x_j^{(i)}(s))^2}
 {\sum_{k=1}^K(x_k^{(i)}(s))^2}
 -
 (\ell_{t,j}-c_{t-1})
 \frac{(x_i^{(j)}(s))^2(x_j^{(j)}(s))^2}
 {\sum_{k=1}^K(x_k^{(j)}(s))^2}
 \right)ds. 
 \label{eq:known-paired-response}
\end{align}
Adding and subtracting
$x_{t,i}^2x_{t,j}^2/\sum_{k=1}^Kx_{t,k}^2$ inside the integral and
applying \eqref{eq:known-branch-weight-error} yield
\begin{align}
 \E_t\left[\inner{\phi_t}{x_t-x_{t+1}}\right]&\le
 \eta\sum_{1\le i<j\le K}
 \frac{x_{t,i}^2x_{t,j}^2}{\sum_{k=1}^Kx_{t,k}^2}
 \big(
 \left(\varphi(\ell_{t,i})-\varphi(\ell_{t,j})\right)
 \left(\ell_{t,i}-\ell_{t,j}\right)
 \notag\\
 &\qquad
 +0.07
 \left|\varphi(\ell_{t,i})-\varphi(\ell_{t,j})\right|
 \left(
 |\ell_{t,i}-c_{t-1}|+|\ell_{t,j}-c_{t-1}|
 \right)
 \big).
 \label{eq:known-paired-response-upper}
\end{align}
Since $\varphi(x)$ is 1-Lipschitz on $[0,1]$ and direct calculation shows that
\begin{align*}
 (\ell_{t,i}-\ell_{t,j})^2
 &\le
 2\left(
 (\ell_{t,i}-c_{t-1})^2+
 (\ell_{t,j}-c_{t-1})^2
 \right),\\
 |\ell_{t,i}-\ell_{t,j}|
 \left(
 |\ell_{t,i}-c_{t-1}|+|\ell_{t,j}-c_{t-1}|
 \right)
 &\le
 2\left(
 (\ell_{t,i}-c_{t-1})^2+
 (\ell_{t,j}-c_{t-1})^2
 \right),
\end{align*}
substituting these inequalities into
\eqref{eq:known-paired-response-upper} gives
\begin{align}
 \E_t\left[\inner{\phi_t}{x_t-x_{t+1}}\right]
 &\le
 2.14\eta\sum_{1\le i<j\le K}
 \frac{x_{t,i}^2x_{t,j}^2}{\sum_{k=1}^Kx_{t,k}^2}
 \left(
 (\ell_{t,i}-c_{t-1})^2+
 (\ell_{t,j}-c_{t-1})^2
 \right)\notag\\
 &=2.14\eta\sum_{i=1}^K
 \left(
 x_{t,i}^2-
 \frac{x_{t,i}^4}{\sum_{j=1}^Kx_{t,j}^2}
 \right)
 (\ell_{t,i}-c_{t-1})^2\notag\\
 &\le
 2.2\eta\sum_{i=1}^K
 \left(
 x_{t,i}^2-
 \frac{x_{t,i}^4}{\sum_{j=1}^Kx_{t,j}^2}
 \right)
 (\ell_{t,i}-c_{t-1})^2.
 \label{eq:known-response-upper}
\end{align}

It remains to compare the right-hand side of
\eqref{eq:known-response-upper} with $D_t$.  For each fixed $i\in[K]$, the
KKT condition defining $x^{(i)}(1)$ gives
\begin{equation*}
 \nabla\Psi(x_t)-\nabla\Psi(x^{(i)}(1))
 =
 c_{t-1}\mathbf 1+
 \frac{\ell_{t,i}-c_{t-1}}{p_{t,i}}e_i+
 \lambda_{t,i}\mathbf 1
\end{equation*}
for some scalar $\lambda_{t,i}$.  By the definition of the Bregman divergence, we know that
\begin{align}
 D_\Psi(x_t,x^{(i)}(1))+D_\Psi(x^{(i)}(1),x_t)
 =
 \inner{\nabla\Psi(x_t)-\nabla\Psi(x^{(i)}(1))}
 {x_t-x^{(i)}(1)}=
 \frac{\ell_{t,i}-c_{t-1}}{p_{t,i}}
 (x_{t,i}-x_i^{(i)}(1)).
 \label{eq:known-symmetric-divergence}
\end{align}

Next, setting $j=i$ in \eqref{eq:known-coordinate-response} and integrating
over $s\in[0,1]$ gives
\begin{align}
 x_{t,i}-x_i^{(i)}(1)
 &=
 \eta\frac{\ell_{t,i}-c_{t-1}}{p_{t,i}}
 \int_0^1
 \left(
 (x_i^{(i)}(s))^2-
 \frac{(x_i^{(i)}(s))^4}
 {\sum_{k=1}^K(x_k^{(i)}(s))^2}
 \right)ds.
 \label{eq:known-sampled-coordinate-movement}
\end{align}
Substituting \eqref{eq:known-sampled-coordinate-movement} into
\eqref{eq:known-symmetric-divergence} yields
\begin{align}
 D_\Psi(x_t,x^{(i)}(1))+D_\Psi(x^{(i)}(1),x_t)
 &=
 \eta\frac{(\ell_{t,i}-c_{t-1})^2}{p_{t,i}^2}
 \int_0^1
 \left(
 (x_i^{(i)}(s))^2-
 \frac{(x_i^{(i)}(s))^4}
 {\sum_{k=1}^K(x_k^{(i)}(s))^2}
 \right)ds \notag\\
 &=\eta\frac{(\ell_{t,i}-c_{t-1})^2}{p_{t,i}^2}
 \int_0^1
 \left(
 \sum_{j\ne i}
 \frac{(x_i^{(i)}(s))^2(x_j^{(i)}(s))^2}
 {\sum_{k=1}^K(x_k^{(i)}(s))^2}
 \right)ds \notag\\
 &\geq \eta\cdot\frac{0.94(\ell_{t,i}-c_{t-1})^2}{p_{t,i}^2}\left(
 x_{t,i}^2-
 \frac{x_{t,i}^4}{\sum_{k=1}^Kx_{t,k}^2}
 \right)
 \label{eq:known-branch-divergence}
\end{align}
where the last inequality uses \eqref{eq:known-branch-weight-comparison}.

Finally, as $D_t=D_\Psi(x_t,x^{(I_t)}(1))$, applying the definition
of conditional expectation to both divergences and then using
\eqref{eq:known-branch-divergence} gives
\begin{align}
 \E_t\left[
 D_t+D_\Psi(x_{t+1},x_t)
 \right]
 &=
 \eta\sum_{i=1}^K
 \frac{(\ell_{t,i}-c_{t-1})^2}{p_{t,i}}
 \int_0^1
 \left(
 (x_i^{(i)}(s))^2-
 \frac{(x_i^{(i)}(s))^4}
 {\sum_{k=1}^K(x_k^{(i)}(s))^2}
 \right)ds\notag\\
 &\ge
 0.94\eta\sum_{i=1}^K
 \frac{(\ell_{t,i}-c_{t-1})^2}{p_{t,i}}
 \left(
 x_{t,i}^2-
 \frac{x_{t,i}^4}{\sum_{k=1}^Kx_{t,k}^2}
 \right)\notag\\
 &\ge
 0.94\eta\sum_{i=1}^K
 \left(
 x_{t,i}^2-
 \frac{x_{t,i}^4}{\sum_{k=1}^Kx_{t,k}^2}
 \right)
 (\ell_{t,i}-c_{t-1})^2.
 \label{eq:known-response-lower}
\end{align}
The last inequality uses
$p_{t,i}\le1$ and its non-negativity: 
$
 x_{t,i}^2-
 \frac{x_{t,i}^4}{\sum_{k=1}^Kx_{t,k}^2}
 =
 \sum_{j\ne i}
 \frac{x_{t,i}^2x_{t,j}^2}{\sum_{k=1}^Kx_{t,k}^2}
 \ge0.
$

Combining \eqref{eq:known-response-upper} and
\eqref{eq:known-response-lower}, and then using
$D_\Psi(x_{t+1},x_t)\le2D_t$, gives
\begin{align*}
 \E_t\left[\inner{\phi_t}{x_t-x_{t+1}}\right]
 &\le
 \frac{2.2}{0.94}
 \E_t\left[D_t+D_\Psi(x_{t+1},x_t)\right]\le
 \frac{6.6}{0.94}\E_t\left[D_t\right]
 \le
 8\E_t\left[D_t\right].
\end{align*}
\end{proof}

We now apply the stability lemma to the part of the final term in
Lemma~\ref{lem:exact-bias} that contains the OMD iterates $x_t$.

\begin{lemma}
\label{lem:omd-iterate-contribution}
Under the conditions of Lemma~\ref{lem:logbarrier-stability}, Algorithm~\ref{alg:known} guarantees that
\begin{equation}
 \E\left[\sum_{t=2}^T
 \inner{\phi_t}{x_{t-1}-x_t}\right]
  \le 8\eta\left(1+R+\sqrt{R\secondorder}\right).
 \label{eq:omd-iterate-contribution}
\end{equation}
\end{lemma}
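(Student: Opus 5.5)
The plan is to reduce \eqref{eq:omd-iterate-contribution} to the two stability bounds of Lemma~\ref{lem:logbarrier-stability}. The obstacle is an index mismatch: the bound \eqref{eq:direct-response} controls $\inner{\phi_{t-1}}{x_{t-1}-x_t}$, where the loss vector matches the update round, whereas the lemma involves $\phi_t$. For every $t\ge2$ we therefore split
\begin{equation*}
 \inner{\phi_t}{x_{t-1}-x_t}
 =\inner{\phi_{t-1}}{x_{t-1}-x_t}
 +\inner{\phi_t-\phi_{t-1}}{x_{t-1}-x_t},
\end{equation*}
and treat the two pieces separately. The first is a pure stability term. The second is a cross term, which will produce the $\sqrt{R\secondorder}$ contribution.

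\emph{First piece.} Since the losses are oblivious, $\phi_{t-1}$ is deterministic. Applying \eqref{eq:direct-response} at round $t-1$ and the tower property gives $\E[\inner{\phi_{t-1}}{x_{t-1}-x_t}]\le8\E[D_{t-1}]$. Then \eqref{eq:known-D-upper} gives $D_{t-1}\le0.6\eta r_{t-1}^2$. Summing over $t=2,\dots,T$ yields at most $4.8\eta\,\E[r_1^2+\sum_{s=2}^{T-1}r_s^2]\le4.8\eta(1+R)$, where we use $r_1=c_1-c_0=c_1\in[0,1]$.

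\emph{Second piece.} Apply \eqref{eq:weighted-response-main} at round $t-1$ with $z=\phi_t-\phi_{t-1}$. Because $\varphi$ is $1$-Lipschitz on $[0,1]$, we have $|z_j|\le\|\ell_t-\ell_{t-1}\|_\infty\triangleq\delta_t$. Because $x_{t-1}\in\Delta_K$, we have $\inner{x_{t-1}}{z^{\odot2}}\le\delta_t^2$. Combined with $D_{t-1}\le0.6\eta r_{t-1}^2$, this bounds the cross term by $1.5\sqrt{0.6}\,\eta|r_{t-1}|\delta_t\le1.17\eta|r_{t-1}|\delta_t$.

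For $t=2$ we simply use $|r_1|\delta_2\le1$, which contributes at most $1.17\eta$. For $t\ge3$, Cauchy--Schwarz over time and over the randomness gives
\begin{equation*}
 \E\Big[\sum_{t=3}^T|r_{t-1}|\delta_t\Big]\le\sqrt{\E\Big[\sum_{t=3}^Tr_{t-1}^2\Big]}\sqrt{\sum_{t=3}^T\delta_t^2}\le\sqrt{R\secondorder}.
\end{equation*}
Here the $\delta_t$ are deterministic and the $r_{t-1}$ appearing are indexed from $2$ to $T-1$, so the first factor is at most $\sqrt R$. Separating the $t=2$ term is what avoids a stray $\sqrt{\secondorder}$ term, which is not present in the target bound.

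\emph{Collecting constants.} Adding the two pieces gives $4.8\eta(1+R)+1.17\eta+1.17\eta\sqrt{R\secondorder}\le8\eta(1+R+\sqrt{R\secondorder})$. The step I expect to need the most care is the index shift together with the conditioning. We must check that \eqref{eq:direct-response} is applied with the correct deterministic vector $\phi_{t-1}$, and that \eqref{eq:weighted-response-main} holds pathwise, so that the $\phi_t$ term can be handled with the unconditional Cauchy--Schwarz without needing $\E_t$.
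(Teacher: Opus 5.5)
Your proposal is correct and follows the paper's proof essentially step for step. You use the same split of $\phi_t$ into $\phi_{t-1}$ plus the increment, then \eqref{eq:direct-response} with \eqref{eq:known-D-upper} for the first piece, and \eqref{eq:weighted-response-main} with the $t=2$ term peeled off and Cauchy--Schwarz over time and randomness for the second; the only cosmetic difference is that you substitute $D_{t-1}\le0.6\eta r_{t-1}^2$ pathwise before applying Cauchy--Schwarz, whereas the paper applies Cauchy--Schwarz to $\sqrt{\eta D_{t-1}}$ first and bounds $\E[\sum_t D_{t-1}]$ afterward.
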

\begin{proof}
For every $t\ge2$, write
$\phi_t=\phi_{t-1}+(\phi_t-\phi_{t-1})$.  The left-hand side of
\eqref{eq:omd-iterate-contribution} is therefore equal to $S_1+S_2$, where
\begin{align}
 S_1&=\E\left[\sum_{t=2}^T
 \inner{\phi_{t-1}}{x_{t-1}-x_t}\right],
 \label{eq:central-S1}\\
 S_2&=\E\left[\sum_{t=2}^T
 \inner{\phi_t-\phi_{t-1}}{x_{t-1}-x_t}\right].
 \label{eq:central-S2}
\end{align}

The term $S_1$ pairs $\phi_{t-1}$ with the OMD update performed on round
$t-1$.  Applying \eqref{eq:direct-response} on round $t-1$ and summing over
$t=2,\ldots,T$ gives
\begin{align*}
 S_1
 &\le 8\E\left[\sum_{t=2}^TD_{t-1}\right]
 =8\E\left[\sum_{s=1}^{T-1}D_s\right]\le4.8\eta\E\left[\sum_{s=1}^{T-1}r_s^2\right]
 \le4.8\eta(1+R),
\end{align*}
where the second inequality is due to \eqref{eq:known-D-upper} and the last inequality follows from $r_1^2\le1$ and
$R=\E[\sum_{s=2}^Tr_s^2]$.

We next control $S_2$.  We treat its first summand separately and write
\begin{align*}
 S_2
 &=
 \E\left[\inner{\phi_2-\phi_1}{x_1-x_2}\right]
 +\E\left[\sum_{t=3}^T
 \inner{\phi_t-\phi_{t-1}}{x_{t-1}-x_t}\right].
\end{align*}
Applying \eqref{eq:weighted-response-main} on round $1$ with
$z=\phi_2-\phi_1$ gives
\begin{align*}
 \left|
 \E\left[\inner{\phi_2-\phi_1}{x_1-x_2}\right]
 \right|
 &\le
 1.5\E\left[
 \sqrt{\eta D_1}
 \sqrt{\inner{x_1}{(\phi_2-\phi_1)^{\odot2}}}
 \right]\le
 1.5\sqrt{0.6}\eta
 \le1.2\eta.
\end{align*}
Here \eqref{eq:known-D-upper} and $r_1^2\le1$ imply
$D_1\le0.6\eta$, while the one-Lipschitz property of $\varphi$ and
$\ell_1,\ell_2\in[0,1]^K$ imply
$\inner{x_1}{(\phi_2-\phi_1)^{\odot2}}\le1$.

For the remaining summands, applying
\eqref{eq:weighted-response-main} on round $t-1$ and then
Cauchy--Schwarz over time and expectation gives
\begin{align}
 &\left|
 \E\left[\sum_{t=3}^T
 \inner{\phi_t-\phi_{t-1}}{x_{t-1}-x_t}\right]
 \right|\le
 1.5\sqrt{
 \eta\E\left[\sum_{t=3}^TD_{t-1}\right]}
 \sqrt{
 \E\left[\sum_{t=3}^T
 \inner{x_{t-1}}{(\phi_t-\phi_{t-1})^{\odot2}}\right]}.
 \label{eq:central-cauchy}
\end{align}
By \eqref{eq:known-D-upper} and the definition of $R$,
\begin{align*}
 \E\left[\sum_{t=3}^TD_{t-1}\right]
 &=
 \E\left[\sum_{s=2}^{T-1}D_s\right]
 \le0.6\eta\E\left[\sum_{s=2}^{T-1}r_s^2\right]
 \le0.6\eta R.
\end{align*}
Moreover, since $\varphi'(z)=1-z\in[0,1]$ for $z\in[0,1]$,
$\varphi$ is 1-Lipschitz.  Hence
\begin{align*}
 \inner{x_{t-1}}{(\phi_t-\phi_{t-1})^{\odot2}}
=
 \sum_{i=1}^Kx_{t-1,i}
 (\varphi(\ell_{t,i})-\varphi(\ell_{t-1,i}))^2\le
 \|\ell_t-\ell_{t-1}\|_\infty^2,
\end{align*}
and therefore
\begin{align*}
 \E\left[\sum_{t=3}^T
 \inner{x_{t-1}}{(\phi_t-\phi_{t-1})^{\odot2}}\right]
 &\le\secondorder.
\end{align*}
Substituting these two bounds into \eqref{eq:central-cauchy} gives
\begin{align*}
 |S_2|
 &\le
 1.2\eta+1.5\sqrt{0.6}\eta\sqrt{R\secondorder}\le
 1.2\eta\left(1+\sqrt{R\secondorder}\right).
\end{align*}
Combining the bounds on $S_1$ and $S_2$ yields
\begin{align*}
 S_1+S_2
 &\le
 4.8\eta(1+R)
 +1.2\eta\left(1+\sqrt{R\secondorder}\right)\le
8\eta\left(1+R+\sqrt{R\secondorder}\right),
\end{align*}
which proves \eqref{eq:omd-iterate-contribution}.
\end{proof}

\paragraph{Step 3: control the difference between $p_t$ and $x_t$.}
Define $b_t\triangleq p_t-x_t$.  Then
$p_{t-1}-p_t=(x_{t-1}-x_t)+(b_{t-1}-b_t)$.  The first difference is handled
by Lemma~\ref{lem:omd-iterate-contribution}.  It remains to control the part
containing $b_t$, which arises because $p_t$ places additional mass on the
arm selected on the preceding round.

\begin{lemma}
\label{lem:sampling-distribution-correction}
Let $b_t=p_t-x_t$. Algorithm~\ref{alg:known} with $\eta\le1/162$ guarantees that
\begin{equation}
 \E\left[\sum_{t=2}^T
 \inner{\phi_t}{b_{t-1}-b_t}\right]
  \le 40\eta\left(1+\secondorder+\sqrt{R\secondorder}\right).
 \label{eq:sampling-distribution-correction}
\end{equation}
\end{lemma}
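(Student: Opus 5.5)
The plan is to remove the first-order contribution by summation by parts in time, and then to show that the remaining first-order-looking pieces either cancel or telescope into squared increments. Recall $b_t=\lambda_t(e_{I_{t-1}}-p_t)$ with $\lambda_t=\alpha(1-c_{t-1})\le\alpha=8\eta$. Writing $g_t\triangleq\phi_t-\phi_{t-1}$, I would first decompose
\begin{align*}
\sum_{t=2}^T\inner{\phi_t}{b_{t-1}-b_t}
=\inner{\phi_1}{b_1}-\inner{\phi_T}{b_T}+\sum_{t=2}^T\inner{g_t}{b_{t-1}}.
\end{align*}
The boundary terms are at most $2\alpha=16\eta$ in absolute value, since $|\inner{\phi}{e_i-p}|\le1$. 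Note that $g_t$ is the increment of $\varphi$, which is $1$-Lipschitz, so $\|g_t\|_\infty\le\|\ell_t-\ell_{t-1}\|_\infty$.

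The next step simplifies $\inner{g_t}{b_{t-1}}$ by conditioning. Given $\mathcal F_{t-2}$, the quantities $I_{t-2}$ and $\lambda_{t-1}$ are determined, and $p_{t-1}=(x_{t-1}+\lambda_{t-1}e_{I_{t-2}})/(1+\lambda_{t-1})$. A direct computation then gives
\begin{align*}
\inner{g_t}{b_{t-1}}=\frac{\lambda_{t-1}}{1+\lambda_{t-1}}\bigl(g_{t,I_{t-2}}-\inner{g_t}{x_{t-1}}\bigr).
\end{align*}
Taking expectation over $I_{t-2}\sim p_{t-2}$ introduces a weight $a(\ell_{t-2,i})\triangleq\alpha(1-\ell_{t-2,i})/(1+\alpha(1-\ell_{t-2,i}))$ on arm $i$. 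I would split this weight into its constant part $a(0)$ plus a remainder that is linear in $\ell_{t-2,i}$ up to an $O(\alpha^2)$ error.

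The constant part gives $a(0)\inner{g_t}{p_{t-2}-x_{t-1}}$, which I handle in two pieces. The piece $a(0)\inner{g_t}{x_{t-2}-x_{t-1}}$ is treated by \eqref{eq:weighted-response-main} together with Cauchy--Schwarz over time and \eqref{eq:known-D-upper}, exactly as for $S_2$ in Lemma~\ref{lem:omd-iterate-contribution}. This yields $O(\eta\sqrt{R\secondorder})$; the extra factor $\alpha$ only helps, since $\eta\le1/162$. The piece $a(0)\inner{g_t}{b_{t-2}}$ has the same form as the original term but carries an extra factor $\alpha$. The $O(\alpha^2)$ curvature errors, and any bias-corrected residues of order $\alpha^2\|g_t\|_\infty$, are still first order in $\|\ell_t-\ell_{t-1}\|_\infty$. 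I would control them by AM--GM in the form $\alpha^2|g|\le\alpha(\alpha^2+g^2)$. That gives $\alpha^3T$, which is far too large for general $T$, so this is not enough: these residues must instead be shown to enter only through the covariance structure described next. In any case the total contribution of these pieces should be $O(\eta(1+\secondorder+\sqrt{R\secondorder}))$.

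The main obstacle is the arm-dependent part of the weight. Up to $O(\alpha^2)$, it is $-\alpha\sum_i p_{t-2,i}(\ell_{t-2,i}-m)(g_{t,i}-\bar g_t)$ for suitable centerings $m$ and $\bar g_t$. This is a covariance between the loss level at time $t-2$ and the increment at time $t$. Each such term is genuinely first order, so they must cancel in the sum over $t$. The key is the choice $\varphi(z)=z-z^2/2$, for which $\varphi'(z)=1-z$ and $\alpha(1-\ell)$ is exactly $\lambda$. This makes the covariance an Abel-summable expression of the form $\sum_t\varphi'(\ell_{t-1,i})(\ell_{t,i}-\ell_{t-1,i})$. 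Using $\ell_{t-2}=\ell_{t-1}-(\ell_{t-1}-\ell_{t-2})$ and the identity
\begin{align*}
\sum_t\ell_{t-1,i}(\ell_{t,i}-\ell_{t-1,i})=\tfrac12(\ell_{T,i}^2-\ell_{1,i}^2)-\tfrac12\sum_t(\ell_{t,i}-\ell_{t-1,i})^2,
\end{align*}
the covariance sum reduces to boundary terms plus $\alpha$ times quantities bounded by $\|\ell_t-\ell_{t-1}\|_\infty^2$ or $\|\ell_t-\ell_{t-1}\|_\infty\|\ell_{t-1}-\ell_{t-2}\|_\infty$. Both are $O(\alpha\secondorder)$.

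The difficulty is that the weights $p_{t-2,i}$ change with $t$, so the summation by parts produces further cross terms involving $p_{t-2}-p_{t-1}$. I would split these into $x$-movement and $b$-movement. The $x$-movement is controlled by \eqref{eq:weighted-response-main} and Cauchy--Schwarz, giving $O(\eta\sqrt{R\secondorder})$. The $b$-movement carries another factor $\alpha$ and is handled by the same recursion as above. Summing the boundary, stability, covariance and higher-order pieces, and absorbing constants using $\alpha=8\eta\le8/162$, should give the stated bound $40\eta(1+\secondorder+\sqrt{R\secondorder})$.
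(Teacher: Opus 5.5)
Your summation by parts, the conditional form $\inner{g_t}{b_{t-1}}=\frac{\lambda_{t-1}}{1+\lambda_{t-1}}\bigl(g_{t,I_{t-2}}-\inner{g_t}{x_{t-1}}\bigr)$, and your handling of the constant-weight piece (the $x$-movement plus a lagged $b$-term) follow the paper's skeleton. However, there is a genuine gap exactly where you flag one. The arm-dependent part of the weight and the $O(\alpha^2)$ residues are never bounded. You concede that termwise AM--GM gives $\alpha^3T$. The replacement mechanism, cancellation over time via $\varphi'(z)=1-z$, does not close.

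Once the weights $p_{t-2,i}$ vary, your per-arm Abel summation produces terms that pair a loss \emph{level} (e.g.\ $\ell_{t,i}^2$) with $p_{t-2}-p_{t-1}$. For the $x$-part of these, \eqref{eq:weighted-response-main} with Cauchy--Schwarz only gives a bound of order $\eta\sqrt{T(1+R)}$, because the test vector is not an increment. One would need a direct-response estimate like \eqref{eq:direct-response} instead. The $b$-part is again a quantity of the lemma's own type, so deferring it to ``the same recursion'' is circular unless you set up an absorption argument, which the proposal does not do. Even the lagged term $a(0)\inner{g_t}{b_{t-2}}$ is not the original sum, so it cannot be absorbed by a one-line $X\le\alpha X+\cdots$ step.

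The missing idea is that no temporal cancellation is needed. The arm-dependent part is not genuinely first order: it is driven by the \emph{dispersion} of losses under the play distribution, and that dispersion is paid for by $R$. This already works in your notation. Center the exact weight at its $p_{t-2}$-mean instead of linearizing around $a(0)$. Then the covariance term is at most $\alpha\sqrt{\E_{t-2}[r_{t-2}^2]}\,\|\ell_t-\ell_{t-1}\|_\infty$, because $\sum_ip_{t-2,i}(\ell_{t-2,i}-c_{t-3})^2$ dominates the $p_{t-2}$-variance of $\ell_{t-2}$. Summing and applying Cauchy--Schwarz gives $\order(\alpha\sqrt{(1+R)\secondorder})$.

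The paper organizes this cleanly. Obliviousness makes $\phi_t$ deterministic, so $\E\inner{\phi_{t+1}-\phi_t}{b_t}=\inner{\phi_{t+1}-\phi_t}{\E[b_t]}$. Cauchy--Schwarz over time then reduces everything to $\sum_t\|\E[b_t]\|_1^2\le16\alpha^2(1+R+\secondorder)$. This follows from the exact coordinatewise recursion $\E[b_{t,i}]=\beta_{t,i}\bigl(\E[b_{t-1,i}]-(\E[x_{t,i}]-\E[x_{t-1,i}])\bigr)+\E[(\beta_{t,i}-\beta_{t,I_{t-1}})x_{t,i}]$, with deterministic $\beta_{t,i}=\alpha(1-\ell_{t-1,i})/(1+\alpha(1-\ell_{t-1,i}))$ evaluated at coordinate $i$'s own loss. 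The remainder has $\ell_1$-norm at most $\alpha\E|\ell_{t-1,J}-\ell_{t-1,I_{t-1}}|$ for $J\sim x_t$. Coupling $I_t=J$ with probability $1-\beta_{t,I_{t-1}}$ bounds its square by $3\alpha^2(\E[r_t^2]+\|\ell_t-\ell_{t-1}\|_\infty^2)$. Squaring the recursion and absorbing $3\alpha^2\sum_t\|\E[b_{t-1}]\|_1^2$ then handles all lags at once. The special form of $\varphi$ plays no role here beyond $1$-Lipschitzness and $0\le\varphi\le1/2$; it matters only in Lemma~\ref{lem:exact-bias}.
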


\begin{proof}
We first explain why it suffices to control the squared norm of
$\E[b_t]$.  Since the loss table is oblivious, $\phi_t$ is deterministic,
and rearranging the summation terms gives
\begin{align}
 \E\left[\sum_{t=2}^T\inner{\phi_t}{b_{t-1}-b_t}\right]
 &=\inner{\phi_2}{\E\left[b_1\right]}
 -\inner{\phi_T}{\E\left[b_T\right]}+\sum_{t=2}^{T-1}
 \inner{\phi_{t+1}-\phi_t}{\E\left[b_t\right]}.
 \label{eq:recent-Abel}
\end{align}
Moreover, Cauchy--Schwarz inequality and the $1$-Lipschitz
property of $\varphi(\cdot)$ imply that
\begin{align}
 \sum_{t=2}^{T-1}
 \inner{\phi_{t+1}-\phi_t}{\E\left[b_t\right]}&\le
 \left(\sum_{t=2}^{T-1}\|\phi_{t+1}-\phi_t\|_\infty^2\right)^{1/2}
 \left(\sum_{t=2}^{T-1}\|\E\left[b_t\right]\|_1^2\right)^{1/2}\le
 \sqrt{\secondorder}
 \left(\sum_{t=2}^{T-1}\|\E\left[b_t\right]\|_1^2\right)^{1/2}.
 \label{eq:recent-holder-cauchy}
\end{align}
Therefore, it remains only to prove the following
\begin{align}
 \sum_{t=2}^T\|\E\left[b_t\right]\|_1^2
 \le16\alpha^2(1+R+\secondorder).
 \label{eq:recent-square-sum}
\end{align}

For $i\in[K]$, define
$\beta_{t,i}\triangleq \alpha(1-\ell_{t-1,i})/(1+\alpha(1-\ell_{t-1,i}))$.
The sampling rule gives
$b_t=\beta_{t,I_{t-1}}(e_{I_{t-1}}-x_t)$.  Since the loss table is
oblivious, $\beta_{t,i}$ is also deterministic. Therefore, direct calculation shows that for each $i\in[K]$,
\begin{align}
    \E\left[b_{t,i}\right] &= \E\left[p_{t,i}-x_{t,i}\right] \tag{by definition} \\
    &=\E\left[\beta_{t,I_{t-1}}
 \left(\ind\{I_{t-1}=i\}-x_{t,i}\right)\right] \tag{by definition of $p_t$} \\
 &=\beta_{t,i}\E\left[p_{t-1,i}\right] - \E\left[\beta_{t,I_{t-1}}x_{t,i}\right] \tag{since $\beta_{t,i}$ is deterministic} \\
 &=\beta_{t,i}
 \left(\E\left[p_{t-1,i}\right]-\E\left[x_{t,i}\right]\right)
 +\E\left[
 (\beta_{t,i}-\beta_{t,I_{t-1}})x_{t,i}\right] \notag\\
 &=\beta_{t,i}\left(\E\left[b_{t-1,i}\right]
 -\left(\E\left[x_{t,i}\right]
 -\E\left[x_{t-1,i}\right]\right)\right)
 +\E\left[(\beta_{t,i}-\beta_{t,I_{t-1}})x_{t,i}\right],
 \label{eq:recent-recursion}
\end{align}
where the last equality uses the fact that $p_{t-1}=x_{t-1}+b_{t-1}$.
Since $0\le\beta_{t,i}\le\alpha$ for every $i\in[K]$, taking the
$\ell_1$ norm in \eqref{eq:recent-recursion} gives
\begin{align*}
 \|\E\left[b_t\right]\|_1
 &\le\alpha\|\E\left[b_{t-1}\right]\|_1
 +\alpha\|\E\left[x_t\right]-\E\left[x_{t-1}\right]\|_1
 +\left\|\left(\E\left[(\beta_{t,i}-\beta_{t,I_{t-1}})x_{t,i}\right]\right)_{i=1}^K\right\|_1.
\end{align*}
We next bound the final term. For notational convenience, we draw an
auxiliary arm $J$ from $x_t$ conditionally on $\mathcal F_{t-1}$ according to $x_t$. Since the mapping $z\mapsto \alpha(1-z)/(1+\alpha(1-z))$ is $\alpha$-Lipschitz, we have $|\beta_{t,i}-\beta_{t,j}|\le\alpha|\ell_{t-1,i}-\ell_{t-1,j}|$.
The triangle inequality followed by the conditional law of $J$ and
Jensen's inequality therefore gives
\begin{align*}
 \left\|\left(\E\left[(\beta_{t,i}-\beta_{t,I_{t-1}})x_{t,i}\right]\right)_{i=1}^K\right\|_1
 &\le\E\left[\sum_{i=1}^Kx_{t,i}|\beta_{t,i}-\beta_{t,I_{t-1}}|\right]\\
 &=\E\left[|\beta_{t,J}-\beta_{t,I_{t-1}}|\right]\\
 &\le\alpha\E\left[|\ell_{t-1,J}-\ell_{t-1,I_{t-1}}|\right]\\
 &\le\alpha\sqrt{\E\left[(\ell_{t-1,J}-\ell_{t-1,I_{t-1}})^2\right]}.
\end{align*}
To control the last expectation, observe directly from the definition of
$\beta_{t,I_{t-1}}$ that
$p_t=(1-\beta_{t,I_{t-1}})x_t+\beta_{t,I_{t-1}}e_{I_{t-1}}$.
Consequently, conditionally on $\mathcal F_{t-1}$, the draw $I_t\sim p_t$
can be generated using a Bernoulli coin that is independent of $J$
conditionally on $\mathcal F_{t-1}$: set $I_t=I_{t-1}$ with probability
$\beta_{t,I_{t-1}}$, and set $I_t=J$ otherwise.  Under this equivalent
coupling,
\begin{align*}
 \E_t\left[r_t^2\right]
 &=\beta_{t,I_{t-1}}(\ell_{t,I_{t-1}}-\ell_{t-1,I_{t-1}})^2
 +(1-\beta_{t,I_{t-1}})\E\left[(\ell_{t,J}-\ell_{t-1,I_{t-1}})^2\mid\mathcal F_{t-1}\right]\\
 &\ge(1-\alpha)\E\left[(\ell_{t,J}-\ell_{t-1,I_{t-1}})^2\mid\mathcal F_{t-1}\right].
\end{align*}
Taking the full expectation and using
$(a+b)^2\le2a^2+2b^2$ now gives
\begin{align*}
 \left\|\left(\E\left[(\beta_{t,i}-\beta_{t,I_{t-1}})x_{t,i}\right]\right)_{i=1}^K\right\|_1^2 &\le\alpha^2\E\left[(\ell_{t-1,J}-\ell_{t-1,I_{t-1}})^2\right]\\
 &\le\frac{2\alpha^2}{1-\alpha}\E\left[r_t^2\right]
 +2\alpha^2\|\ell_t-\ell_{t-1}\|_\infty^2\\
 &\le3\alpha^2\left(\E\left[r_t^2\right]
 +\|\ell_t-\ell_{t-1}\|_\infty^2\right),
\end{align*}
where the last inequality uses $\alpha\le8/162$.  Summing over
$t=2,\ldots,T$ yields
\begin{align*}
 \sum_{t=2}^T\left\|\left(\E\left[(\beta_{t,i}-\beta_{t,I_{t-1}})x_{t,i}\right]\right)_{i=1}^K\right\|_1^2
 \le3\alpha^2(R+\secondorder).
\end{align*}

It remains to bound the movement of $x_t$ in
\eqref{eq:recent-recursion}.  For every $t\ge2$, Jensen's and Cauchy--Schwarz inequality give
\begin{align*}
 \|\E\left[x_t\right]-\E\left[x_{t-1}\right]\|_1^2
 &\le\E\left[\|x_t-x_{t-1}\|_1^2\right]\\
 &\le\E\left[\sum_{i=1}^K
 \frac{(x_{t,i}-x_{t-1,i})^2}{x_{t-1,i}}\right]\\
 &\le\E\left[\sum_{i=1}^K
 \frac{(x_{t,i}-x_{t-1,i})^2}{x_{t-1,i}^2}\right]
 \le2.1\eta\E\left[D_{t-1}\right],
\end{align*}
where the third inequality uses $x_{t-1,i}\le1$, and the last inequality
is \eqref{eq:known-relative-motion} applied on round $t-1$.  Summing this
display and then applying \eqref{eq:known-D-upper} gives
\begin{align*}
 \sum_{t=2}^T\|\E\left[x_t\right]-\E\left[x_{t-1}\right]\|_1^2
 &\le2.1\eta\E\left[\sum_{t=1}^{T-1}D_t\right]\le1.26\eta^2\E\left[\sum_{t=1}^{T-1}r_t^2\right]\le2\eta^2(1+R).
\end{align*}

Finally, we square the preceding bound on $\|\E[b_t]\|_1$, use
$(a+b+c)^2\le3a^2+3b^2+3c^2$, and sum over $t=2,\ldots,T$.  The two
bounds just proved yield
\begin{align*}
 \sum_{t=2}^T\|\E\left[b_t\right]\|_1^2
 &\le3\alpha^2\sum_{t=2}^T\|\E\left[b_{t-1}\right]\|_1^2
 +6\alpha^2\eta^2(1+R)+9\alpha^2(R+\secondorder).
\end{align*}
Rearranging the terms and using the fact that $6\eta^2<1$, we know that
\begin{align*}
 \left(1-3\alpha^2\right)\sum_{t=2}^T\|\E\left[b_t\right]\|_1^2
 &\le10\alpha^2(1+R+\secondorder),
\end{align*}
and $1-3\alpha^2>99/100$ proves \eqref{eq:recent-square-sum}.

Finally, the endpoint terms in \eqref{eq:recent-Abel} are at most $\alpha$
because $b_1=0$, $\|\E[b_T]\|_1\le2\alpha$, and
$0\le\varphi\le1/2$.  Combining \eqref{eq:recent-holder-cauchy} and
\eqref{eq:recent-square-sum} gives
\begin{align*}
 \E\left[\sum_{t=2}^T\inner{\phi_t}{b_{t-1}-b_t}\right]
 &\le\alpha+4\alpha\sqrt{\secondorder(1+R+\secondorder)}\le5\alpha\left(1+\secondorder+\sqrt{R\secondorder}\right).
\end{align*}
Since $\alpha=8\eta$, this proves
\eqref{eq:sampling-distribution-correction}.
\end{proof}

\subsection{Proof of Theorem~\ref{thm:known}}

\begin{proof}
Fix an arbitrary arm $i\in[K]$ and the corresponding comparator $u^{(i)}$
defined above.  The OMD decomposition
\eqref{eq:known-omd-decomposition} reduces the proof to bounding $D_t+B_t$.  We now combine the three steps above in the same order in which
they were proved.

Lemma~\ref{lem:exact-bias} and the identity
$p_t=x_t+b_t$ give, up to the $\order(1)$ first-round boundary,
\begin{align}
 \E\left[\sum_{t=1}^T(D_t+B_t)\right]
 &=\E\left[\sum_{t=2}^TD_t\right]
 +4\eta\E\left[\sum_{t=2}^Tv_t^2\right]-4\eta R\notag\\
 &\quad+\alpha\E\left[\sum_{t=2}^T
 \inner{\phi_t}{x_{t-1}-x_t}\right]
 +\alpha\E\left[\sum_{t=2}^T
 \inner{\phi_t}{b_{t-1}-b_t}\right]+\order(1).
 \label{eq:theorem-exact-ledger}
\end{align}
We bound the four terms on the right-hand side separately.  First,
Lemma~\ref{lem:logbarrier-stability} gives
$\E[\sum_{t=2}^TD_t]\le0.6\eta R$.  Second,
$v_t^2\le\|\ell_t-\ell_{t-1}\|_\infty^2$ pathwise, and hence
$\E[\sum_{t=2}^Tv_t^2]\le\secondorder$.  Third,
Lemma~\ref{lem:omd-iterate-contribution}, together with $\alpha=8\eta$,
bounds the term containing $x_{t-1}-x_t$ by
$64\eta^2(1+R+\sqrt{R\secondorder})$.  Finally,
Lemma~\ref{lem:sampling-distribution-correction} bounds the term containing
$b_{t-1}-b_t$ by
$320\eta^2(1+\secondorder+\sqrt{R\secondorder})$.

It remains to absorb the terms depending positively on $R$.  Since
$\eta\le1/162$, we have $64\eta^2R\le0.4\eta R$ and
$384\eta^2\sqrt{R\secondorder}\le0.8\eta R+1.8\eta\secondorder$.  Therefore,
the entire positive $R$-dependent contribution is at most
$1.2\eta R$.
Substituting the four bounds into
\eqref{eq:theorem-exact-ledger} therefore gives
\begin{align}
 \E\left[\sum_{t=1}^T(D_t+B_t)\right]
 &\le(0.6-4+1.2)\eta R+\order(\eta\secondorder+1)\le \order(\eta\secondorder+1).
 \label{eq:theorem-ledger-final}
\end{align}
This proves \eqref{eq:known-ledger-target}.  Finally,
$D_\Psi(u^{(i)},x_1)\le \order(K\log(KT)/\eta)$.  Substituting
\eqref{eq:theorem-ledger-final} into \eqref{eq:known-omd-decomposition} and
maximizing over $i\in[K]$ give the fixed-rate inequality
\begin{equation}
 \Reg_T\le \order\left(\frac{K\log(KT)}{\eta}
 +\eta\secondorder+1\right).
 \label{eq:known-fixed-rate}
\end{equation}
Tuning $\eta$ optimally leads to
\eqref{eq:known-tuned-rate}.
\end{proof}

\section{Unknown \texorpdfstring{$\secondorder$}{Q-infinity-2}: Adaptive Tuning Via Bounded-Scale Estimator}
\label{sec:parameter-free}

Section~\ref{sec:known} tunes Algorithm~\ref{alg:known} using the value of
$\secondorder$. To remove this knowledge, a standard approach is the doubling trick: start from an initial guess, run the algorithm with the learning rate
corresponding to that guess, and restart with a larger guess once the cumulative variation exceeds the current threshold. Such restart constructions are common in adaptive bandit algorithms (e.g., \citet{wei2018adaptive,lee2020graph}). The difficulty here is that the second-order path-length
$\secondorder$ is not observed under bandit feedback.

Recall that $r_t\triangleq c_t-c_{t-1}$ and
$v_t\triangleq\ell_{t,I_{t-1}}-\ell_{t-1,I_{t-1}}$. The quantity $v_t^2$ is
observed whenever $I_t=I_{t-1}$ because $r_t=v_t$ on this event. This suggests
the following single-round path-length estimator
\begin{equation}
 \widehat v_t^2\triangleq
 \frac{\ind(I_t=I_{t-1})r_t^2}{p_{t,I_{t}}}.
 \label{eq:adaptive-v-estimator}
\end{equation}
A direct calculation gives $\E_t[\widehat v_t^2]=v_t^2$. It is therefore natural to use the cumulative sum of $\widehat v_t^2$ to estimate the total second-order path length. Under
Algorithm~\ref{alg:known}, however, the denominator in
\eqref{eq:adaptive-v-estimator} can be arbitrarily small. Indeed, if
$c_{t-1}=1$, then the additional mass on $I_{t-1}$ vanishes and, on the event
$I_t=I_{t-1}$, $p_{t,I_t}=x_{t,I_{t-1}}$. Although the log-barrier guarantees that this
probability is positive, it does not provide a uniform lower bound. Consequently,
$\widehat v_t^2$ can be arbitrarily large, and the cumulative estimate can
overshoot a doubling threshold by an uncontrolled amount.

We therefore make one modification to Algorithm~\ref{alg:known}: we replace
$\lambda_t=\alpha(1-c_{t-1})$ by
$\lambda_t=\alpha(2-c_{t-1})$. Equivalently, we add an $\alpha$ floor to the additional mass assigned to $I_{t-1}$, thereby keeping the denominator bounded away from zero. The loss estimator $\widehat\ell_t$ and the OMD update
remain unchanged. Since $c_{t-1}\in[0,1]$, this modification gives
$p_{t,I_{t-1}}\ge\alpha/(1+\alpha)$ and makes
\eqref{eq:adaptive-v-estimator} uniformly bounded. We later show that this
uniform bound controls the overshoot of the cumulative estimator at every
phase transition. The following lemma records
the two properties of the estimator that are needed by the doubling argument.

\begin{lemma}
\label{lem:adaptive-estimator}
Suppose
$p_t=(x_t+\lambda_te_{I_{t-1}})/(1+\lambda_t)$ with
$\lambda_t=\alpha(2-c_{t-1})$. Then the estimator in
\eqref{eq:adaptive-v-estimator} satisfies
\begin{equation}
 \E_t\left[\widehat v_t^2\right]=v_t^2
 \quad\text{and}\quad
 0\le\widehat v_t^2\le\frac{1+\alpha}{\alpha}.
 \label{eq:adaptive-estimator-properties}
\end{equation}
\end{lemma}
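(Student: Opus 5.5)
The plan is a direct conditional computation followed by a pathwise bound, using only the sampling rule for $p_t$ with $\lambda_t=\alpha(2-c_{t-1})$. Throughout, fix $t\ge2$ and condition on $\mathcal F_{t-1}$. Then $I_{t-1}$, $c_{t-1}$ and $p_t$ are all fixed, and $I_t\sim p_t$.

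\emph{Unbiasedness.} On the event $\{I_t=I_{t-1}\}$ we have $c_t=\ell_{t,I_{t-1}}$, so $r_t=\ell_{t,I_{t-1}}-c_{t-1}=v_t$. Also $p_{t,I_t}=p_{t,I_{t-1}}$ there. This gives the pathwise identity
\begin{equation*}
 \widehat v_t^2=\frac{\ind(I_t=I_{t-1})\,v_t^2}{p_{t,I_{t-1}}}.
\end{equation*}
On the right, only the indicator depends on $I_t$; the quantity $v_t^2/p_{t,I_{t-1}}$ is $\mathcal F_{t-1}$-measurable, because the loss table is oblivious. Taking $\E_t$ then gives $\E_t[\widehat v_t^2]=p_{t,I_{t-1}}\cdot v_t^2/p_{t,I_{t-1}}=v_t^2$. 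The convention $0/0=0$ is never invoked, since $p_{t,I_{t-1}}>0$ by the bound below.

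\emph{Boundedness.} Nonnegativity is immediate from the form of $\widehat v_t^2$. For the upper bound, I will lower-bound the denominator using the explicit sampling rule. We have
\begin{equation*}
 p_{t,I_{t-1}}=\frac{x_{t,I_{t-1}}+\lambda_t}{1+\lambda_t}\ge\frac{\lambda_t}{1+\lambda_t}.
\end{equation*}
The map $\lambda\mapsto\lambda/(1+\lambda)$ is increasing, and $\lambda_t=\alpha(2-c_{t-1})\ge\alpha$ because $c_{t-1}\le1$. Hence $p_{t,I_{t-1}}\ge\alpha/(1+\alpha)$. Combining this with $r_t^2\le1$ (both $c_t,c_{t-1}\in[0,1]$) gives $\widehat v_t^2\le(1+\alpha)/\alpha$.

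\emph{Expected difficulty.} There is no real obstacle; the only care needed is in the unbiasedness step. There I must note that the denominator and $v_t$ are fixed given $\mathcal F_{t-1}$, which is exactly where obliviousness is used. For $t=1$ (with $c_0=0$ and no $I_0$), the estimator can be treated as zero by convention, or the lemma can simply be stated for $t\ge2$.
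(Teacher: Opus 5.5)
Your proposal is correct and follows the paper's proof: the same lower bound $p_{t,I_{t-1}}\ge\lambda_t/(1+\lambda_t)\ge\alpha/(1+\alpha)$, the same observation that $r_t=v_t$ on $\{I_t=I_{t-1}\}$ for the conditional-expectation computation, and the same use of $|r_t|\le1$ for the upper bound. Your explicit note that $v_t^2/p_{t,I_{t-1}}$ is $\mathcal F_{t-1}$-measurable because the losses are oblivious makes the same argument slightly more careful.
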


\begin{proof}
The definition of $p_t$ gives
$p_{t,I_{t-1}}\ge\lambda_t/(1+\lambda_t)\ge\alpha/(1+\alpha)$. Moreover,
$r_t=v_t$ on the event $I_t=I_{t-1}$. Therefore,
\begin{align*}
 \E_t\left[\widehat v_t^2\right]
 &=\sum_{i=1}^Kp_{t,i}
 \frac{\ind(i=I_{t-1})(\ell_{t,i}-c_{t-1})^2}{p_{t,i}}
 =v_t^2.
\end{align*}
For the upper bound, the numerator is nonzero only when $I_t=I_{t-1}$, in
which case $p_{t,I_t}=p_{t,I_{t-1}}\ge\alpha/(1+\alpha)$. The result now
follows from $|r_t|\le1$.
\end{proof}

We are now ready to present our adaptive algorithm. Define
$A_K=64K\log(KT)$ and $\eta_0=10^{-5}$. The algorithm starts with
the threshold $H_0=4A_K/\eta_0^2$. During phase $j$, it uses
$\eta_j\triangleq2^{-j}\eta_0$, $\alpha_j\triangleq8\eta_j$, and
$H_j\triangleq4A_K/\eta_j^2$. It samples from $p_t$ and updates $x_{t+1}$
exactly as in Algorithm~\ref{alg:known}, except for the $\alpha_j$ floor in
Line~\ref{line:adaptive-sampler}. After every noninitial round of the phase,
it adds $\widehat v_t^2$ to its estimate $\widehat H_j$. If
$\widehat H_j$ exceeds $H_j$, phase $j$ ends and the outer loop begins phase
$j+1$ on the next round with a uniform initial distribution. The scalar
$\widehat H_j$ is initialized to zero when phase $j$ begins. The following theorem shows that
Algorithm~\ref{alg:adaptive} achieves $\otil(K+\sqrt{KQ_{\infty,2}})$ without
knowing $Q_{\infty,2}$. The overhead is a $\sqrt{\log T}$ factor compared to
the guarantee in Theorem~\ref{thm:known}.

\begin{algorithm}[H]
\caption{Adaptive tuning without knowledge of $\secondorder$}
\label{alg:adaptive}
\KwInput{$K>0,T>0$}
Set $A_K=64K\log(KT)$, $\eta_0=10^{-5}$,
$j=0$, $s_0=t=1$, and $c_0=0$\;
\While{$t\le T$}{
  Set $\eta_j=2^{-j}\eta_0$, $\alpha_j=8\eta_j$,
  $H_j=4A_K/\eta_j^2$, and $\widehat H_j=0$\;
  Phase $j$ starts at round $s_j=t$\;
  Set $x_t=p_t=\frac{1}{K}\mathbf1$, and
  $\widehat v_{s_j}^2=0$\;
  \While{$t\le T$}{
    Draw $I_t\sim p_t$ and observe $c_t=\ell_{t,I_t}$\;
    Set $\widehat\ell_{t,i}=c_{t-1}+
    \ind(I_t=i)(c_t-c_{t-1})/p_{t,i}$ for every $i\in[K]$\;
    Set $x_{t+1}=\argmin_{x\in\Delta_K}
    \left(\inner{x}{\widehat\ell_t}+D_{\Psi_j}(x,x_t)\right)$, where $\Psi_j(x)=\eta_j^{-1}\sum_{i=1}^K\log(1/x_i)$\;\label{line:omd-para-free}
    \If{$t>s_j$}{
      Set $r_t=c_t-c_{t-1}$,
      $\widehat v_t^2=\ind(I_t=I_{t-1})r_t^2/p_{t,I_t}$, and
      $\widehat H_j\leftarrow\widehat H_j+\widehat v_t^2$\;
    }
    \If{$\widehat H_j>H_j$ and $t<T$}{
      Set $t\leftarrow t+1$, $j\leftarrow j+1$\;
      
      \textbf{break}\;
    }
    \ElseIf{$t<T$}{
      Set $\lambda_{t+1}=\alpha_j(2-c_t)$ and
      $p_{t+1}=(x_{t+1}+\lambda_{t+1}e_{I_t})/
      (1+\lambda_{t+1})$\;\label{line:adaptive-sampler}

    }
    
    Set $t\leftarrow t+1$\;
    
  }
}
\end{algorithm}

\begin{theorem}
\label{thm:unknown}
For every $K\ge2$, $T\ge2$, and oblivious loss sequence
$\ell_1,\ldots,\ell_T\in[0,1]^K$, Algorithm~\ref{alg:adaptive} requires no
knowledge of $\secondorder$ and guarantees that
\begin{equation}
 \Reg_T\le \order\left(K\log(KT)+
 \sqrt{K\secondorder\log(KT)\log T}\right).
 \label{eq:unknown-rate}
\end{equation}
\end{theorem}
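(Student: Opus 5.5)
We decompose the regret phase by phase and prove a per-phase analogue of the fixed-rate bound \eqref{eq:known-fixed-rate} for the modified sampler with $\lambda_t=\alpha_j(2-c_{t-1})$. Let $J$ be the index of the last phase and $[s_j,e_j]$ the rounds of phase $j$. Since each phase restarts from the uniform distribution, the regret against a fixed arm $i$ splits as the sum over $j\le J$ of per-phase regrets. For each phase we rerun the analysis of Section~\ref{sec:known} with $T$ replaced by the phase's rounds and $\eta$ by $\eta_j$. Because the phase boundaries are stopping times, we argue conditionally on $\mathcal F_{s_j-1}$, using that the loss table is oblivious.

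\textbf{Step 1: per-phase ledger.} Redo Lemma~\ref{lem:exact-bias} with $\lambda_t=\alpha(2-c_{t-1})$. Now $p_t-x_t=\alpha(2-c_{t-1})(e_{I_{t-1}}-p_t)$, and $\alpha(2-c_{t-1})(\ell_{t,I_{t-1}}-\ell_{t,i})$ splits as the previous expression plus $\alpha(\ell_{t,I_{t-1}}-\ell_{t,i})$. So the identity holds with $\phi_t$ replaced by $\tilde\phi_t=\phi_t+\ell_t$, whose coordinates are $2z-z^2/2$: still $2$-Lipschitz, bounded, and with $(2-\cdot)$ in place of $(1-\cdot)$. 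The constant relating $\alpha$ and $4\eta$ changes: $\alpha(2-c)(a-b)$ versus $4\eta((b-c)^2-(a-c)^2)=4\eta(a-b)(2c-a-b)$. We therefore write the remainder as $\alpha\,\tilde\varphi$ with $\tilde\varphi(z)=2z-z^2/2$, which is exactly $\alpha(2-c)(a-b)-\tfrac{\alpha}{2}(a-b)(2c-a-b)$ with $\alpha=8\eta$. Lemmas~\ref{lem:logbarrier-stability}, \ref{lem:omd-iterate-contribution}, and \ref{lem:sampling-distribution-correction} go through with constants enlarged by a bounded factor. Here $p_{t,i}\ge x_{t,i}/(1+16\eta)$, $\beta_{t,i}\le 2\alpha$, and $\eta_j\le 10^{-5}$ gives ample slack to absorb the positive $R$ terms against $-4\eta R$. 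The $\secondorder$ terms become the phase-restricted path lengths, with the first round of each phase contributing $\order(1)$. The outcome is
\[
\Reg^{(j)}\le \order\Bigl(\tfrac{K\log(KT)}{\eta_j}+\eta_j\E[V_j]+1\Bigr),\qquad V_j=\textstyle\sum_{t=s_j+1}^{e_j}v_t^2 .
\]
We keep $v_t^2$ rather than bounding it by $\|\ell_t-\ell_{t-1}\|_\infty^2$, since only the $4\eta v_t^2$ term and the Cauchy--Schwarz terms involve path length, and the latter can be bounded by $\order(\eta(1+\secondorder^{(j)}))$. Summing phase by phase requires some care with random phase endpoints. We handle this by writing sums as $\sum_t \E[\ind\{t\in\text{phase }j\}\cdots]$ and using that the indicator is $\mathcal F_{t-1}$-measurable.

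\textbf{Step 2: doubling bookkeeping.} By Lemma~\ref{lem:adaptive-estimator}, $\widehat H_j\le H_j+(1+\alpha_j)/\alpha_j\le 2H_j$ on every phase, since $1/\alpha_j\ll H_j$. Optional stopping on the martingale $\sum(\widehat v_t^2-v_t^2)$ then gives $\E[\ind\{j\le J\}V_j]\le 2H_j$. Each nonfinal phase satisfies $\widehat H_j>H_j$, so $\sum_t\widehat v_t^2\ge H_{J-1}=4A_K4^{J-1}/\eta_0^2$. Taking expectations, $\E[4^J]\lesssim \eta_0^2(\sum_t v_t^2)/A_K+1\le \eta_0^2\secondorder/A_K+1$.

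Combining the steps, phase $j$ costs $\order(A_K/\eta_j+\eta_jH_j)=\order(A_K/\eta_j)=\order(A_K2^j)$. Summing the geometric series gives $\order(A_K2^J)$. By Jensen, $\E[2^J]\le\sqrt{\E[4^J]}$, so the total is $\order(A_K+\sqrt{A_K\secondorder})$.

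The extra $\sqrt{\log T}$ and the $\order(1)$ per-phase overhead come from $J\le\order(\log T)$. The number of phases is at most $\log_4 T+O(1)$ because $\widehat H$ grows at most linearly in $T/\alpha$. These per-phase additive constants contribute $\log T$, and a cruder version of the per-phase bound, bounding each phase by Cauchy--Schwarz in $\sqrt{K\log(KT)\,V_j}$ across phases, yields the stated $\sqrt{\log T}$.

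The main obstacle is Step 1. The key identity of Lemma~\ref{lem:exact-bias} depended on the exact match $\alpha=8\eta$ with weight $(1-c_{t-1})$, and the floor breaks that cancellation. I will first try the decomposition above, absorbing the extra $\alpha(\ell_{t,I_{t-1}}-\ell_{t,i})$ term into a modified $\tilde\phi_t$ and reverifying \eqref{eq:direct-response} for $\tilde\phi$, whose proof only used Lipschitzness. If the signs do not close, the fallback is to re-tune the constant in $\alpha_j$.
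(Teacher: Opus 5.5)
The gap is in Step~1, and it is exactly the difficulty the paper's adaptive analysis is built around. Your per-phase ledger $\order(K\log(KT)/\eta_j+\eta_j\E[V_j]+1)$ does not follow from rerunning Section~\ref{sec:known} with enlarged constants. You say yourself that the Cauchy--Schwarz terms there are controlled by the phase-restricted $\ell_\infty$ path length, and then that term disappears from your displayed bound.

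Concretely, Lemma~\ref{lem:sampling-distribution-correction} applies H\"older with $\|\phi_{t+1}-\phi_t\|_\infty$ and bounds the $\beta$-difference term by $3\alpha^2(\E[r_t^2]+\|\ell_t-\ell_{t-1}\|_\infty^2)$. After the outer factor $\alpha_j$, each phase therefore carries an extra $\order(\eta_j^2Q_j^{\max})$, where $Q_j^{\max}=\E[\sum_t\chi_{j,t}\max_i(\ell_{t,i}-\ell_{t-1,i})^2]$; this is exactly \eqref{eq:adaptive-max-energy-ledger}. Your doubling statistic only estimates $v_t^2$, the movement of the previously played arm. Nothing drives $\eta_j$ down when the other arms move, so the best available bound is $\sum_j\eta_j^2Q_j^{\max}\le\eta_0^2\secondorder$. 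That is linear in $\secondorder$ and ruins the rate when $\secondorder\approx T$.

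The OMD-iterate term is easy to repair. With $g_t=\tilde\phi_t-\tilde\phi_{t-1}$, one has $\inner{x_{t-1}}{g_t^{\odot2}}\le4(1+2\alpha_j)\sum_ip_{t-1,i}(\ell_{t,i}-\ell_{t-1,i})^2=4(1+2\alpha_j)\E_{t-1}[v_t^2]$. The real obstruction is the recent-arm correction.

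The missing idea is a self-normalized Cauchy--Schwarz for the Abel-summed correction, with weights $q_{j,t,i}=\E[\chi_{j,t}(\ind\{I_t=i\}+x_{t,i})]$. It is controlled through three tools:
\begin{itemize}
\item the ratio functional of Lemma~\ref{lem:appendix-ratio-functional}, whose monotonicity and subadditivity also handle the stopped weights;
\item the current/previous-arm recursion of Lemmas~\ref{lem:appendix-current-norm}--\ref{lem:appendix-stopped-recursion};
\item Proposition~\ref{prop:aux}, which converts $x_t$-weighted squared movements into $\ind\{I_{t-1}=i\}$-weighted ones, i.e.\ into $P_j$.
\end{itemize}

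There is also a smaller hole. $\mathcal F_{t-1}$-measurability of the phase indicator does not localize the transport step $\E[\tilde\varphi(\ell_{t,I_{t-1}})]=\E[\inner{\tilde\phi_t}{p_{t-1}}]$, which needs $\mathcal F_{t-2}$-measurability. The paper uses the predictable extension $\bar\chi_t=\ind\{\mathsf J_{t-1}=j\}$ and pays $\order(\rho_j)$ at the boundary.

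Your worry about the floor, on the other hand, is unfounded. With $\alpha=8\eta$, one has $\alpha(2-c)(a-b)+\tfrac{\alpha}{2}(a-b)(2c-a-b)=\tilde\varphi(a)-\tilde\varphi(b)$ exactly (plus, not the minus you wrote). So the identity of Lemma~\ref{lem:exact-bias} survives with $\tilde\varphi(z)=2z-z^2/2$ and no retuning is needed.

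Your Step~2 is sound, and it is actually sharper than the paper's summation. Pathwise $\sum_{j\le J}2^j\le2^{J+1}$, and $\E[4^J]\le1+\eta_0^2\secondorder/A_K$. Combine these with a per-phase bound $A_K\rho_j/\eta_j+\order(\eta_jP_j+\rho_j)$ (Lemma~\ref{lem:adaptive-phase-bound}) and $P_j\le2H_j\rho_j$. The result is $\order(K\log(KT)+\sqrt{K\log(KT)\secondorder})$, with the additive $\order(\log T)$ absorbed. This avoids the $\sqrt{\log T}$ the paper loses by applying Cauchy--Schwarz across phases. But it only helps once the per-phase bound in terms of $P_j$ is established, which is precisely what your Step~1 does not do.
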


\subsection{Analysis of Algorithm~\ref{alg:adaptive}}

We first relate the cumulative estimate $\widehat H_j$ to the two squared quantities in
the analysis of Section~\ref{sec:known}. Let $\mathsf J_t$ denote the value of
the algorithm's phase counter immediately before $I_t$ is drawn. Thus
$\mathsf J_t$ is $\mathcal F_{t-1}$-measurable. In the analysis below, $j$ is
always a deterministic phase label; the randomness of the active phase is
carried by $\mathsf J_t$. Consistently with Algorithm~\ref{alg:adaptive},
define the first round of phase $j$ by
$s_j\triangleq\inf\{t\in[T]:\mathsf J_t=j\}$, with the convention
$\inf\varnothing=\infty$, and define
$\chi_{j,t}\triangleq\ind(\mathsf J_t=j,\ t>s_j)$. Hence $\chi_{j,t}$ is the
indicator that round $t$ is a noninitial round of phase $j$, and it is
$\mathcal F_{t-1}$-measurable. Let
$\rho_j\triangleq\Pr(s_j\le T)$ be the probability that phase $j$ is reached.
For notational convenience, we define the following two quantities:
\begin{align}
 R_j&\triangleq\E\left[\sum_{t=2}^T\chi_{j,t}r_t^2\right],
 \label{eq:adaptive-phase-residual}\\
 P_j&\triangleq\E\left[\sum_{t=2}^T\chi_{j,t}v_t^2\right].
 \label{eq:adaptive-phase-path-length}
\end{align}
Specifically, the quantity $R_j$ is the expected cumulative residual square over the noninitial rounds of phase $j$ and the quantity $P_j$ is the
expected cumulative squared loss movement along the previously selected arm. By the unbiasedness of $\widehat v_t^2$, it is also the expected value of the doubling statistic accumulated during phase $j$. The next lemma shows that the doubling statistic has expectation $P_j$,
has controlled overshoot, and that the sum of these phase-wise expectations
is at most the true squared path-length.

\begin{lemma}
\label{lem:adaptive-phase-estimate}
For every deterministic phase label $j\in\{0,\ldots,T-1\}$,
Algorithm~\ref{alg:adaptive} guarantees that
$H_j\rho_{j+1}\le P_j\le2H_j\rho_j$ and
$\sum_{j=0}^{T-1}P_j\le\secondorder$.
\end{lemma}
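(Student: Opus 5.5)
The plan is to prove the three claims in order: first that $P_j$ equals the expected accumulated statistic, then the two-sided bound, and finally the sum bound.

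\emph{Identifying $P_j$ with the expected statistic.} Let $\widehat H_j^{\mathrm{fin}}\triangleq\sum_{t=2}^T\chi_{j,t}\widehat v_t^2$ be the final value of $\widehat H_j$; it is zero if phase $j$ is never reached. Since $\chi_{j,t}$ is $\mathcal F_{t-1}$-measurable, the tower property and Lemma~\ref{lem:adaptive-estimator} give
\[
\E\bigl[\chi_{j,t}\widehat v_t^2\bigr]=\E\bigl[\chi_{j,t}\E_t[\widehat v_t^2]\bigr]=\E\bigl[\chi_{j,t}v_t^2\bigr].
\]
Summing over $t$ yields $P_j=\E[\widehat H_j^{\mathrm{fin}}]$. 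Lemma~\ref{lem:adaptive-estimator} applies here because inside phase $j$, on noninitial rounds, $p_t$ has the form $(x_t+\lambda_te_{I_{t-1}})/(1+\lambda_t)$ with $\lambda_t=\alpha_j(2-c_{t-1})$.

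\emph{Lower bound $H_j\rho_{j+1}\le P_j$.} Phase $j+1$ is reached only if $\widehat H_j$ exceeded $H_j$ at the moment phase $j$ ended. Since $\widehat v_t^2\ge0$, this gives $\widehat H_j^{\mathrm{fin}}\ge H_j\,\ind(s_{j+1}\le T)$. Taking expectations proves the claim.

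\emph{Upper bound $P_j\le2H_j\rho_j$.} Pathwise, $\widehat H_j$ stays at most $H_j$ until its last increment. It therefore ends at most $H_j+\max_t\widehat v_t^2\le H_j+(1+\alpha_j)/\alpha_j$, again by Lemma~\ref{lem:adaptive-estimator}. This holds only on the event $s_j\le T$; otherwise $\widehat H_j^{\mathrm{fin}}$ is zero. The remaining check is that $(1+\alpha_j)/\alpha_j\le H_j=4A_K/\eta_j^2$. With $\alpha_j=8\eta_j$ and $\eta_j\le10^{-5}$, the left side is at most $2/(8\eta_j)$. Since $\eta_j\le 1$ and $A_K\ge1$, this is at most $4A_K/\eta_j^2$. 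Hence $\widehat H_j^{\mathrm{fin}}\le2H_j\ind(s_j\le T)$, and taking expectations gives the bound.

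\emph{Sum bound.} The indicators $\chi_{j,t}$ over $j$ are disjoint for each fixed $t$, because $\mathsf J_t$ takes a single value. Hence $\sum_j\chi_{j,t}\le1$ and
\[
\sum_{j}P_j\le\E\Bigl[\sum_{t=2}^Tv_t^2\Bigr].
\]
Since $v_t^2=(\ell_{t,I_{t-1}}-\ell_{t-1,I_{t-1}})^2\le\|\ell_t-\ell_{t-1}\|_\infty^2$ pathwise, the right side is at most $\secondorder$. The indices $j\le T-1$ cover all phases, because each phase occupies at least one round.

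\emph{Main obstacle.} I expect the main difficulty to be bookkeeping rather than an inequality. One must make sure $\chi_{j,t}$ matches exactly the rounds on which the algorithm adds $\widehat v_t^2$ to $\widehat H_j$, excluding the initial round $s_j$. One must also confirm that on those rounds the sampler, and hence $p_{t,I_t}$, uses the floored $\lambda_t$ of phase $j$ from the previous round, which holds since no break occurred at $t-1$. Once this is settled, unbiasedness and the overshoot bound follow directly.
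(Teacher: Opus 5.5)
Your proposal is correct and follows the paper's proof essentially step for step: you identify $P_j=\E[\widehat H_j^{\mathrm{fin}}]$ via the tower property, derive the pathwise sandwich $H_j\ind(s_{j+1}\le T)\le\widehat H_j^{\mathrm{fin}}\le 2H_j\ind(s_j\le T)$ from the overshoot bound $(1+\alpha_j)/\alpha_j\le 1/(4\eta_j)\le H_j$, and get the sum bound from the disjointness of the $\chi_{j,t}$ together with $v_t^2\le\|\ell_t-\ell_{t-1}\|_\infty^2$. Your bookkeeping remark is also the right justification for invoking Lemma~\ref{lem:adaptive-estimator}: the floored sampler is in force exactly on the rounds with $\chi_{j,t}=1$.
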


\begin{proof}
For a deterministic $j$, let
$\widehat H_j^{\mathrm{fin}}\triangleq
\sum_{t=2}^T\chi_{j,t}\widehat v_t^2$ denote the final value accumulated
by phase $j$; it is zero on the event $s_j=\infty$. Since $\chi_{j,t}$ is
$\mathcal F_{t-1}$-measurable, Lemma~\ref{lem:adaptive-estimator} and the tower
property give
$\E[\chi_{j,t}\widehat v_t^2]=\E[\chi_{j,t}v_t^2]$. Summing over the deterministic
set $t=2,\ldots,T$ therefore yields
$\E[\widehat H_j^{\mathrm{fin}}]=P_j$.

Pathwise, reaching phase $j+1$ requires phase $j$ to cross its threshold, so
$H_j\ind(s_{j+1}\le T)\le\widehat H_j^{\mathrm{fin}}$. Before the triggering
round the accumulated value is at most $H_j$, while
Lemma~\ref{lem:adaptive-estimator} bounds the triggering increment by
$(1+\alpha_j)/\alpha_j$. Since $\alpha_j=8\eta_j$ and
$\eta_j\le10^{-5}$, this increment is at most $1/(4\eta_j)$ as

$$
 \frac{1+\alpha_j}{\alpha_j}
 =\frac{1}{8\eta_j}+1
 \le\frac{1}{4\eta_j}.
$$
Moreover,
$H_j=4A_K/\eta_j^2\ge1/(4\eta_j)$. Consequently,
$\widehat H_j^{\mathrm{fin}}\le2H_j\ind(s_j\le T)$ pathwise. Taking expectations in these two pathwise inequalities proves
$H_j\rho_{j+1}\le P_j\le2H_j\rho_j$.

Finally, for every $t$, at most one
indicator $\chi_{j,t}$ equals one. Since the loss table is fixed and
$v_t^2\le\lVert\ell_t-\ell_{t-1}\rVert_\infty^2$, we have
\begin{align*}
 \sum_{j=0}^{T-1}P_j
 &=\E\left[\sum_{t=2}^T\sum_{j=0}^{T-1}\chi_{j,t}v_t^2\right]
 \le\sum_{t=2}^T\lVert\ell_t-\ell_{t-1}\rVert_\infty^2
 =\secondorder.
\end{align*}
This proves the second claim.
\end{proof}

We next show in the following lemma how to control the contribution of the noninitial rounds of each phase. Specifically, as shown
in \eqref{eq:known-omd-decomposition}, in order to bound the regret, the relevant quantity after the
standard OMD decomposition is the sum of $D_t+B_t$, where we recall that $B_t\triangleq \inner{p_t-x_t}{\ell_t}$ and with a slight abuse of notation, we define $D_t\triangleq D_{\Psi_{\mathsf J_t}}(x_t,x_{t+1})$. Here and throughout the phasewise analysis, if round $t$ ends a
phase, $x_{t+1}$ denotes the OMD iterate computed in Line~\ref{line:omd-para-free} before the subsequent reset. The initial round of
each phase is excluded from $\chi_{j,t}$ because its sampling distribution is reset to the uniform distribution and we do not count the path-length quantity across phase boundaries. We will handle its contribution separately in the proof of Theorem~\ref{thm:unknown}. 

\begin{lemma}
\label{lem:adaptive-phase-bound}
For every deterministic phase label $j\in\{0,\ldots,T-1\}$,
\begin{equation}
 \E\left[\sum_{t=2}^T\chi_{j,t}(D_t+B_t)\right]
 \le\order(\eta_jP_j+\rho_j).
 \label{eq:adaptive-phase-bound}
\end{equation}
\end{lemma}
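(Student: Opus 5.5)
Within phase $j$, Algorithm~\ref{alg:adaptive} is Algorithm~\ref{alg:known} with learning rate $\eta_j$, apart from two changes: the mixing weight becomes $\lambda_t=\alpha_j(2-c_{t-1})$, and the phase starts at round $s_j$ from the uniform distribution. The plan is therefore to rerun the Section~\ref{sec:known} ledger phase by phase, weighting every round by the $\mathcal F_{t-1}$-measurable indicator $\chi_{j,t}$. Phase $j$ may not be reached, so each boundary term will carry a factor $\rho_j$ in expectation.

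First, redo Lemma~\ref{lem:exact-bias}. Now $p_t-x_t=\alpha_j(2-c_{t-1})(e_{I_{t-1}}-p_t)$, so in \eqref{eq:add-subtract-residual-difference} the factor $2-\ell_{t,I_{t-1}}-\ell_{t,i}$ is replaced by $3-\ell_{t,I_{t-1}}-\ell_{t,i}$. The same add-and-subtract then produces $\varphi_2(z)\triangleq 2z-z^2/2$ in place of $\varphi$, and $\varphi_2$ is still Lipschitz with constant $2$ and bounded on $[0,1]$. Multiplying by $\chi_{j,t}$ and summing gives
\[
\E\Big[\sum_t\chi_{j,t}B_t\Big]=4\eta_jP_j-4\eta_jR_j+\alpha_j\,\E\Big[\sum_t\chi_{j,t}\inner{\phi^{(2)}_t}{p_{t-1}-p_t}\Big].
\]
Here $\phi^{(2)}_t$ is the vector with coordinates $\varphi_2(\ell_{t,i})$, and the tower-property step is valid because $\chi_{j,t}$ is measurable before round $t$. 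One caveat: Lemma~\ref{lem:exact-bias} uses $p_{t-1}$ through $\E_{t-1}[\varphi(\ell_{t,I_{t-1}})]$, so the sum should run only over rounds $t>s_j$, i.e., rounds where $t-1$ also belongs to phase $j$. That is exactly what $\chi_{j,t}$ encodes. Lemma~\ref{lem:logbarrier-stability} carries over verbatim with $\eta_j$, since $p_{t,i}\ge x_{t,i}/(1+3\alpha_j)$ and $\eta_j\le10^{-5}$ leave plenty of slack. It gives $\chi_{j,t}D_t\le0.6\eta_j\chi_{j,t}r_t^2$, the weighted bound \eqref{eq:weighted-response-main}, and \eqref{eq:direct-response} for the new $\phi^{(2)}$ up to constants.

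Next, I decompose $p_{t-1}-p_t=(x_{t-1}-x_t)+(b_{t-1}-b_t)$ as before and rerun Lemmas~\ref{lem:omd-iterate-contribution} and~\ref{lem:sampling-distribution-correction} restricted to phase $j$. For the $x$-part, the split $S_1+S_2$ with Cauchy--Schwarz gives $\order(\eta_j(\rho_j+R_j+\sqrt{R_jP'_j}))$. Here $P'_j$ is the expected phase-$j$ sum of $\|\ell_t-\ell_{t-1}\|_\infty^2$, which is \emph{not} $P_j$.

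The main obstacle is getting $P_j$, the movement along the previously played arm, rather than the full $\infty$-norm path length inside the phase. To handle it, I would avoid Cauchy--Schwarz against $\|\phi_t-\phi_{t-1}\|_\infty$. Instead I would bound $\inner{x_{t-1}}{(\phi_t-\phi_{t-1})^{\odot2}}$ through a coupling like the one in Lemma~\ref{lem:sampling-distribution-correction}. Since $p_t$ is a mixture putting weight at least $1-3\alpha_j$ on $x_t$, the expected $x_{t-1}$-weighted squared movement is controlled by $\E_t[r_t^2]$ plus $v_t^2$: bound $(\ell_{t,J}-\ell_{t-1,J})^2\le 3(\ell_{t,J}-c_{t-1})^2+3v_t^2+3(\ell_{t-1,J}-\ell_{t-1,I_{t-1}})^2$, and control the last term by $r_{t-1}$-type quantities, as in that lemma. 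This yields bounds of the form $\order(\eta_j(R_j+P_j)+\rho_j\eta_j)$ plus a one-round-shifted residual. For the $b$-part, I would redo the Abel summation \eqref{eq:recent-Abel} over the random phase window. Its endpoints cost $\order(\alpha_j\rho_j)$, because the phase start and end are stopping times and $b$ is bounded by $3\alpha_j$. In the recursion \eqref{eq:recent-recursion}, the term $\beta_{t,i}-\beta_{t,I_{t-1}}$ is again bounded via the coupling by $R_j$ and $P_j$-type quantities rather than by $\secondorder$. If the deterministic-$\phi_t$ Abel trick breaks because the window is random, I would instead bound $\inner{\phi^{(2)}_t}{b_{t-1}-b_t}$ directly by $\alpha_j\|b_{t-1}-b_t\|$ paired with $r_t$-type residuals.

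Finally, I collect the terms. The positive $R_j$ contributions have coefficients $\order(\eta_j^2)$ plus the $0.6\eta_j$ from $D_t$, all strictly below the $-4\eta_j R_j$ once $\eta_j\le10^{-5}$. Cross terms $\eta_j^2\sqrt{R_jP_j}$ are split by AM--GM. What remains is $\order(\eta_jP_j+\rho_j)$, which gives \eqref{eq:adaptive-phase-bound}.
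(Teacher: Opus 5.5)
Your top-level ledger matches the paper: redo Lemma~\ref{lem:exact-bias} with $\varphi_2$, split $p_{t-1}-p_t$ into an $x$-part and a $b$-part, use $D_t\le0.6\eta_jr_t^2$, and absorb the $R_j$ terms. Your $x$-part repair also works. The paper does it more directly: it uses $x_{t-1,i}\le(1+2\alpha_j)p_{t-1,i}$ and, with $d_{t,i}=\ell_{t,i}-\ell_{t-1,i}$, the identity $\E_{t-1}[v_t^2]=\sum_ip_{t-1,i}d_{t,i}^2$ under a predictable phase indicator.

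The genuine gap is in the $b$-part. After the Abel summation you must control $\sum_t\sum_i g_{t+1,i}\E[\chi_{j,t}b_{t,i}]$, where $g_{t+1}=\phi^{(2)}_{t+1}-\phi^{(2)}_t$ is deterministic. The argument of Section~\ref{sec:known} pairs these by H\"older, $\|g_{t+1}\|_\infty\|\E[\chi_{j,t}b_t]\|_1$, as in \eqref{eq:recent-holder-cauchy}. Fixing the $\beta_{t,i}-\beta_{t,I_{t-1}}$ term inside \eqref{eq:recent-recursion} only improves the bound on $\|\E[\chi_{j,t}b_t]\|_1$. The $\ell_\infty$ factor survives and produces a term of order $\eta_j^2Q_j^{\max}$, with $Q_j^{\max}=\E[\sum_t\chi_{j,t}\max_id_{t,i}^2]$. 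This is not $\order(\eta_jP_j+\rho_j)$: an arm with negligible probability can move a lot while contributing negligibly to $P_j$ and $R_j$, so the doubling statistic never sees it. Summing over phases then gives only $\eta_0^2\secondorder$, which is linear in $\secondorder$.

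Your fallback does not help either. Whenever $I_{t-1}\ne I_{t-2}$, $b_{t-1}-b_t$ has pathwise size of order $\alpha_j$. So any pathwise bound on $\alpha_j\inner{\phi^{(2)}_t}{b_{t-1}-b_t}$ is $\alpha_j^2$ times a first-order loss difference per round, and pairing it with $r_t$ cannot square it. This is the first-order relaxation the paper avoids; the cancellation you need exists only in expectation.

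The missing idea is a self-normalized Cauchy--Schwarz with arm-probability weights, $q_{j,t,i}=\E[\chi_{j,t}(\ind\{I_t=i\}+x_{t,i})]$. This places $\sum_{t,i}q_{j,t,i}g_{t+1,i}^2$ on the loss side. That sum can be charged to $P_j$ (plus $\eta_j^2R_j$ and $\rho_j$) after converting $x_t$-weights into previously-played-arm weights via $A\le2B+6(A-B)^2/(A+B)$. The other side is the ratio functional $\sum_i(\E[\chi_{j,t}b_{t,i}])^2/q_{j,t,i}$. The $\ell_1$ recursion of Section~\ref{sec:known} does not control it. It needs its own one-step recursion comparing current-arm and previous-arm versions, using subadditivity and monotonicity in the weight to pay for the phase boundaries.

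A secondary, fixable error is that your displayed identity for $\E[\sum_t\chi_{j,t}B_t]$ is not exact. $\chi_{j,t}$ records whether round $t-1$ triggered a restart, which depends on $I_{t-1}$. Hence $\chi_{j,t}$ is not $\mathcal F_{t-2}$-measurable, and in general $\E[\chi_{j,t}\varphi_2(\ell_{t,I_{t-1}})]\ne\E[\chi_{j,t}\inner{\phi^{(2)}_t}{p_{t-1}}]$. Passing through the predictable indicator $\ind\{\mathsf J_{t-1}=j\}$ costs $\order(\alpha_j\rho_j)$. The same care is needed when you apply \eqref{eq:direct-response} and the $b$-recursion under phase weights.
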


\begin{proof}
Fix a phase label $j\in\{0,1,\dots, T-1\}$. The additional $\alpha_j$ in the sampling weight
$\lambda_t=\alpha_j(2-c_{t-1})$ changes the scalar function used in
Lemma~\ref{lem:exact-bias}. Therefore, we define
$\varphi_2(z)\triangleq2z-z^2/2$, so that
$\varphi_2(z)=\varphi(z)+z$, and set
\begin{equation*}
 \phi_{2,t}\triangleq
 \left(\varphi_2(\ell_{t,1}),\ldots,\varphi_2(\ell_{t,K})\right).
\end{equation*}
We decompose $B_t$ at every noninitial round of phase $j$ by following the
same add-and-subtract calculation as in Lemma~\ref{lem:exact-bias}. Specifically, the definition of the sampling distribution gives
\begin{align}
 B_t
 &=\inner{p_t-x_t}{\ell_t}
 =\alpha_j(2-c_{t-1})\sum_{i=1}^Kp_{t,i}
 \left(\ell_{t,I_{t-1}}-\ell_{t,i}\right).
 \label{eq:adaptive-conditional-bias}
\end{align}
Direct calculation shows that 
 \begin{align}
 &4\eta_j\left((\ell_{t,i}-c_{t-1})^2-(\ell_{t,I_{t-1}}-c_{t-1})^2\right)
 +\alpha_j(2-c_{t-1})(\ell_{t,I_{t-1}}-\ell_{t,i})\notag\\
 &=8\eta_j(\ell_{t,I_{t-1}}-\ell_{t,i})\left(2-\frac{\ell_{t,I_{t-1}}+\ell_{t,i}}{2}\right)
 =\alpha_j\left(\varphi_2(\ell_{t,I_{t-1}})-\varphi_2(\ell_{t,i})\right).
 \label{eq:adaptive-scalar-identity}
\end{align}

Multiplying \eqref{eq:adaptive-scalar-identity} by $p_{t,i}$, summing
over $i\in[K]$, and using the fact that $\E_t\left[r_t^2\right]
 =\sum_{i=1}^Kp_{t,i}(\ell_{t,i}-c_{t-1})^2$ and $v_t^2=(\ell_{t,I_{t-1}}-c_{t-1})^2$ yield
\begin{align}
B_t
 &=4\eta_jv_t^2-4\eta_j\E_t\left[r_t^2\right]
 +\alpha_j\left(\varphi_2(\ell_{t,I_{t-1}})
 -\E_t\left[\varphi_2(\ell_{t,I_t})\right]\right).
 \label{eq:adaptive-exact-bias}
\end{align}
Since $\chi_{j,t}$ is $\mathcal F_{t-1}$-measurable, the tower property
gives $\E\left[\chi_{j,t}
 \E_t\left[\varphi_2(\ell_{t,I_t})\right]\right]
 =\E\left[\chi_{j,t}\varphi_2(\ell_{t,I_t})\right]$.
It remains to bound
\begin{equation*}
 \alpha_j\E\left[\sum_{t=2}^T\chi_{j,t}
 \left(\varphi_2(\ell_{t,I_{t-1}})
 -\varphi_2(\ell_{t,I_t})\right)\right].
\end{equation*}
Two deterministic-horizon steps from Section~\ref{sec:known} cannot be
obtained by simply multiplying each summand by $\chi_{j,t}$. First,
$\chi_{j,t}$ is $\mathcal F_{t-1}$-measurable but need not be
$\mathcal F_{t-2}$-measurable. Although
\[
\E[\ind\{I_{t-1}=i\}\mid\mathcal F_{t-2}]=p_{t-1,i},
\]
the indicator $\chi_{j,t}$ cannot in general be taken outside this
conditional expectation. Consequently, in general, $\E[\chi_{j,t}\ind\{I_{t-1}=i\}]
 \ne \E[\chi_{j,t}p_{t-1,i}].$

The second issue concerns the phasewise tuning of $\eta_j$ using the
path-length estimator. With
$d_{t,i}=\ell_{t,i}-\ell_{t-1,i}$, define
\begin{equation}
 Q_j^{\max}\triangleq
 \E\left[\sum_{t=2}^T
 \chi_{j,t}\max_{i\in[K]}d_{t,i}^2\right].
 \label{eq:adaptive-phase-max-energy}
\end{equation}
The phase indicators are disjoint, and hence
$\sum_jQ_j^{\max}\le\secondorder$. Even after handling the first issue
through stopping-boundary corrections, a direct phasewise application of
the maximum-norm argument from Section~\ref{sec:known} would give a bound
of the form
\begin{equation}
 \E\left[\sum_{t=2}^T\chi_{j,t}(D_t+B_t)\right]
 \le
 \order\left(
 \eta_jP_j+\eta_j^2Q_j^{\max}+\rho_j
 \right).
 \label{eq:adaptive-max-energy-ledger}
\end{equation}
Summing the additional term over phases yields only
$
 \sum_j\eta_j^2Q_j^{\max}
 \le
 \eta_0^2\sum_jQ_j^{\max}
 \le
 \eta_0^2\secondorder,
$
which depends linearly on $\secondorder$ and is therefore insufficient for
the desired bound. The phasewise estimator, however, controls
\begin{equation}
 P_j
 \triangleq
 \sum_{t=2}^T\sum_{i=1}^K
 \E[\chi_{j,t}\ind\{I_{t-1}=i\}]d_{t,i}^2.
 \label{eq:adaptive-sampled-energy}
\end{equation}
Therefore, to exploit the phasewise choice of
$\eta_j$, the analysis must replace $Q_j^{\max}$ by $P_j$.

The first issue can be handled by a one-round correction.  If
$A_{j,t}=\ind\{\mathsf J_t=j\}$, then
\begin{equation}
 A_{j,t-1}=\chi_{j,t}+\delta_{j,t},
 \qquad
 \E\left[\sum_{t=2}^T\delta_{j,t}\right]\le\rho_j,
 \label{eq:adaptive-predictable-extension}
\end{equation}
where $A_{j,t-1}$ is measurable before $I_{t-1}$ is drawn.  At the boundary
we use the OMD update computed before the subsequent reset.  This extension
is sufficient for the OMD part of Lemma~\ref{lem:omd-iterate-contribution}:
if $g_t=\phi_{2,t}-\phi_{2,t-1}$, then
\begin{equation}
 \E\left[\sum_{t=2}^TA_{j,t-1}
 \inner{x_{t-1}}{g_t^{\odot2}}\right]
 \le5(P_j+\rho_j).
 \label{eq:adaptive-omd-local-energy}
\end{equation}
Here we use $|g_{t,i}|\le2|d_{t,i}|$ by the 2-Lipschitzness of $\varphi_2(\cdot)$ and
$p_{t-1,i}\ge x_{t-1,i}/(1+2\alpha_j)$.

The second issue
requires a self-normalized replacement for the unavailable
maximum-coordinate bound.  Following the notation in Section~\ref{sec:known}, we set $b_t\triangleq p_t-x_t$.  With
$
 q_{j,t,i}\triangleq \E[\chi_{j,t}
 (\ind\{I_t=i\}+x_{t,i})],
$
Cauchy--Schwarz inequality gives that
\begin{align}
 &\left|\sum_{t=2}^{T-1}\sum_{i=1}^K
 g_{t+1,i}\E[\chi_{j,t}b_{t,i}]\right|\le
 \left(\sum_{t,i}
 \frac{\bigl(\E[\chi_{j,t}b_{t,i}]\bigr)^2}{q_{j,t,i}}
 \right)^{1/2}
 \left(\sum_{t,i}q_{j,t,i}g_{t+1,i}^2\right)^{1/2}.
 \label{eq:adaptive-self-normalized-obstruction}
\end{align}
With a more careful calculation, we bound the right-hand side terms as follows
\begin{align}
 \sum_{t,i}\frac{\bigl(\E[\chi_{j,t}b_{t,i}]\bigr)^2}{q_{j,t,i}}
 &\le\order\left(\eta_j^2R_j+\alpha_j^2P_j
 +\alpha_j\rho_j\right),\notag\\
 \sum_{t,i}q_{j,t,i}g_{t+1,i}^2
 &\le\order\left(\eta_j^2R_j+P_j+\rho_j\right).
 \label{eq:adaptive-matched-energy-bounds}
\end{align}
The rest of the proof uses
$p_t=x_t+b_t$ in the same order as
Lemmas~\ref{lem:omd-iterate-contribution} and
\ref{lem:sampling-distribution-correction}.  It gives, for every $j$,
\begin{align}
 &\alpha_j\E\left[\sum_{t=2}^T\chi_{j,t}
 \left(\varphi_2(\ell_{t,I_{t-1}})
 -\varphi_2(\ell_{t,I_t})\right)\right]
 \le2000\eta_j^2R_j+\order\left(\eta_j^2P_j+\rho_j\right).
 \label{eq:adaptive-phase-sampling-term}
\end{align}
The full proof of \eqref{eq:adaptive-phase-sampling-term} is deferred to Appendix~\ref{app:phasewise}.
The $\order(\rho_j)$ additional term bounds the initialization term and the
single stopping-boundary term of phase $j$.

It remains to control $D_t$. Since
$\lambda_t=\alpha_j(2-c_{t-1})\le2\alpha_j$, the modified sampling rule
satisfies $p_{t,i}\ge x_{t,i}/(1+2\alpha_j)$. Applying the direct calculation
in \eqref{eq:known-D-direct} with this lower bound gives
\begin{equation}
 D_t\le
 \frac{(1+2\alpha_j)^2}
 {2\left(1-\eta_j(1+2\alpha_j)\right)}
 \eta_jr_t^2
 \le0.6\eta_jr_t^2,
 \label{eq:adaptive-D-upper}
\end{equation}
where the last inequality follows from $\alpha_j=8\eta_j$ and
$\eta_j\le10^{-5}$.

Multiplying \eqref{eq:adaptive-exact-bias} by $\chi_{j,t}$, taking
expectations, and summing over $t$ now give
\begin{align*}
 \E\left[\sum_{t=2}^T\chi_{j,t}(D_t+B_t)\right]
 &\le\left(-3.4\eta_j+2000\eta_j^2\right)R_j
 +\order\left(\eta_jP_j+\rho_j\right).
\end{align*}
Since $2000\eta_j\le0.02$, the coefficient of $R_j$ is negative. Dropping
this term proves \eqref{eq:adaptive-phase-bound}, which is exactly the
phasewise counterpart of \eqref{eq:theorem-ledger-final}.
\end{proof}

We are now ready to prove Theorem~\ref{thm:unknown}.

\begin{proof}[Proof of Theorem~\ref{thm:unknown}]
Fix an arbitrary arm $i\in[K]$. On every realization for which phase $j$
is reached, its rounds form a contiguous interval. Sum
\eqref{eq:omd-one-step-preview} over this interval and compare with the
smoothed version of $e_i$ used in Section~\ref{sec:known}. The OMD
divergences telescope pathwise. Since the phase starts from the uniform
distribution, the comparator term is at most $A_K/\eta_j$ on the event
that the phase is reached. Moreover, $p_{s_j}=x_{s_j}$, so the initial
round has no sampling bias, and its remaining OMD and smoothing terms are
at most $3$. Lemma~\ref{lem:adaptive-phase-bound} therefore
gives
\begin{align}
\E\left[\sum_{t=1}^T\ind(\mathsf J_t=j)
 (\ell_{t,I_t}-\ell_{t,i})\right]\le
 \underbrace{\frac{A_K\rho_j}{\eta_j}}_{\text{comparator term}}
 +\underbrace{\E\left[\sum_{t=2}^T
 \chi_{j,t}(D_t+B_t)\right]}_{\text{noninitial rounds}}
 +3\rho_j
 \le\frac{A_K\rho_j}{\eta_j}
 +\order\left(\eta_jP_j+\rho_j\right).
 \label{eq:adaptive-phase-regret}
\end{align}
It remains to sum this bound over the phases. From
Lemma~\ref{lem:adaptive-phase-estimate} and
$H_j=4A_K/\eta_j^2$, we know that
\begin{equation}
 \eta_jP_j\le\sqrt{8A_K\rho_jP_j}.
 \label{eq:adaptive-movement-charge}
\end{equation}
The lower bound in the same lemma also gives
$4A_K\rho_{j+1}/\eta_j^2\le P_j$. Since
$\eta_{j+1}=\eta_j/2$, we obtain
$A_K\rho_{j+1}/\eta_{j+1}\le\eta_jP_j/2$.
Thus the movement term of phase $j$ also controls the comparator term of
phase $j+1$.

Next, we show that the number of phases is logarithmic. Indeed,
Lemma~\ref{lem:adaptive-estimator} gives
$\widehat v_t^2\le1/(4\eta_j)$ during phase $j$. A completed phase must
cross $H_j=4A_K/\eta_j^2$ and hence contains at least
$16A_K/\eta_j$ noninitial rounds. Let $M$ denote the random number of
phases reached. Pathwise, summing this lower bound over the first $M-1$
completed phases gives
$T\ge16A_K(2^{M-1}-1)/\eta_0$, and therefore
$M\le2\log_2(2T)$. Since
$M=\sum_{j=0}^{T-1}\ind(s_j\le T)$, taking expectations yields
\begin{equation}
 \sum_{j=0}^{T-1}\rho_j
 =\E[M]\le2\log_2(2T).
 \label{eq:adaptive-reached-phase-count}
\end{equation}

Finally, the phase indicators partition the horizon. Summing
\eqref{eq:adaptive-phase-regret}, bounding every comparator term except the
first one by the preceding-phase movement term as above, and using
\eqref{eq:adaptive-movement-charge} yield
\begin{align*}
 \Reg_T(i)
 &\le\frac{A_K}{\eta_0}
 +\order\left(\sum_{j=0}^{T-1}
 \sqrt{A_K\rho_jP_j}+\sum_{j=0}^{T-1}\rho_j\right)\\
 &\le\frac{A_K}{\eta_0}
 +\order\left(\sqrt{A_K\log(T)
 \sum_{j=0}^{T-1}P_j}+\log T\right)\\
 &\le\order\left(K\log(KT)+
 \sqrt{K\secondorder\log(KT)\log T}\right),
\end{align*}
where the second inequality uses Cauchy--Schwarz and the last one uses
Lemma~\ref{lem:adaptive-phase-estimate}, $A_K=64K\log(KT)$, and
$\eta_0=10^{-5}$. The bound holds for
every $i\in[K]$. Taking the maximum over $i$ proves
\eqref{eq:unknown-rate}.
\end{proof}

\section{Conclusion}

We study second-order path-length guarantees for adversarial multi-armed bandits under ordinary bandit feedback. Our main result shows that, when $Q_{\infty,2}$ is known, the recent-arm optimistic log-barrier algorithm of \citet{bubeck2019path} already achieves $O\left(K\log(KT)+\sqrt{K\log(KT)(1+Q_{\infty,2})}\right)$ expected regret against oblivious loss sequences, but requires a more refined analysis. We further remove the knowledge of $Q_{\infty,2}$ through an adaptive restart scheme. Together with the $\Omega(\sqrt{KQ_{\infty,2}})$ lower bound, these results establish the optimal dependence on the second-order path length up to logarithmic factors.

Several questions remain open. First, the additional $\sqrt{\log T}$ factor in the adaptive guarantee arises from the phase-based tuning argument, and it would be interesting to determine whether the oracle rate can be achieved without knowing $Q_{\infty,2}$. Second, our analysis relies essentially on the loss sequence being oblivious. Extending the second-order guarantee to adaptive adversaries would require a different treatment of the signed difference terms used in the proof. More broadly, it would be interesting to understand whether similar second-order path-length guarantees can be obtained in richer partial-information models, such as linear bandits or MDPs.

\paragraph{Acknowledgments.}
The authors used GPT-5.6 for assistance with writing and to  explore proof strategies. The authors have checked the arguments and take full responsibility for the content of the paper.

\bibliographystyle{plainnat}
\bibliography{references}

\newpage

\appendix
\section{Omitted Details in Section~\ref{sec:parameter-free}}
\label{app:phasewise}

In this section, we show the omitted details in Section~\ref{sec:parameter-free}. Specifically, we provide the proof of \eqref{eq:adaptive-phase-sampling-term}, which is the key to control the regret in each phase $j$. For notational convenience, since we will consider each phase index $j$ separately, in the proofs in this section, we will fix a deterministic phase label
$j\in\{0,\ldots,T-1\}$ and abbreviate
\begin{equation*}
 \begin{aligned}
 \chi_t&=\chi_{j,t},\qquad \rho=\rho_j,
 \qquad \eta=\eta_j,\qquad \alpha=\alpha_j,
 \qquad R=R_j,\qquad P=P_j,\\
 s&=s_j,\qquad
 \tau=\max\{u\in[T]:\mathsf J_u=j\},
 \qquad \bar\chi_t=\ind\{\mathsf J_{t-1}=j\},
 \qquad \delta_t=\bar\chi_t-\chi_t.
 \end{aligned}
\end{equation*}
Here $\tau=0$ if phase $j$ is not reached. We also recall that $\phi_{2,t,i}=\varphi_2(\ell_{t,i})$, $b_t=p_t-x_t$, $\alpha=8\eta$, and $\eta\le10^{-5}$. For the theorem statement, we still keep the index $j$.
The following lemma shows the bound on \eqref{eq:adaptive-phase-sampling-term}.

\begin{lemma}
\label{lem:phase-localization}
For every deterministic phase label $j\in\{0,\ldots,T-1\}$,
Algorithm~\ref{alg:adaptive} satisfies
\begin{align}
 &\alpha_j\E\left[\sum_{t=2}^T\chi_{j,t}
 \left(\varphi_2(\ell_{t,I_{t-1}})
 -\varphi_2(\ell_{t,I_t})\right)\right]
 \le2000\eta_j^2R_j+\order\left(\eta_j^2P_j+\rho_j\right).
 \label{eq:appendix-phase-localization}
\end{align}
\end{lemma}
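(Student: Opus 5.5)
The plan mirrors the deterministic-horizon argument of Lemmas~\ref{lem:omd-iterate-contribution} and \ref{lem:sampling-distribution-correction}, after a localization step that makes the phase indicator predictable at the right time. First I would replace $\chi_t$ in front of $\varphi_2(\ell_{t,I_{t-1}})$ by $\bar\chi_t=\ind\{\mathsf J_{t-1}=j\}$. This indicator is $\mathcal F_{t-2}$-measurable, so $\E[\bar\chi_t\varphi_2(\ell_{t,I_{t-1}})]=\E[\bar\chi_t\inner{\phi_{2,t}}{p_{t-1}}]$. The difference $\delta_t=\bar\chi_t-\chi_t$ is nonzero only at the first round of phase $j$ and at the round after its last round $\tau$. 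Since $|\varphi_2|\le 3/2$, it costs at most $\order(\alpha\rho)$, using \eqref{eq:adaptive-predictable-extension}. Similarly, $\E[\chi_t\varphi_2(\ell_{t,I_t})]=\E[\chi_t\inner{\phi_{2,t}}{p_t}]$. Reindexing $\sum_t\bar\chi_t\inner{\phi_{2,t}}{p_{t-1}}$ as $\sum_t\chi_{t}\inner{\phi_{2,t+1}}{p_t}$, again up to boundary terms of size $\order(\alpha\rho)$, reduces the target to
\[
\alpha\,\E\Big[\sum_t\chi_t\inner{\phi_{2,t}}{p_{t-1}-p_t}\Big]\quad\text{or equivalently}\quad\alpha\,\E\Big[\sum_t\chi_t\inner{\phi_{2,t+1}-\phi_{2,t}}{p_t}\Big]
\]
plus $\order(\rho)$. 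Here the OMD iterate $x_{t+1}$ at the stopping round is the pre-reset one, as agreed.

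Next I split $p_t=x_t+b_t$. For the $x$-part I follow Lemma~\ref{lem:omd-iterate-contribution}. I write $\phi_{2,t}=\phi_{2,t-1}+g_t$ and apply \eqref{eq:direct-response} to the term $\bar\chi_t\inner{\phi_{2,t-1}}{x_{t-1}-x_t}$. That inequality was stated for $\varphi$, but its proof only used $1$-Lipschitzness, so for the $2$-Lipschitz $\varphi_2$ and the weaker lower bound $p_{t,i}\ge x_{t,i}/(1+2\alpha)$ it holds with constant $16$. Together with \eqref{eq:adaptive-D-upper}, this gives $\order(\eta R+\eta\rho)$. The cross term $\bar\chi_t\inner{g_t}{x_{t-1}-x_t}$ is handled by \eqref{eq:weighted-response-main} and Cauchy--Schwarz in time. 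The local energy $\E[\sum_t\bar\chi_t\inner{x_{t-1}}{g_t^{\odot2}}]$ is bounded by $5(P+\rho)$ via \eqref{eq:adaptive-omd-local-energy}, rather than by $\secondorder$. This yields $\order(\eta\sqrt{R(P+\rho)})$. After multiplying by $\alpha$ and applying AM--GM, we get $\le c\,\eta^2R+\order(\eta^2P+\rho)$ with explicit constant $c$ well below $2000$.

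For the $b$-part I apply the Abel summation of \eqref{eq:recent-Abel} with weights $\chi_t$. Boundary terms are $\order(\alpha\rho)$, since $\|b_t\|_1\le4\alpha$. The main term is $\sum_t\inner{g_{t+1}}{\E[\chi_tb_t]}$, which I bound by the self-normalized Cauchy--Schwarz \eqref{eq:adaptive-self-normalized-obstruction}. For the second factor, $q_{j,t,i}g_{t+1,i}^2$ splits into two pieces. The piece $\E[\chi_t\ind\{I_t=i\}]g_{t+1,i}^2$ sums to at most $4(P+\rho)$, because it is exactly the previous-arm movement at round $t+1$, with index shifts absorbed by $\delta$. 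The piece $\E[\chi_tx_{t,i}]g_{t+1,i}^2$ is compared with the $p_t$-weighted version via $x_{t,i}\le(1+2\alpha)p_{t,i}$. For the first factor, I would redo the recursion \eqref{eq:recent-recursion} coordinatewise with $\beta_{t,i}=\lambda/(1+\lambda)$, $\lambda=\alpha(2-\ell_{t-1,i})$. This gives
\[
\E[\chi_tb_{t,i}]=\beta_{t,i}\bigl(\E[\bar\chi_tp_{t-1,i}]-\E[\chi_tx_{t,i}]\bigr)+\E[\chi_t(\beta_{t,i}-\beta_{t,I_{t-1}})x_{t,i}].
\]
I then divide squares by $q_{j,t,i}$. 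The $x$-movement piece gives $\eta^2R$, using \eqref{eq:known-relative-motion}. The $\beta$-difference piece, via the coupling with an auxiliary $J\sim x_t$, gives $\alpha^2(R+P)$, where the $\|\ell_t-\ell_{t-1}\|_\infty^2$ from before must be replaced by $v$-type quantities. The recursive piece contracts with factor $\order(\alpha^2)$. Finally, the product of the two square roots, times $\alpha$, is absorbed by AM--GM into $c'\eta^2R+\order(\eta^2P+\rho)$.

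The main obstacle is the first factor in \eqref{eq:adaptive-matched-energy-bounds}. There the unavailable $\|\cdot\|_\infty$ bound must be replaced by a self-normalized one in which every $d_{t,i}^2$ is weighted by the probability of having played $i$. In the $\beta$-difference term, $|\ell_{t-1,J}-\ell_{t-1,I_{t-1}}|$ must be related to $r_t$ and $v_t$ only. I would first try the decomposition $\ell_{t-1,J}-\ell_{t-1,I_{t-1}}=(\ell_{t,J}-c_{t-1})-d_{t,J}$. Its first summand is controlled by $\E_t[r_t^2]$ through the coupling. The $d_{t,J}$ summand must be charged to $q_{j,t,J}$ weighted by $x_t$, which is exactly why $q$ includes the $x_{t,i}$ term. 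If this fails, I would fall back on a cruder bound that accepts an $\order(\alpha\rho)$ loss per phase.
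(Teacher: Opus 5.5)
Your architecture is the paper's. You localize with $\bar\chi_t$, split $p_t=x_t+b_t$, handle the $x$-part as in Lemma~\ref{lem:phasewise-omd}, and handle the $b$-part by Abel summation plus the self-normalized Cauchy--Schwarz \eqref{eq:adaptive-self-normalized-obstruction}. Your second-factor argument is correct and simpler than the paper's. Since $\chi_t$ is $\mathcal F_{t-1}$-measurable and $x_{t,i}\le(1+2\alpha)p_{t,i}$ on $\{\chi_t=1\}$, you get $q_{j,t,i}\le(2+2\alpha)\E[\chi_t\ind\{I_t=i\}]$, hence $\sum_{t,i}q_{j,t,i}g_{t+1,i}^2\le 8(1+\alpha)(P_j+\rho_j)$ with no $R_j$ term. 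The gap is in the first factor, $\sum_{t,i}(\E[\chi_tb_{t,i}])^2/q_{j,t,i}$.

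Your recursion uses $\E[\chi_t\ind\{I_{t-1}=i\}]=\E[\bar\chi_tp_{t-1,i}]$. This is false: the two differ by $\E[\delta_t\ind\{I_{t-1}=i\}]$, because $\chi_t$ depends on $I_{t-1}$ through the stopping rule. Absorbing this as an $O(\alpha\rho)$ boundary term is fine for linear quantities such as the Abel endpoints, but not inside the ratio.
\begin{itemize}
\item The denominator $q_{j,t,i}=\E[\chi_t(\ind\{I_t=i\}+x_{t,i})]$ carries weight $\chi_t$, which vanishes on $\{\delta_t=1\}$.
\item Your recursive piece $\beta_{t,i}\E[\bar\chi_tb_{t-1,i}]$ carries weight $\bar\chi_t$, and the omitted correction carries weight $\delta_t$.
\item If phase $j$ stops at round $t-1$ with probability close to one, $q_{j,t,i}$ is arbitrarily small while neither piece is.
\end{itemize}
Only their combination $\beta_{t,i}\E[\chi_t(\ind\{I_{t-1}=i\}-x_{t-1,i})]$ is dominated by the $\chi_t$-weighted denominator. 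So you cannot split, square, and divide termwise. Likewise, the claimed contraction cannot come from comparing $q_{j,t,i}$ with $q_{j,t-1,i}$.

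The missing ingredient is a weight-change estimate for $\mathcal N(w;I,x)=\sum_i(\E[w(\ind\{I=i\}-x_i)])^2/\E[w(\ind\{I=i\}+x_i)]$. For $0\le w'\le w\le 1$ it gives $\mathcal N(w')\le\mathcal N(w)+6\E[w-w']$, together with subadditivity in $w$ and $\mathcal N(w)\le 2\E[w]$ for the entry round (Lemma~\ref{lem:appendix-ratio-functional}). With it, you keep weight $\chi_t$ in numerator and denominator. The $\alpha$-floor $\gamma_{t,i}\ge\alpha/(1+\alpha)$ lower-bounds the denominator and yields $\mathcal N(\chi_t;I_t,x_t)\le 12\alpha\,\mathcal N(\chi_t;I_{t-1},x_t)+\dots$. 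So the available contraction factor is $O(\alpha)$, not $O(\alpha^2)$. You then pass from $(\chi_t,x_t)$ to $(\chi_{t-1},x_{t-1})$ at total cost $O(\eta^2R_j+\rho_j)$ (Lemmas~\ref{lem:appendix-current-norm} and~\ref{lem:appendix-stopped-recursion}).

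The $d_{t,J}$ step you left open closes without new ideas, by the same $\bar\chi_t$ device you use for the local energy. On $\{\chi_t=1\}$ one has $x_{t,i}\le 1.01x_{t-1,i}\le 1.01(1+2\alpha)p_{t-1,i}$, and $\chi_t\le\bar\chi_t$ with $\bar\chi_t$ being $\mathcal F_{t-2}$-measurable. Hence $\E[\chi_t\sum_ix_{t,i}d_{t,i}^2]\le 1.01(1+2\alpha)\E[\bar\chi_tv_t^2]$, which sums to $O(P_j+\rho_j)$. The paper instead uses Proposition~\ref{prop:aux}. By contrast, your ``cruder'' fallback would bring in $\|d_t\|_\infty^2$, i.e.\ $Q_j^{\max}$, which the statement does not allow.
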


To prove this, following the analysis in Section~\ref{sec:known}, we need to first bound the summation of $\inner{\phi_{2,t}}{b_{t-1}-b_t}$ within each phase. The following lemma shows that this term is bounded by

\begin{lemma}
\label{lem:phasewise-recent-arm}
For every deterministic phase label $j\in\{0,\ldots,T-1\}$,
Algorithm~\ref{alg:adaptive} satisfies
\begin{equation}
 \alpha_j\E\left[\sum_{t=2}^T\chi_{j,t}
 \inner{\phi_{2,t}}{b_{t-1}-b_t}\right]
 \le1700\eta_j^2R_j+2500\eta_j^2P_j+20\rho_j.
 \label{eq:appendix-recent-arm}
\end{equation}
\end{lemma}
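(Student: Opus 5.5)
The plan is to follow the Abel-summation argument of Lemma~\ref{lem:sampling-distribution-correction}, localized to phase $j$. Because of the phase indicators, the telescoping must be done with predictable weights and a self-normalized Cauchy--Schwarz, as previewed in \eqref{eq:adaptive-self-normalized-obstruction}--\eqref{eq:adaptive-matched-energy-bounds}.

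\textbf{Step 1: localize the sum.} Write $\chi_t=\bar\chi_t-\delta_t$ for the term involving $b_{t-1}$. Since $|\inner{\phi_{2,t}}{b_{t-1}-b_t}|\le 2\cdot 2\alpha\cdot(3/2)$ and $\E[\sum_t\delta_t]\le\rho$ by \eqref{eq:adaptive-predictable-extension}, the correction costs $\order(\alpha^2\rho)\le\rho$.

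\textbf{Step 2: Abel summation.} After this correction the sum reads
\[
\sum_t\bigl(\bar\chi_t\inner{\phi_{2,t}}{b_{t-1}}-\chi_t\inner{\phi_{2,t}}{b_t}\bigr).
\]
Since $\bar\chi_{t+1}=\ind\{\mathsf J_t=j\}$ agrees with $\chi_t$ except at the phase's initial round $s$ (where $b_s=0$), summation by parts produces:
\begin{itemize}
\item boundary terms at $s$ and at $\tau$, each bounded by $\order(\alpha\rho)$;
\item a bulk term $\sum_{t}\inner{\phi_{2,t+1}-\phi_{2,t}}{\E[\chi_t b_t]}=\sum_{t,i}g_{t+1,i}\E[\chi_tb_{t,i}]$, using obliviousness to pull out the deterministic $g_{t+1}$.
\end{itemize}
I apply \eqref{eq:adaptive-self-normalized-obstruction} to the bulk term, with weights $q_{j,t,i}$.

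\textbf{Step 3: the energy factor $\sum q g^2$.} Expand $q_{j,t,i}g_{t+1,i}^2=\E[\chi_t(\ind\{I_t=i\}+x_{t,i})]g_{t+1,i}^2$ and use $|g_{t+1,i}|\le2|d_{t+1,i}|$.
\begin{itemize}
\item The $\ind\{I_t=i\}$ part is $4\E[\chi_t v_{t+1}^2]$. This is $\le 4(P+\rho)$, because $\chi_t$ differs from $\chi_{t+1}$ only at the phase's last round.
\item The $x_{t,i}$ part is handled exactly as in \eqref{eq:adaptive-omd-local-energy}, via $x_{t,i}\le(1+2\alpha)p_{t,i}$, giving $\order(P+\rho)$.
\end{itemize}
The $\eta^2R$ term allowed in \eqref{eq:adaptive-matched-energy-bounds} is not even needed here.

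\textbf{Step 4 (main obstacle): the bias factor $\sum(\E[\chi_tb_{t,i}])^2/q_{j,t,i}$.} I redo the recursion \eqref{eq:recent-recursion} with $\chi_t$ inside the expectations and $\beta_{t,i}=\lambda/(1+\lambda)$ for $\lambda=\alpha(2-\ell_{t-1,i})$. This gives
\[
\E[\chi_tb_{t,i}]=\beta_{t,i}\E[\chi_t(\ind\{I_{t-1}=i\}-x_{t,i})]+\E[\chi_t(\beta_{t,i}-\beta_{t,I_{t-1}})x_{t,i}].
\]
The problem is that $\chi_t$ is not $\mathcal F_{t-2}$-measurable, so $\E[\chi_t\ind\{I_{t-1}=i\}]$ cannot be replaced by $p_{t-1,i}$. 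I therefore do not recurse on the first term. Instead I split $\ind\{I_{t-1}=i\}-x_{t,i}$ into three pieces:
\begin{itemize}
\item \emph{Centered piece} $\ind\{I_{t-1}=i\}-p_{t-1,i}$, whose weighted squared size is controlled via $\beta\le2\alpha$ and $q$.
\item \emph{Previous bias} $b_{t-1,i}$, handled through $\bar\chi_t$, giving a contraction with factor $\order(\alpha^2)$.
\item \emph{OMD movement} $x_{t-1,i}-x_{t,i}$, contributing $\order(\eta^2 R)$ via \eqref{eq:known-relative-motion} and \eqref{eq:adaptive-D-upper}.
\end{itemize}
The Lipschitz term contributes $\alpha^2(R+P)$ through the coupling argument of Lemma~\ref{lem:sampling-distribution-correction}. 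The key improvement over that lemma is that dividing by $q_{j,t,i}\ge\E[\chi_t\ind\{I_t=i\}]$, rather than taking $\ell_1$ norms, replaces the $\|\cdot\|_\infty$ movement by the sampled movement $P$.

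The self-normalization is what makes this work. Jensen's inequality in the form $(\E[\chi_t x_{t,i}y])^2\le\E[\chi_tx_{t,i}]\E[\chi_tx_{t,i}y^2]$, with the denominator absorbing $\E[\chi_tx_{t,i}]$, yields the local quantities. I expect this step, and in particular getting only $\alpha\rho$ (not $\alpha^2 T$) from the centered piece, to be the hardest. If the direct bound fails, I would sum the centered piece as a martingale increment against $g_{t+1}$ instead of squaring it.

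\textbf{Step 5: combine.} Combining via Cauchy--Schwarz and AM--GM, and multiplying by $\alpha=8\eta$, gives
\[
\alpha\sqrt{(\eta^2R+\alpha^2P+\alpha\rho)(P+\rho)}\le \order(\eta^2R+\eta^2P+\rho),
\]
with constants tracked to match $1700$, $2500$, and $20$.
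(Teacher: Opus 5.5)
\paragraph{Verdict: genuine gap in Step 4.}
Your Steps 1--3 and 5 are fine. Step 3 is even a bit cleaner than the paper's: because $\chi_t$ is $\mathcal F_{t-1}$-measurable, $\E[\chi_t x_{t,i}]\le(1+2\alpha)\E[\chi_t\ind\{I_t=i\}]$, so the energy factor is $O(P+\rho)$ with no $\eta^2R$ term. The gap is Step 4, which is the whole difficulty of the lemma and which you leave open.

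\paragraph{Why the split in Step 4 fails.}
You split $\E[\chi_t(\ind\{I_{t-1}=i\}-x_{t,i})]$ into a centered piece, a previous bias, and an OMD movement, and divide each by the fixed denominator $q_{j,t,i}=\E[\chi_t(\ind\{I_t=i\}+x_{t,i})]$. The pieces are not individually controlled by that denominator.
\begin{itemize}
\item The centered piece equals $-\E[\delta_t(\ind\{I_{t-1}=i\}-p_{t-1,i})]$. It lives on $\{\delta_t=1\}$, which is disjoint from $\{\chi_t=1\}$.
\item Contracting the previous-bias piece would require comparing $q_{j,t,i}$ with $q_{j,t-1,i}$. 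That comparison fails because $\chi_t$ is correlated with $I_{t-1}$.
\end{itemize}
Concretely, a phase can end at round $t-1$ only if $I_{t-1}=I_{t-2}$, since otherwise $\widehat v_{t-1}^2=0$. Nothing in your argument excludes active paths on which $I_{t-2}=i$, $x_{t-1,i}=\xi$ is tiny, and the threshold is crossed exactly when $I_{t-1}=i$. Then:
\begin{itemize}
\item On $\{\chi_t=1\}$ we have $I_{t-1}\neq i$, so $q_{j,t,i}\le 2.02\,\xi\,\E[\chi_t]$.
\item The centered piece is about $-\gamma\E[\chi_t]$ and the previous-bias piece about $+\gamma\E[\chi_t]$, with $\gamma=\Theta(\alpha)$.
\item After multiplying by $\beta_{t,i}^2$, each squared piece over $q_{j,t,i}$ is of order $\alpha^4\E[\chi_t]/\xi$, unbounded as $\xi\to0$.
\item Yet the pieces cancel, and the true term is only $O(\alpha^2\xi\E[\chi_t])$.
\end{itemize}
Your fallback, treating the centered piece linearly against $g_{t+1}$, removes only the first blow-up. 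The previous-bias piece stays in the self-normalized sum with the same unbounded ratio. Even without stopping, the contraction factor would be $O(\alpha)$, not $O(\alpha^2)$, since $\ind\{I_{t-1}=i\}$ enters $q_{j,t,i}$ only with weight $\gamma$.

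\paragraph{The missing idea.}
Never separate a numerator from its matching denominator. Define $\mathcal N(w;I,x)=\sum_i(\E[w(\ind\{I=i\}-x_i)])^2/\E[w(\ind\{I=i\}+x_i)]$; the bias factor is exactly $\mathcal N(\chi_t;I_t,x_t)$. The paper then proceeds in three steps.
\begin{enumerate}
\item It uses the pathwise bound $p_{t,i}+x_{t,i}\ge\gamma_{t,I_{t-1}}(\ind\{I_{t-1}=i\}+x_{t,i})$ to get $\mathcal N(\chi_t;I_t,x_t)\le13\alpha\,\mathcal N(\chi_t;I_{t-1},x_t)+8\alpha^2\E[\chi_t(r_t^2+v_t^2)]$ (Lemma~\ref{lem:appendix-current-norm}).
\item It replaces $x_t$ by $x_{t-1}$ via OMD stability.
\item It changes the weight from $\chi_t$ to $\chi_{t-1}$ using Lemma~\ref{lem:appendix-ratio-functional}: $\mathcal N(w')\le\mathcal N(w)+6\E[w-w']$ for $w'\le w$, plus subadditivity in $w$. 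This costs $O(\E[|\chi_t-\chi_{t-1}|])$, which sums to $O(\rho)$.
\end{enumerate}
At weight $\chi_{t-1}$, which is $\mathcal F_{t-2}$-measurable, $\mathcal N(\chi_{t-1};I_{t-1},x_{t-1})$ is just the previous round's bias factor. So no centered piece ever appears, and the recursion closes (Lemma~\ref{lem:appendix-stopped-recursion}). Self-normalization is what makes the weight change cheap: shrinking $w$ shrinks numerator and denominator together. Your fixed $q_{j,t,i}$ cannot absorb mass living on $\{\delta_t=1\}$ or on the excluded event $\{I_{t-1}=i\}$.

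\paragraph{A smaller repairable point.}
Your Lipschitz term does not follow from the coupling argument of Lemma~\ref{lem:sampling-distribution-correction} as written, because that argument uses $\|\ell_t-\ell_{t-1}\|_\infty^2$. You need an extra step turning $\sum_i\E[\chi_tx_{t,i}](\ell_{t,i}-\ell_{t-1,i})^2$ into $O(P+\rho)$. This is easy, e.g.\ via $\chi_tx_{t,i}\le1.01(1+2\alpha)\bar\chi_tp_{t-1,i}$, or via Proposition~\ref{prop:aux} as the paper does.
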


\begin{proof}
If phase $j$ is not reached, the left-hand side of
\eqref{eq:appendix-recent-arm} is zero. On the event that it is reached,
$s$ and $\tau$ are its first and last rounds. Since $b_s=0$, the left-hand side can be written as
\begin{equation*}
 \sum_{t=2}^T\chi_{t}
 \inner{\phi_{2,t}}{b_{t-1}-b_t}
 = -\inner{\phi_{2,\tau}}{b_\tau} + 
 \sum_{t=2}^{T-1}\chi_{t+1}
 \inner{\phi_{2,t+1}-\phi_{2,t}}{b_t}.
\end{equation*}
Since $0\le\varphi_2(z)\le3/2$ on $[0,1]$, we have
$\|\phi_{2,\tau}\|_\infty\le3/2$. Moreover, the sampling rule gives
$\|b_\tau\|_1\le4\alpha$. Since phase $j$ is reached with probability
$\rho$, we have
\begin{align*}
 &\alpha\E\left[\sum_{t=2}^T\chi_t
 \inner{\phi_{2,t}}{b_{t-1}-b_t}\right]\\
 &\le 6\alpha^2\rho + \alpha\E\left[
 \sum_{t=2}^{T-1}(\chi_{t+1}-\chi_t)
 \inner{\phi_{2,t+1}-\phi_{2,t}}{b_t}\right]+
 \alpha\left|
 \sum_{t=2}^{T-1}\sum_{i=1}^K
 (\phi_{2,t+1,i}-\phi_{2,t,i})
 \E[\chi_tb_{t,i}]
 \right| \\
 &\leq 12\alpha^2\rho + \alpha\left|
 \sum_{t=2}^{T-1}\sum_{i=1}^K
 (\phi_{2,t+1,i}-\phi_{2,t,i})
 \E[\chi_tb_{t,i}]
 \right|,
\end{align*}
where the last inequality follows from the fact that $\left|
 \inner{\phi_{2,t+1}-\phi_{2,t}}{b_t}
 \right|
 \le
 \|\phi_{2,t+1}-\phi_{2,t}\|_\infty\|b_t\|_1
 \le\frac32\cdot4\alpha
 =6\alpha$ and
\begin{align*}
 \left|
 \E\left[\sum_{t=2}^{T-1}(\chi_{t+1}-\chi_t)
 \inner{\phi_{2,t+1}-\phi_{2,t}}{b_t}\right]
 \right|&=
 \left|
 \E\left[\sum_{t=2}^{T-1}\chi_t(1-\chi_{t+1})
 \inner{\phi_{2,t+1}-\phi_{2,t}}{b_t}\right]
 \right|\\
 &\le6\alpha\E\left[
 \sum_{t=2}^{T-1}\chi_t(1-\chi_{t+1})\right]
 \le6\alpha\rho,
\end{align*}
where the equality uses $b_s=0$, and the last inequality follows from
$\sum_{t=2}^{T-1}\chi_t(1-\chi_{t+1})
 \le\ind\{s\le T\}.$ Further applying Cauchy--Schwarz inequality gives
\begin{align*}
 &\alpha\left|
 \sum_{t=2}^{T-1}\sum_{i=1}^K
 (\phi_{2,t+1,i}-\phi_{2,t,i})
 \E[\chi_tb_{t,i}]
 \right|\\
 &\le
 \alpha\left(
 \sum_{t=2}^{T-1}\sum_{i=1}^K
 \frac{\bigl(\E[\chi_tb_{t,i}]\bigr)^2}
 {\E[\chi_t(\ind\{I_t=i\}+x_{t,i})]}
 \right)^{1/2}
 \left(
 \sum_{t=2}^{T-1}\sum_{i=1}^K
 \E[\chi_t(\ind\{I_t=i\}+x_{t,i})]
 (\phi_{2,t+1,i}-\phi_{2,t,i})^2
 \right)^{1/2} \\
 &\leq 2\sum_{t=2}^{T-1}\sum_{i=1}^K
 \frac{\bigl(\E[\chi_tb_{t,i}]\bigr)^2}
 {\E[\chi_t(\ind\{I_t=i\}+x_{t,i})]} + \frac{\alpha^2}{8}\sum_{t=2}^{T-1}\sum_{i=1}^K
 \E[\chi_t(\ind\{I_t=i\}+x_{t,i})]
 (\phi_{2,t+1,i}-\phi_{2,t,i})^2.
\end{align*}
Because $\chi_t$ is measurable before $I_t$ is drawn, we have $\E[\chi_tb_{t,i}]
 =
 \E[\chi_t(\ind\{I_t=i\}-x_{t,i})].$ Therefore, using Lemma~\ref{lem:appendix-stopped-recursion}, the first term is bounded as
\begin{align*}
2
 \sum_{t=2}^{T-1}\sum_{i=1}^K
 \frac{\bigl(\E[\chi_tb_{t,i}]\bigr)^2}
 {\E[\chi_t(\ind\{I_t=i\}+x_{t,i})]}
  =2
 \sum_{t=2}^{T-1}\sum_{i=1}^K
 \frac{\bigl(\E[\chi_t(\ind\{I_t=i\}-x_{t,i})]\bigr)^2}
 {\E[\chi_t(\ind\{I_t=i\}+x_{t,i})]}\le1030\eta^2R+18\alpha^2P+680\alpha\rho.
\end{align*}
Lemma~\ref{lem:appendix-squared-differences} further bounds the second term as follows:
\begin{align*}
 \frac{\alpha^2}{8}\sum_{t=2}^{T-1}\sum_{i=1}^K
 \E[\chi_t(\ind\{I_t=i\}+x_{t,i})]
 (\phi_{2,t+1,i}-\phi_{2,t,i})^2\leq \frac{\alpha^2}{8}
 (12360\eta^2R+13P+200\rho).
\end{align*}
Combining these two bounds, we have
\begin{align*}
 \alpha\E\left[\sum_{t=2}^T\chi_t
 \inner{\phi_{2,t}}{b_{t-1}-b_t}\right]&\le12\alpha^2\rho
 +2(515\eta^2R+9\alpha^2P+340\alpha\rho)+\frac{\alpha^2}{8}
 (12360\eta^2R+13P+200\rho)\\
 &\le1700\eta^2R+2500\eta^2P+20\rho,
\end{align*}
where the last inequality uses $\alpha=8\eta$ and $\eta\le10^{-5}$.
\end{proof}

Next, we control the summation of $\inner{\phi_{2,t}}{x_{t-1}-x_t}$ within the phase.

\begin{lemma}
\label{lem:phasewise-omd}
For every deterministic phase label $j\in\{0,\ldots,T-1\}$,
Algorithm~\ref{alg:adaptive} satisfies
\begin{equation}
 \alpha_j\E\left[\sum_{t=2}^T\chi_{j,t}
 \inner{\phi_{2,t}}{x_{t-1}-x_t}\right]
 \le100\eta_j^2R_j+20\eta_j^2P_j+4\rho_j.
 \label{eq:appendix-omd-final}
\end{equation}
\end{lemma}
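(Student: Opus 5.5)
I will follow the proof of Lemma~\ref{lem:omd-iterate-contribution} phase by phase. The only new ingredient is that the time sum is localized by $\chi_t$. First, decompose $\phi_{2,t}=\phi_{2,t-1}+g_t$ with $g_t=\phi_{2,t}-\phi_{2,t-1}$. This splits the left-hand side of \eqref{eq:appendix-omd-final} into
\begin{equation*}
S_1=\alpha\E\Bigl[\sum_{t=2}^T\chi_t\inner{\phi_{2,t-1}}{x_{t-1}-x_t}\Bigr],\qquad
S_2=\alpha\E\Bigl[\sum_{t=2}^T\chi_t\inner{g_t}{x_{t-1}-x_t}\Bigr].
\end{equation*}
To handle the predictability mismatch, I will replace $\chi_t$ by $\bar\chi_t=\ind\{\mathsf J_{t-1}=j\}$, using $\chi_t=\bar\chi_t-\delta_t$. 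Since $\bar\chi_t$ is measurable before $I_{t-1}$ is drawn, the update $x_{t-1}\to x_t$ performed in round $t-1$ (with the pre-reset convention at the boundary) is a genuine phase-$j$ OMD step conditionally on $\mathcal F_{t-2}$. The $\delta_t$ part is nonzero at most once per reached phase, namely at the first round $s$, where $x_{s-1}$ belongs to the previous phase. Its contribution is at most $\alpha\cdot\tfrac32\cdot\|x_{s-1}-x_s\|_1\le3\alpha$ per reached phase, giving $\order(\alpha\rho)\le\rho$ after expectation.

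For $S_1$ with $\bar\chi_t$, I will redo Lemma~\ref{lem:logbarrier-stability}'s inequality \eqref{eq:direct-response} for the phase-$j$ regularizer $\Psi_j$ and the modified sampler. The KKT path computation is unchanged, with two exceptions: $\varphi$ is replaced by the $2$-Lipschitz $\varphi_2$, and the bound $p_{t,i}\ge x_{t,i}/(1+8\eta)$ becomes $p_{t,i}\ge x_{t,i}/(1+2\alpha)$. This yields $\E_{t-1}[\inner{\phi_{2,t-1}}{x_{t-1}-x_t}]\le C\,\E_{t-1}[D_{t-1}]$ with $C\approx16$. Combining it with \eqref{eq:adaptive-D-upper} gives
\begin{equation*}
S_1\le 16\alpha\cdot0.6\eta\,\E\Bigl[\sum_t\bar\chi_t r_{t-1}^2\Bigr]+\rho .
\end{equation*}
The sum $\sum_t\bar\chi_t r_{t-1}^2$ equals $R$ plus the initial-round term $r_s^2\le1$, so $S_1\le 80\eta^2R+\order(\eta^2\rho)$.

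For $S_2$, I will use \eqref{eq:weighted-response-main} (again with $\Psi_j$) and Cauchy--Schwarz over time and expectation:
\begin{equation*}
|S_2|\le1.5\alpha\sqrt{\eta\,\E\bigl[\textstyle\sum_t\bar\chi_tD_{t-1}\bigr]}\sqrt{\E\bigl[\textstyle\sum_t\bar\chi_t\inner{x_{t-1}}{g_t^{\odot2}}\bigr]}.
\end{equation*}
The first factor is at most $\sqrt{0.6\eta^2(R+\rho)}$. The second factor is exactly what \eqref{eq:adaptive-omd-local-energy} controls, giving at most $5(P+\rho)$.

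The key point, and the main obstacle, is proving \eqref{eq:adaptive-omd-local-energy} itself. The $\max_i d_{t,i}^2$ bound is unusable here, since $P$ only weights $d_{t,i}^2$ by $\Pr(I_{t-1}=i)$. I would use that $\bar\chi_t$ is $\mathcal F_{t-2}$-measurable and that $x_{t-1,i}\le(1+2\alpha)p_{t-1,i}$. Together these give
\begin{equation*}
\E[\bar\chi_t x_{t-1,i}]g_{t,i}^2\le4(1+2\alpha)\,\E[\bar\chi_t\ind\{I_{t-1}=i\}]\,d_{t,i}^2 .
\end{equation*}
Summing over $t$ and $i$ and converting $\bar\chi_t$ back to $\chi_t$ gives at most $5P+5\rho$, because the $\delta_t$ rounds contribute $\|g\|_\infty^2\le4$ once per reached phase. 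Then AM--GM gives
\begin{equation*}
|S_2|\le1.5\cdot8\eta\sqrt{0.6\eta^2(R+\rho)\cdot5(P+\rho)}\le \eta^2\bigl(20R+20P\bigr)+\order(\rho).
\end{equation*}
Adding $S_1$ and $S_2$ yields $100\eta^2R+20\eta^2P+4\rho$, where the constants absorb $\eta\le10^{-5}$. I expect the delicate bookkeeping to lie in the boundary rounds $s$ and $\tau$, which must stay charged to $\rho$, and in checking that the $8\E[D]$ response bound survives with the floor-modified $p_t$.
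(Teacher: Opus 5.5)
Your proposal is correct and follows the paper's proof essentially step for step. It uses the same predictable replacement $\chi_t\to\bar\chi_t$ with the pre-reset OMD iterate, and the same split into a $\phi_{2,t-1}$ part, handled by the $16\,\E_{t-1}[D_{t-1}]$ response bound, and a $g_t$ part, handled by \eqref{eq:weighted-response-main} plus Cauchy--Schwarz. It also proves the local-energy bound the same way, via the $\mathcal F_{t-2}$-measurability of $\bar\chi_t$ and $x_{t-1,i}\le(1+2\alpha)p_{t-1,i}$.

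One small slip: $\delta_t=\bar\chi_t-\chi_t$ is nonzero only at round $\tau+1$, immediately after phase $j$ ends, not at $t=s$, where both indicators vanish. The same $3\alpha$-per-reached-phase bound applies at $\tau+1$, so your estimate is unaffected.
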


\begin{proof}
If phase $j$ is not reached, the left-hand side of
\eqref{eq:appendix-omd-final} is zero. On $\{\bar\chi_t=1\}$, let $\bar x_t$ be the OMD update computed at the end of
round $t-1$, before a possible phase reset; when $\bar\chi_t=0$, set
$\bar x_t=x_{t-1}$. Thus $\bar x_t=x_t$ whenever $\chi_t=1$. Define
\begin{equation*}
 S_x\triangleq\E\left[\sum_{t=2}^T\chi_t\inner{\phi_{2,t}}{x_{t-1}-x_t}\right]
\end{equation*}
and
\begin{equation*}
 \bar S_x\triangleq\E\left[\sum_{t=2}^T\bar\chi_t\inner{\phi_{2,t}}{x_{t-1}-\bar x_t}\right].
\end{equation*}
Since $\bar\chi_t=\chi_t+\delta_t$ and $\bar x_t=x_t$ on
$\{\chi_t=1\}$, their exact difference is
\begin{equation*}
 S_x-\bar S_x=-\E\left[\sum_{t=2}^T\delta_t\inner{\phi_{2,t}}{x_{t-1}-\bar x_t}\right].
\end{equation*}
Using $\|\phi_{2,t}\|_\infty\le3/2$,
$\E[\sum_{t=2}^T\delta_t]\le\rho$, we have $$ |S_x-\bar S_x|
\le\E\left[\sum_{t=2}^T\delta_t
 \|\phi_{2,t}\|_\infty\|x_{t-1}-\bar x_t\|_1\right]
 \le3\E\left[\sum_{t=2}^T\delta_t\right]
 \le3\rho.$$

Following the decomposition \eqref{eq:central-S1}--\eqref{eq:central-S2},
write $g_t=\phi_{2,t}-\phi_{2,t-1}$ and define
\begin{equation*}
 \bar S_{x,1}\triangleq\E\left[\sum_{t=2}^T\bar\chi_t\inner{\phi_{2,t-1}}{x_{t-1}-\bar x_t}\right]
\end{equation*}
and
\begin{equation*}
 \bar S_{x,2}\triangleq\E\left[\sum_{t=2}^T\bar\chi_t\inner{g_t}{x_{t-1}-\bar x_t}\right].
\end{equation*}
We have $\bar S_x=\bar S_{x,1}+\bar S_{x,2}$.

We first control $\bar S_{x,1}$ by making the comparison with
Section~\ref{sec:known} explicit. Equation~\eqref{eq:known-coordinate-response}
is unchanged. In the passage from \eqref{eq:known-paired-response-upper} to
\eqref{eq:known-response-upper}, replacing the $1$-Lipschitz function
$\varphi$ by the $2$-Lipschitz function $\varphi_2$ doubles the coefficient
$2.2$ to $4.4$. Equation~\eqref{eq:known-response-lower} is unchanged. The
phase-sampler bound $p_{t-1,i}\ge x_{t-1,i}/(1+2\alpha)$ and
$\eta\le10^{-5}$ also preserve the $0.01$ coordinatewise stability used in
\eqref{eq:known-branch-weight-comparison}. Therefore the final comparison
used to prove \eqref{eq:direct-response} becomes
\begin{equation*}
 \E_{t-1}[\inner{\phi_{2,t-1}}{x_{t-1}-\bar x_t}]\le\frac{4.4}{0.94}\E_{t-1}[D_{t-1}+D_\Psi(\bar x_t,x_{t-1})]\le16\E_{t-1}[D_{t-1}].
\end{equation*}
Here the second inequality uses
$D_\Psi(\bar x_t,x_{t-1})\le2D_{t-1}$, as shown in the last step of the
proof of \eqref{eq:direct-response}. Further, since
$\bar\chi_t=\ind\{\mathsf J_{t-1}=j\}$ is $\mathcal F_{t-2}$-measurable, the
tower property and \eqref{eq:adaptive-D-upper} give
\begin{align*}
 \bar S_{x,1} &=\E\left[\sum_{t=2}^T\bar\chi_t\E_{t-1}\left[\inner{\phi_{2,t-1}}{x_{t-1}-\bar x_t}\right]\right]\tag{$\bar{\chi}_t$ is $\mathcal F_{t-2}$-measurable}\\
 &\le16\E\left[\sum_{t=2}^T\bar\chi_t\E_{t-1}[D_{t-1}]\right]\\
 &\le9.6\eta\E\left[\sum_{t=2}^T\bar\chi_t r_{t-1}^2\right]\tag{using \eqref{eq:adaptive-D-upper}}\\
 &\le9.6\eta\E\left[\ind\{s\le T\}+\sum_{t=2}^T\chi_t r_t^2\right]\\
 &=9.6\eta(R+\rho).
\end{align*}

We next control $\bar S_{x,2}$. Set $q_t\triangleq\inner{x_{t-1}}{g_t^{\odot2}}.$ Applying \eqref{eq:weighted-response-main} on round $t-1$ with $z=g_t$,
and then applying \eqref{eq:adaptive-D-upper}, gives
\begin{equation*}
 \left|\inner{g_t}{x_{t-1}-\bar x_t}\right|\le1.5\sqrt{\eta D_{t-1}}\sqrt{q_t}\le1.5\sqrt{0.6}\eta|r_{t-1}|\sqrt{q_t}\le1.2\eta|r_{t-1}|\sqrt{q_t}.
\end{equation*}
Therefore, we have
\begin{align}
 |\bar S_{x,2}|
 &\le1.2\eta\E\left[\sum_{t=2}^T\bar\chi_t|r_{t-1}|\sqrt{q_t}\right]\notag\\
 &\le1.2\eta\sqrt{\E\left[\sum_{t=2}^T\bar\chi_t r_{t-1}^2\right]\E\left[\sum_{t=2}^T\bar\chi_tq_t\right]}\tag{Cauchy--Schwarz}\\
 &\le1.2\eta\sqrt{(R+\rho)\E\left[\sum_{t=2}^T\bar\chi_tq_t\right]}.
 \label{eq:appendix-omd-cauchy}
\end{align}

The $2$-Lipschitz property of $\varphi_2$ gives $|g_{t,i}|=\left|\varphi_2(\ell_{t,i})-\varphi_2(\ell_{t-1,i})\right|\le2|\ell_{t,i}-\ell_{t-1,i}|$ and $ g_{t,i}^2\le4(\ell_{t,i}-\ell_{t-1,i})^2.$
Together with $x_{t-1,i}\le(1+2\alpha)p_{t-1,i}$, this yields
\begin{align*}
 q_t
 &=\inner{x_{t-1}}{g_t^{\odot2}}
 =\sum_{i=1}^Kx_{t-1,i}g_{t,i}^2\le4\sum_{i=1}^Kx_{t-1,i}(\ell_{t,i}-\ell_{t-1,i})^2\le4(1+2\alpha)\sum_{i=1}^Kp_{t-1,i}(\ell_{t,i}-\ell_{t-1,i})^2.
\end{align*}
Since $\bar\chi_t$ is $\mathcal F_{t-2}$-measurable, we have
\begin{align}
 \E\left[\sum_{t=2}^T\bar\chi_tq_t\right]
 &\le4(1+2\alpha)\E\left[\sum_{t=2}^T\bar\chi_t
 \sum_{i=1}^Kp_{t-1,i}(\ell_{t,i}-\ell_{t-1,i})^2\right]\notag\\
 &=4(1+2\alpha)\E\left[\sum_{t=2}^T\bar\chi_t
 \E_{t-1}\left[(\ell_{t,I_{t-1}}-\ell_{t-1,I_{t-1}})^2\right]\right]\notag\\
 &=4(1+2\alpha)\E\left[\sum_{t=2}^T\bar\chi_t
 (\ell_{t,I_{t-1}}-\ell_{t-1,I_{t-1}})^2\right]
 \tag{$\bar\chi_t$ is $\mathcal F_{t-2}$-measurable}\\
 &=4(1+2\alpha)\left(
 P+\E\left[\sum_{t=2}^T\delta_t
 (\ell_{t,I_{t-1}}-\ell_{t-1,I_{t-1}})^2\right]\right)\notag\\
 &\le4(1+2\alpha)(P+\rho)\notag\\
 &\le5(P+\rho).
 \label{eq:appendix-omd-energy}
\end{align}
Combining the above inequalities together, we obtain that
\begin{align}
 S_x
 \le\bar S_x+3\rho=\bar S_{x,1}+\bar S_{x,2}+3\rho\le9.6\eta(R+\rho)+1.2\eta\sqrt{5(R+\rho)(P+\rho)}+3\rho.
 \label{eq:appendix-omd-contribution}
\end{align}
Since $\alpha=8\eta$, the AM-GM inequality gives
\begin{equation*}
 9.6\sqrt5\eta^2\sqrt{(R+\rho)(P+\rho)}
 \le4.8\sqrt5\eta^2(R+P+2\rho)
 \le10.8\eta^2(R+P+2\rho).
\end{equation*}
Further using the fact that $\eta\leq 10^{-5}$ gives
\begin{align*}
 \alpha S_x\le76.8\eta^2(R+\rho)
 +10.8\eta^2(R+P+2\rho)
 +24\eta\rho\le100\eta^2R+20\eta^2P+4\rho,
\end{align*}
\end{proof}

Now we are ready to prove Lemma~\ref{lem:phase-localization}.

\begin{proof}[Proof of Lemma~\ref{lem:phase-localization}]
By the definitions of $\bar\chi_t$ and $\delta_t$,
$\bar\chi_t=\chi_t+\delta_t$. The indicator $\delta_t$ marks the unique
round immediately after phase $j$ ends. Therefore we have $\E\left[\sum_{t=2}^T\delta_t\right]\le\rho$. 

Next, as $\bar\chi_t$ is measurable before $I_{t-1}$ is drawn, but
$\chi_t$ is not, the two conditional-expectation identities are
$\E[\bar\chi_t\varphi_2(\ell_{t,I_{t-1}})]
 =\E[\bar\chi_t\inner{\phi_{2,t}}{p_{t-1}}]$ and $\E[\chi_t\varphi_2(\ell_{t,I_t})]
 =\E[\chi_t\inner{\phi_{2,t}}{p_t}]$. Further using $\bar\chi_t=\chi_t+\delta_t$ gives
\begin{align}
 &\alpha\E\left[\sum_{t=2}^T\chi_t
 \left(\varphi_2(\ell_{t,I_{t-1}})
 -\varphi_2(\ell_{t,I_t})\right)\right]=\alpha\E\left[\sum_{t=2}^T\chi_t
 \inner{\phi_{2,t}}{p_{t-1}-p_t}\right]+\alpha\E\left[\sum_{t=2}^T\delta_t
 \left(\inner{\phi_{2,t}}{p_{t-1}}
 -\varphi_2(\ell_{t,I_{t-1}})\right)\right].
 \label{eq:appendix-signed-transport}
\end{align}
Thus \eqref{eq:appendix-signed-transport} is the stopped counterpart of
\eqref{eq:expected-signed-transport}: the second term is precisely the
boundary residual created when the predictable indicator $\bar\chi_t$ is
replaced by $\chi_t$. Since $0\le\varphi_2(z)\le3/2$ on $[0,1]$, the
absolute value of its summand is at most $3\delta_t/2$, and
$\E[\sum_{t=2}^T\delta_t]\le\rho$ bounds the entire residual by
$3\alpha\rho/2$. Substituting $p_t=x_t+b_t$ therefore gives
\begin{align}
 &\alpha\E\left[\sum_{t=2}^T\chi_t
 \left(\varphi_2(\ell_{t,I_{t-1}})
 -\varphi_2(\ell_{t,I_t})\right)\right]\le\alpha\E\left[\sum_{t=2}^T\chi_t
 \inner{\phi_{2,t}}{x_{t-1}-x_t}\right]
 +\alpha\E\left[\sum_{t=2}^T\chi_t
 \inner{\phi_{2,t}}{b_{t-1}-b_t}\right]
 +\frac32\alpha\rho.
 \label{eq:appendix-section3-split}
\end{align}
Lemma~\ref{lem:phasewise-omd} bounds the first term on the right-hand side
of \eqref{eq:appendix-section3-split}, and
Lemma~\ref{lem:phasewise-recent-arm} bounds the second. Their sum is at most
$1800\eta^2R+2520\eta^2P+(24+3\alpha/2)\rho$, which is bounded by the
right-hand side of \eqref{eq:appendix-phase-localization}. This proves the
lemma.
\end{proof}

\subsection{Auxiliary Lemmas}
\label{app:phasewise-auxiliary}

The following auxiliary lemmas are used in the proof of
Lemma~\ref{lem:phase-localization}. Throughout this subsection, $\E$ denotes
expectation with respect to all randomness generated by
Algorithm~\ref{alg:adaptive}, with the oblivious loss sequence fixed.

The first lemma records the properties of the ratio functional used in the
proof of Lemma~\ref{lem:phasewise-recent-arm}.

\begin{lemma}
\label{lem:appendix-ratio-functional}
For any jointly distributed random arm $I\in[K]$, random probability vector
$x\in\Delta_K$, and random weight $w\in[0,1]$, define
\begin{align}
 \mathcal N(w;I,x)
 \triangleq\sum_{i=1}^K
 \frac{\bigl(\E[w(\ind\{I=i\}-x_i)]\bigr)^2}
 {\E[w(\ind\{I=i\}+x_i)]}.
 \label{eq:appendix-ratio-functional}
\end{align}
If
$0\le w'\le w\le1$, then we have $\mathcal N(w';I,x)
 \le\mathcal N(w;I,x)+6\E[w-w']$ and $ \mathcal N(w;I,x)\le2\E[w].$
Moreover, whenever $w_1+w_2\le1$, we have $
 \mathcal N(w_1+w_2;I,x)
 \le\mathcal N(w_1;I,x)+\mathcal N(w_2;I,x).$
\end{lemma}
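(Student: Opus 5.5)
Write $a_i(w)\triangleq\E[w\ind\{I=i\}]$ and $c_i(w)\triangleq\E[wx_i]$, so that the $i$th summand of $\mathcal N(w;I,x)$ is $(a_i-c_i)^2/(a_i+c_i)$, with the convention $0/0=0$. Both $a_i$ and $c_i$ are linear and nonnegative in $w$. Moreover, $\sum_i(a_i(w)+c_i(w))=2\E[w]$, because $\sum_i\ind\{I=i\}=1$ and $\sum_ix_i=1$. The plan is to reduce all three claims to elementary properties of the scalar function
\[
 f(a,c)\triangleq\frac{(a-c)^2}{a+c},\qquad a,c\ge0 .
\]

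For the bound $\mathcal N(w;I,x)\le2\E[w]$, note that $(a-c)^2\le(a+c)^2$ for nonnegative $a,c$. Hence $f(a,c)\le a+c$. Summing over $i$ gives $\mathcal N(w;I,x)\le\sum_i(a_i+c_i)=2\E[w]$.

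Subadditivity comes from convexity and homogeneity. The function $f$ is a perspective-type quadratic-over-linear function, so it is jointly convex on $\R_{\ge0}^2$ and positively homogeneous of degree one. Together these give $f(u+v)=2f((u+v)/2)\le f(u)+f(v)$. Linearity of $a_i,c_i$ in $w$ gives $a_i(w_1+w_2)=a_i(w_1)+a_i(w_2)$, and similarly for $c_i$. Applying this coordinatewise and summing yields $\mathcal N(w_1+w_2;I,x)\le\mathcal N(w_1;I,x)+\mathcal N(w_2;I,x)$. The condition $w_1+w_2\le1$ only keeps the weight in the allowed range.

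For the monotonicity-type bound, I do not try to prove monotonicity of $f$, since $f$ is not monotone. Instead I decompose $w=w'+(w-w')$ and need to bound $\mathcal N(w')$ from above using $\mathcal N(w)$. Subadditivity goes in the wrong direction for this, so I use a direct Lipschitz-type estimate. The gradient of $f$ is
\[
 \nabla f=\left(\frac{(a-c)(a+3c)}{(a+c)^2},\ \frac{(c-a)(3a+c)}{(a+c)^2}\right),
\]
and each component is bounded by $3$ in absolute value. Along the segment from $(a_i(w'),c_i(w'))$ to $(a_i(w),c_i(w))$, both coordinates increase by $\Delta a_i=a_i(w)-a_i(w')\ge0$ and $\Delta c_i=c_i(w)-c_i(w')\ge0$. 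This gives $f(a_i(w'),c_i(w'))\le f(a_i(w),c_i(w))+3(\Delta a_i+\Delta c_i)$. Summing over $i$ and using $\sum_i(\Delta a_i+\Delta c_i)=2\E[w-w']$ gives the constant $6$.

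The main obstacle is the boundary behavior of $f$ near $a+c=0$. The gradient bound above holds away from the origin, and $f$ extends continuously by $0$ there, since $f\le a+c$. I would therefore argue by continuity, or by noting that $f$ is $3$-Lipschitz in $\ell_1$ on the whole orthant minus the origin, with the origin handled by $f\le a+c$. One also has to check that the gradient bound $3$ holds uniformly: for instance $|a-c|(a+3c)\le(a+c)\cdot3(a+c)$. That inequality is immediate, so the step is routine once stated carefully.
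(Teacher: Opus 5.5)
Your proof is correct and gets the paper's constants by a somewhat different route. The bound $\mathcal N(w;I,x)\le2\E[w]$ is the same termwise estimate the paper uses. Your subadditivity argument, via joint convexity and degree-one homogeneity of $f(a,c)=(a-c)^2/(a+c)$, is the same fact the paper invokes as Cauchy--Schwarz in the form $\frac{(m_1+m_2)^2}{n_1+n_2}\le\frac{m_1^2}{n_1}+\frac{m_2^2}{n_2}$.

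The genuine difference is in the first claim. Write $\mu_i(u)=\E[u(\ind\{I=i\}-x_i)]$ and $\nu_i(u)=\E[u(\ind\{I=i\}+x_i)]$. The paper splits $\mu_i(w)=\mu_i(w')+\mu_i(w-w')$ and $\nu_i(w)=\nu_i(w')+\nu_i(w-w')$. It then expands $\frac{\mu_i(w')^2}{\nu_i(w')}-\frac{\mu_i(w)^2}{\nu_i(w)}$ over a common denominator and drops the nonpositive term $-\nu_i(w')\mu_i(w-w')^2$. The remaining terms are bounded using only $|\mu_i|\le\nu_i$, with a separate one-line case for $\nu_i(w')=0$. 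You instead show that $f$ is $3$-Lipschitz in $\ell_1$ on the closed orthant, using $|\partial_af|,|\partial_cf|\le3$.

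The paper's route is purely algebraic and needs no calculus or continuity argument. Yours organizes all three claims as properties of a single scalar function. It also gives the two-sided estimate $|\mathcal N(w;I,x)-\mathcal N(w';I,x)|\le6\E[|w-w'|]$ directly, without needing $w'\le w$.

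The boundary issue you flag is harmless and corresponds exactly to the paper's case $\nu_i(w')=0$. If $(a_i(w'),c_i(w'))\neq0$, the segment stays in $\{a+c>0\}$, since both coordinates only increase. Otherwise $f(a_i(w'),c_i(w'))=0$, and the claim follows from $f\ge0$.
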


\begin{proof}
For every nonnegative random variable $u$, define $\mu_i(u)\triangleq\E[u(\ind\{I=i\}-x_i)]$ and $
 \nu_i(u)\triangleq\E[u(\ind\{I=i\}+x_i)].$
The pointwise inequality
$|\ind\{I=i\}-x_i|\le\ind\{I=i\}+x_i$ gives
\begin{align*}
 |\mu_i(u)|
 &\le\E[u|\ind\{I=i\}-x_i|]
 \le\nu_i(u).
\end{align*}
Direct calculation shows that $\mu_i(w)=\mu_i(w')+\mu_i(w-w')$ and $\nu_i(w)=\nu_i(w')+\nu_i(w-w').$ If $\nu_i(w')=0$, then $\mu_i(w')=0$ and for any $w\geq w'$
\begin{align*}
 \frac{\mu_i(w')^2}{\nu_i(w')}
 -\frac{\mu_i(w)^2}{\nu_i(w)}
 =-\frac{\mu_i(w)^2}{\nu_i(w)}
 \le0
 \le3\nu_i(w-w').
\end{align*}

If $\nu_i(w')>0$, then we have
\begin{align*}
 \frac{\mu_i(w')^2}{\nu_i(w')}
 -\frac{\mu_i(w)^2}{\nu_i(w)}=&\frac{
 \nu_i(w-w')\mu_i(w')^2
 -2\nu_i(w')\mu_i(w')\mu_i(w-w')
 -\nu_i(w')\mu_i(w-w')^2}
 {\nu_i(w')\nu_i(w)}\\
 &\le
 \frac{\nu_i(w-w')|\mu_i(w')|^2
 +2\nu_i(w')|\mu_i(w')||\mu_i(w-w')|}{\nu_i(w')\nu_i(w)}\\
 &\le
 \frac{3\nu_i(w')^2\nu_i(w-w')}
 {\nu_i(w')\nu_i(w)}
 \le
 \frac{3\nu_i(w')^2\nu_i(w-w')}
 {\nu_i(w')^2}
 =3\nu_i(w-w').
\end{align*}

Summing over $i$ gives
$
 \mathcal N(w';I,x)-\mathcal N(w;I,x)
 \le3\sum_{i=1}^K\nu_i(w-w')
 =6\E[w-w'].
$
The second inequality follows from
\begin{align*}
 \mathcal N(w;I,x)
 =\sum_{i=1}^K\frac{\mu_i(w)^2}{\nu_i(w)}
 \le\sum_{i=1}^K\nu_i(w)
 =2\E[w],
\end{align*}
where we use the fact that $x\in\Delta_K$. Finally, Cauchy-Schwarz inequality leads to the following
\begin{align*}
 \mathcal N(w_1+w_2;I,x)
 =\sum_{i=1}^K
 \frac{(\mu_i(w_1)+\mu_i(w_2))^2}
 {\nu_i(w_1)+\nu_i(w_2)}\le\sum_{i=1}^K
 \left(
 \frac{\mu_i(w_1)^2}{\nu_i(w_1)}
 +\frac{\mu_i(w_2)^2}{\nu_i(w_2)}
 \right)=\mathcal N(w_1;I,x)+\mathcal N(w_2;I,x).
\end{align*}

\end{proof}
Define $\mathcal N_t^{\mathrm{cur}}
\triangleq\mathcal N(\chi_t;I_t,x_t)$ and
$\mathcal N_t^{\mathrm{prev}}
\triangleq\mathcal N(\chi_t;I_{t-1},x_t)$. Since $\chi_t$ is
$\mathcal F_{t-1}$-measurable, we have
\begin{align*}
 \E[\chi_t(\ind\{I_t=i\}-x_{t,i})] =\E\left[\chi_t\E_t[\ind\{I_t=i\}-x_{t,i}]\right]=\E[\chi_t(p_{t,i}-x_{t,i})]
 =\E[\chi_tb_{t,i}].
\end{align*}

The following lemma bounds the current squared mean norm in terms of its
previous-arm counterpart.

\begin{lemma}
\label{lem:appendix-current-norm}
For every fixed round $t\in\{2,\ldots,T\}$,
\begin{align}
 \mathcal N_t^{\mathrm{cur}}
 \le13\alpha\mathcal N_t^{\mathrm{prev}}
 +8\alpha^2\E[\chi_t(r_t^2+v_t^2)].
 \label{eq:appendix-current-bound}
\end{align}
\end{lemma}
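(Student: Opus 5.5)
The plan is to expand the numerator $\E[\chi_tb_{t,i}]$ of $\mathcal N_t^{\mathrm{cur}}$ through the explicit form of the sampler. Then the part proportional to the previous-arm quantity $\mu_i^{\mathrm{prev}}\triangleq\E[\chi_t(\ind\{I_{t-1}=i\}-x_{t,i})]$ can be charged to $\mathcal N_t^{\mathrm{prev}}$, and what is left is an $\order(\alpha)$ Lipschitz error charged to $r_t^2$ and $v_t^2$.

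\textbf{Setup.} Let $g(z)\triangleq\alpha(2-z)/(1+\alpha(2-z))$, so that $b_t=g(c_{t-1})(e_{I_{t-1}}-x_t)$. On $[0,1]$ the function $g$ takes values in $[\alpha/(1+\alpha),2\alpha]$ and is $\alpha$-Lipschitz. I would center $g$ at the deterministic value $g_i\triangleq g(\ell_{t,i})$ and write
\[
\E[\chi_tb_{t,i}]=g_i\mu_i^{\mathrm{prev}}+e_i,\qquad e_i\triangleq\E\bigl[\chi_t\bigl(g(c_{t-1})-g(\ell_{t,i})\bigr)\bigl(\ind\{I_{t-1}=i\}-x_{t,i}\bigr)\bigr].
\]
On $\{I_{t-1}=i\}$ we have $c_{t-1}=\ell_{t-1,i}$, so $|g(c_{t-1})-g(\ell_{t,i})|\le\alpha|v_t|$ there. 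On the $x_{t,i}$ part, $|g(c_{t-1})-g(\ell_{t,i})|\le\alpha|\ell_{t,i}-c_{t-1}|$.

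\textbf{Lower bound on the denominator.} For $\nu_i^{\mathrm{cur}}=\E[\chi_t(p_{t,i}+x_{t,i})]$, the sampling rule gives $p_{t,i}\ge g(c_{t-1})\ind\{I_{t-1}=i\}$ and $p_{t,i}\ge x_{t,i}/(1+4\alpha)$. Since $g\ge\alpha/(1+\alpha)$, this yields $\nu_i^{\mathrm{cur}}\ge\frac{\alpha}{1+\alpha}\nu_i^{\mathrm{prev}}$ with $\nu_i^{\mathrm{prev}}=\E[\chi_t(\ind\{I_{t-1}=i\}+x_{t,i})]$. It also yields the cruder bound $\nu_i^{\mathrm{cur}}\ge\E[\chi_tx_{t,i}]$.

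\textbf{Bounding the three pieces.} I would use $(a+b)^2\le(1+\epsilon)a^2+(1+1/\epsilon)b^2$ with $\epsilon$ a constant.
\begin{itemize}
\item The main piece is $g_i^2(\mu_i^{\mathrm{prev}})^2/\nu_i^{\mathrm{cur}}\le 4\alpha^2(1+\alpha)(\mu_i^{\mathrm{prev}})^2/(\alpha\nu_i^{\mathrm{prev}})$. After summing over $i$ it contributes about $4(1+\epsilon)(1+\alpha)\alpha\,\mathcal N_t^{\mathrm{prev}}$, which is below $13\alpha\mathcal N_t^{\mathrm{prev}}$ for a moderate $\epsilon$.
\item The $x_{t,i}$ part of $e_i$ is handled by Cauchy--Schwarz:
\[
\bigl(\E[\chi_tx_{t,i}\alpha|\ell_{t,i}-c_{t-1}|]\bigr)^2\le\alpha^2\E[\chi_tx_{t,i}]\,\E[\chi_tx_{t,i}(\ell_{t,i}-c_{t-1})^2].
\]
Dividing by $\E[\chi_tx_{t,i}]$ and summing over $i$ gives $\alpha^2\E[\chi_t\sum_ix_{t,i}(\ell_{t,i}-c_{t-1})^2]$. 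Because $\chi_t$ is $\mathcal F_{t-1}$-measurable and $x_{t,i}\le(1+4\alpha)p_{t,i}$, this is at most $(1+4\alpha)\alpha^2\E[\chi_tr_t^2]$.
\item The indicator part of $e_i$, $\E[\chi_t\ind\{I_{t-1}=i\}\alpha|v_t|]$, is the remaining piece and is treated below.
\end{itemize}

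\textbf{Main obstacle.} The indicator part is where I expect the difficulty. Naive Cauchy--Schwarz against $\nu_i^{\mathrm{cur}}\ge\frac{\alpha}{2}\E[\chi_t\ind\{I_{t-1}=i\}]$ gives only $\order(\alpha)\E[\chi_tv_t^2]$, one factor of $\alpha$ too weak. My first attempt would be to absorb this piece into the main term. It shares the support $\{I_{t-1}=i\}$ with the positive part of $\mu_i^{\mathrm{prev}}$. So I would treat $g_i\mu_i^{\mathrm{prev}}+e_i^{\mathrm{ind}}$ jointly as $\E[\chi_t g(c_{t-1})\ind\{I_{t-1}=i\}]-g_i\E[\chi_tx_{t,i}]$.

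For the second option, centering the indicator part at $g(\ell_{t-1,i})$ removes that error entirely. It moves the Lipschitz error onto the $x_{t,i}$ part, $\alpha|\ell_{t-1,i}-\ell_{t,i}|$, to be combined with $r_t$ through $\ell_{t-1,i}-c_{t-1}=(\ell_{t,i}-c_{t-1})-d_{t,i}$. The $d_{t,i}$ part then has to be routed back to $v_t$. That requires exploiting the conditional law of $I_{t-1}$ on the portion of $\chi_t$ that is $\mathcal F_{t-2}$-measurable, in the style of the coupling argument in Lemma~\ref{lem:sampling-distribution-correction}.

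Whichever route works, the constants $13$ and $8$ should come out of the choice of $\epsilon$ together with $\alpha\le8\cdot10^{-5}$.
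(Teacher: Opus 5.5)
There is a genuine gap, and it sits exactly where you leave the choice of route open.

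Your second option is in fact the paper's decomposition. With $\gamma_{t,i}=g(\ell_{t-1,i})$ one has $g(c_{t-1})\ind\{I_{t-1}=i\}=\gamma_{t,i}\ind\{I_{t-1}=i\}$ pathwise, so $\E[\chi_tb_{t,i}]=\gamma_{t,i}\mu_i^{\mathrm{prev}}+\E[\chi_tx_{t,i}(\gamma_{t,i}-\gamma_{t,I_{t-1}})]$. The first piece goes to $\mathcal N_t^{\mathrm{prev}}$ as in your main bullet. Cauchy--Schwarz against $\nu_i^{\mathrm{cur}}\ge\E[\chi_tx_{t,i}]$ reduces the second piece to $\alpha^2\E[\chi_t\sum_ix_{t,i}(\ell_{t-1,i}-c_{t-1})^2]$. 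Your treatment of the $(\ell_{t,i}-c_{t-1})$ part via $x_{t,i}\le(1+2\alpha)p_{t,i}$ is fine and replaces the paper's coupling. Your first option, done jointly, lands in the same place: the same pathwise identity turns it into $\gamma_{t,i}\mu_i^{\mathrm{prev}}+(\gamma_{t,i}-g(\ell_{t,i}))\E[\chi_tx_{t,i}]$.

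Everything therefore hinges on bounding $\sum_i\E[\chi_tx_{t,i}]d_{t,i}^2$, and the route you sketch does not prove the stated inequality. Because $\chi_t$ is not $\mathcal F_{t-2}$-measurable, $\E[\chi_t\ind\{I_{t-1}=i\}]\ne\E[\chi_tp_{t-1,i}]$ in general: whether phase $j$ ends after round $t-1$ depends on $I_{t-1}$ through $\widehat v_{t-1}^2$. Passing to the predictable $\bar\chi_t\ge\chi_t$ only gives $\sum_i\E[\chi_tx_{t,i}]d_{t,i}^2\le1.02\,\E[\bar\chi_tv_t^2]$. That leaves a residual $\E[\delta_tv_t^2]$ with no counterpart on the right-hand side. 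The correlation is real: the stopping decision can single out the moving arm, so $\E[\chi_tv_t^2]$ misses its movement while $\E[\chi_tx_{t,i}]d_{t,i}^2$ does not.

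The missing idea is that the mismatch between $\E[\chi_tx_{t,i}]$ and $\E[\chi_t\ind\{I_{t-1}=i\}]$ is exactly what $\mathcal N_t^{\mathrm{prev}}$ measures, and $\mathcal N_t^{\mathrm{prev}}$ is allowed on the right. The paper applies the elementary inequality $A\le2B+6(A-B)^2/(A+B)$ (Proposition~\ref{prop:aux}) with $A=\E[\chi_tx_{t,i}]$ and $B=\E[\chi_t\ind\{I_{t-1}=i\}]$. Multiplying by $d_{t,i}^2\le1$ and summing over $i$ gives $\sum_i\E[\chi_tx_{t,i}]d_{t,i}^2\le2\E[\chi_tv_t^2]+6\mathcal N_t^{\mathrm{prev}}$. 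The resulting $\order(\alpha^2)\mathcal N_t^{\mathrm{prev}}$ is absorbed into $13\alpha\mathcal N_t^{\mathrm{prev}}$ because $24\alpha\le1$, and the constants $13$ and $8$ follow.

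Without this step, your argument either loses a factor of $\alpha$ on $\E[\chi_tv_t^2]$ (first option) or yields a weaker per-round bound with an extra boundary term (second option). The latter would be harmless after summing over $t$, where it costs $\order(\alpha^2\rho)$, but it is not the lemma as stated.
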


\begin{proof}

On the event $\{\chi_t=1\}$, the sampling rule in
Algorithm~\ref{alg:adaptive} gives
\begin{align*}
 \lambda_t
 &=\alpha(2-c_{t-1})
 =\alpha(2-\ell_{t-1,I_{t-1}}),\\
 p_{t,i}
 &=\frac{x_{t,i}+\lambda_t\ind\{I_{t-1}=i\}}{1+\lambda_t}=x_{t,i}
 +\frac{\alpha(2-\ell_{t-1,I_{t-1}})}
 {1+\alpha(2-\ell_{t-1,I_{t-1}})}
 \bigl(\ind\{I_{t-1}=i\}-x_{t,i}\bigr).
\end{align*}
Define
$\gamma_{t,i}\triangleq
\alpha(2-\ell_{t-1,i})/
(1+\alpha(2-\ell_{t-1,i}))$.
Thus,
\begin{align*}
 p_{t,i}-x_{t,i}
 &=\gamma_{t,I_{t-1}}
 \bigl(\ind\{I_{t-1}=i\}-x_{t,i}\bigr),\\
 p_{t,i}+x_{t,i}
 &=(2-\gamma_{t,I_{t-1}})x_{t,i}
 +\gamma_{t,I_{t-1}}\ind\{I_{t-1}=i\}.
\end{align*}
Since $\chi_t$ is measurable before $I_t$ is drawn, we have
\begin{align*}
 \E[\chi_t(\ind\{I_t=i\}-x_{t,i})]
 &=\E[\chi_t(p_{t,i}-x_{t,i})]=\E\left[\chi_t\gamma_{t,I_{t-1}}
 \bigl(\ind\{I_{t-1}=i\}-x_{t,i}\bigr)\right],\\
 \E[\chi_t(\ind\{I_t=i\}+x_{t,i})]
 &=\E[\chi_t(p_{t,i}+x_{t,i})]=\E\left[\chi_t\left(
 (2-\gamma_{t,I_{t-1}})x_{t,i}
 +\gamma_{t,I_{t-1}}\ind\{I_{t-1}=i\}\right)\right].
\end{align*}
Therefore, by the definition of $\mathcal N_t^{\mathrm{cur}}$,
\begin{align*}
 \mathcal N_t^{\mathrm{cur}}
 =\sum_{i=1}^K
 \frac{\left(
 \E\left[\chi_t\gamma_{t,I_{t-1}}
 (\ind\{I_{t-1}=i\}-x_{t,i})\right]\right)^2}
 {\E\left[\chi_t\left(
 (2-\gamma_{t,I_{t-1}})x_{t,i}
 +\gamma_{t,I_{t-1}}\ind\{I_{t-1}=i\}\right)\right]}.
\end{align*}

We then decompose $\mathcal N_t^{\mathrm{cur}}$ using
$(u+v)^2\le2u^2+2v^2$:
\begin{align*}
 \mathcal N_t^{\mathrm{cur}}
 \le
 \underbrace{
 2\sum_{i=1}^K
 \frac{\gamma_{t,i}^2
 \bigl(\E[\chi_t(\ind\{I_{t-1}=i\}-x_{t,i})]\bigr)^2}
 {\E\left[\chi_t\left(
 (2-\gamma_{t,I_{t-1}})x_{t,i}
 +\gamma_{t,I_{t-1}}\ind\{I_{t-1}=i\}
 \right)\right]}
 }_{\text{previous-arm contribution}}+
 \underbrace{
 2\sum_{i=1}^K
 \frac{\bigl(\E[\chi_tx_{t,i}
 (\gamma_{t,i}-\gamma_{t,I_{t-1}})]\bigr)^2}
 {\E\left[\chi_t\left(
 (2-\gamma_{t,I_{t-1}})x_{t,i}
 +\gamma_{t,I_{t-1}}\ind\{I_{t-1}=i\}
 \right)\right]}
 }_{\text{remainder}}.
\end{align*}

We first bound the previous-arm contribution. Since
$
 \E\left[\chi_t\left(
 (2-\gamma_{t,I_{t-1}})x_{t,i}
 +\gamma_{t,I_{t-1}}\ind\{I_{t-1}=i\}
 \right)\right]\ge\frac{\alpha}{1+\alpha}
 \E[\chi_t(\ind\{I_{t-1}=i\}+x_{t,i})],
$
and $\gamma_{t,i}\le2\alpha/(1+2\alpha)$, we have
\begin{align*}
 2\sum_{i=1}^K
 \frac{\gamma_{t,i}^2
 \bigl(\E[\chi_t(\ind\{I_{t-1}=i\}-x_{t,i})]\bigr)^2}
 {\E\left[\chi_t\left(
 (2-\gamma_{t,I_{t-1}})x_{t,i}
 +\gamma_{t,I_{t-1}}\ind\{I_{t-1}=i\}
 \right)\right]}\le
 2\frac{(2\alpha/(1+2\alpha))^2}
 {\alpha/(1+\alpha)}
 \mathcal N_t^{\mathrm{prev}}
 \le12\alpha\mathcal N_t^{\mathrm{prev}}.
\end{align*}

It remains to control the remainder. Since
$
 \E[\chi_t(\ind\{I_t=i\}+x_{t,i})]
 \ge\E[\chi_tx_{t,i}],
$
Cauchy--Schwarz inequality gives
\begin{align*}
 2\sum_{i=1}^K
 \frac{\bigl(\E[\chi_tx_{t,i}
 (\gamma_{t,i}-\gamma_{t,I_{t-1}})]\bigr)^2}
 {\E[\chi_t(\ind\{I_t=i\}+x_{t,i})]}&\le
 2\sum_{i=1}^K
 \frac{\bigl(\E[\chi_tx_{t,i}
 (\gamma_{t,i}-\gamma_{t,I_{t-1}})]\bigr)^2}
 {\E[\chi_tx_{t,i}]}\\
 &\le
 2\E\left[\chi_t\sum_{i=1}^Kx_{t,i}
 (\gamma_{t,i}-\gamma_{t,I_{t-1}})^2\right]\\
 &\le
 2\alpha^2\E\left[\chi_t\sum_{i=1}^Kx_{t,i}
 (\ell_{t-1,i}-\ell_{t-1,I_{t-1}})^2\right].
\end{align*}
Combining the two parts yields
\begin{align}
 \mathcal N_t^{\mathrm{cur}}
 &\le12\alpha\mathcal N_t^{\mathrm{prev}}
 +2\alpha^2\E\left[\chi_t\sum_{i=1}^Kx_{t,i}
 (\ell_{t-1,i}-\ell_{t-1,I_{t-1}})^2\right].
 \label{eq:appendix-current-first-bound}
\end{align}

To bound the remaining loss discrepancy, conditionally on
$\mathcal F_{t-1}$ draw $I'_t\sim x_t$ and realize the sampling rule by
setting $I_t=I_{t-1}$ with probability $\gamma_{t,I_{t-1}}$ and
$I_t=I'_t$ otherwise. Since
$\gamma_{t,I_{t-1}}\le2\alpha$, we have
\begin{align*}
 \E[\chi_tr_t^2]
=\E\left[\chi_t\gamma_{t,I_{t-1}}v_t^2
 +\chi_t(1-\gamma_{t,I_{t-1}})
 (\ell_{t,I'_t}-\ell_{t-1,I_{t-1}})^2\right]\ge(1-2\alpha)\E[\chi_t(\ell_{t,I'_t}-\ell_{t-1,I_{t-1}})^2].
\end{align*}
Applying Proposition~\ref{prop:aux} with
$A=\E[\chi_tx_{t,i}]$ and
$B=\E[\chi_t\ind\{I_{t-1}=i\}]$, multiplying the resulting inequality by
$(\ell_{t,i}-\ell_{t-1,i})^2$, and summing over $i$ give
\begin{align*}
 \E[\chi_t(\ell_{t,I'_t}-\ell_{t-1,I'_t})^2]
 &=\sum_{i=1}^K\E[\chi_tx_{t,i}]
 (\ell_{t,i}-\ell_{t-1,i})^2\\
 &\le\sum_{i=1}^K\left(
 2\E[\chi_t\ind\{I_{t-1}=i\}]
 +6\frac{\bigl(\E[\chi_t(x_{t,i}-\ind\{I_{t-1}=i\})]\bigr)^2}
 {\E[\chi_t(x_{t,i}+\ind\{I_{t-1}=i\})]}
 \right)(\ell_{t,i}-\ell_{t-1,i})^2\\
 &\le2\sum_{i=1}^K\E[\chi_t\ind\{I_{t-1}=i\}]
 (\ell_{t,i}-\ell_{t-1,i})^2
 +6\sum_{i=1}^K
 \frac{\bigl(\E[\chi_t(\ind\{I_{t-1}=i\}-x_{t,i})]\bigr)^2}
 {\E[\chi_t(\ind\{I_{t-1}=i\}+x_{t,i})]}\\
 &=2\E[\chi_tv_t^2]+6\mathcal N_t^{\mathrm{prev}},
\end{align*}
where the second inequality uses
$(\ell_{t,i}-\ell_{t-1,i})^2\le1$.
Consequently, we can obtain that
\begin{align}
 \E\left[\chi_t\sum_{i=1}^Kx_{t,i}
 (\ell_{t-1,i}-\ell_{t-1,I_{t-1}})^2\right]
 &=\E[\chi_t(\ell_{t-1,I'_t}-\ell_{t-1,I_{t-1}})^2]\notag\\
 &\le2\E[\chi_t(\ell_{t,I'_t}-\ell_{t-1,I_{t-1}})^2]
 +2\E[\chi_t(\ell_{t,I'_t}-\ell_{t-1,I'_t})^2]\notag\\
 &\le3\E[\chi_tr_t^2]+4\E[\chi_tv_t^2]
 +12\mathcal N_t^{\mathrm{prev}}.
 \label{eq:appendix-loss-discrepancy}
\end{align}
Here the last inequality uses $2/(1-2\alpha)\le3$.
Substituting \eqref{eq:appendix-loss-discrepancy} into
\eqref{eq:appendix-current-first-bound} gives
\begin{align*}
 \mathcal N_t^{\mathrm{cur}}
 &\le(12\alpha+24\alpha^2)\mathcal N_t^{\mathrm{prev}}
 +6\alpha^2\E[\chi_tr_t^2]+8\alpha^2\E[\chi_tv_t^2]\le13\alpha\mathcal N_t^{\mathrm{prev}}
 +8\alpha^2\E[\chi_t(r_t^2+v_t^2)],
\end{align*}
where the last inequality uses $24\alpha\le1$.
\end{proof}
The next lemma propagates the squared mean norm by one round and sums the
result over the phase.

\begin{lemma}
\label{lem:appendix-stopped-recursion}
Algorithm~\ref{alg:adaptive} satisfies
\begin{align}
 \sum_{t=2}^T\mathcal N_t^{\mathrm{prev}}
 &\le3\sum_{t=2}^T\mathcal N_t^{\mathrm{cur}}
 +3\eta^2R+25\rho,
 \label{eq:appendix-previous-recursion}\\
 \sum_{t=2}^T\mathcal N_t^{\mathrm{cur}}
 &\le515\eta^2R+9\alpha^2P+340\alpha\rho.
 \label{eq:appendix-current-sum}
\end{align}
\end{lemma}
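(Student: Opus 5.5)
Two recursions are needed: $\mathcal N_t^{\mathrm{prev}}$ in terms of $\mathcal N_{t-1}^{\mathrm{cur}}$, which gives \eqref{eq:appendix-previous-recursion}, and then Lemma~\ref{lem:appendix-current-norm} closes the loop for \eqref{eq:appendix-current-sum}. For \eqref{eq:appendix-previous-recursion}, fix $t\ge3$ and compare $\mathcal N(\chi_t;I_{t-1},x_t)$ with $\mathcal N(\chi_{t-1};I_{t-1},x_{t-1})=\mathcal N_{t-1}^{\mathrm{cur}}$. This takes two changes, each controlled by Lemma~\ref{lem:appendix-ratio-functional}.

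\emph{Change of weight.} On the event $\{\chi_t=1\}$ we have $\chi_{t-1}=1$ unless $t-1=s$, the first round of the phase. Write $\chi_t=\chi_t\chi_{t-1}+\chi_t(1-\chi_{t-1})$. The second piece is nonzero only at $t=s+1$, so by the bound $\mathcal N(w)\le2\E[w]$ and subadditivity its total contribution over $t$ is at most $2\rho$. For the first piece, $0\le\chi_t\chi_{t-1}\le\chi_{t-1}$, so the monotonicity bound gives an error $6\E[\chi_{t-1}(1-\chi_t)]$. Summed over $t$, this is at most $6\rho$, because each phase ends only once.

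\emph{Change of $x$.} With weight $w=\chi_{t-1}$, $\mathcal F_{t-2}$-measurable, replace $x_t$ by $x_{t-1}$. In the numerator, $\E[w(\ind\{I_{t-1}=i\}-x_{t,i})]$ differs from $\E[w(\ind\{I_{t-1}=i\}-x_{t-1,i})]$ by $\E[w(x_{t-1,i}-x_{t,i})]$. Apply $(a+b)^2\le\frac{3}{2}a^2+3b^2$. In the denominator, the ratio $x_{t,i}/x_{t-1,i}\in[0.99,1.01]$ from Lemma~\ref{lem:logbarrier-stability}, which still holds under the phase sampler, lets us swap denominators at a cost of a constant factor about $1.02$. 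This yields a leading coefficient at most $3$ on $\mathcal N_{t-1}^{\mathrm{cur}}$.

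The cross term is handled by Cauchy--Schwarz:
\[
\sum_i \frac{(\E[w(x_{t-1,i}-x_{t,i})])^2}{\E[w x_{t-1,i}]}\le\E\Big[w\sum_i\frac{(x_{t,i}-x_{t-1,i})^2}{x_{t-1,i}}\Big]\le 2.1\eta\,\E[wD_{t-1}]\le 1.3\eta^2\E[\chi_{t-1}r_{t-1}^2],
\]
using \eqref{eq:known-relative-motion} and \eqref{eq:adaptive-D-upper}. Summing over $t$ gives at most about $\eta^2(R+\rho)$ after multiplying by $3$. The term at $t=2$, and the round $s$ where $\chi_{t-1}$ vanishes, are absorbed into the $25\rho$ constant. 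Reindexing $\mathcal N_{t-1}^{\mathrm{cur}}$ to $\sum_t\mathcal N_t^{\mathrm{cur}}$ then gives \eqref{eq:appendix-previous-recursion}.

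For \eqref{eq:appendix-current-sum}, sum Lemma~\ref{lem:appendix-current-norm} over $t$ and substitute \eqref{eq:appendix-previous-recursion}:
\[
\sum_t\mathcal N_t^{\mathrm{cur}}\le 39\alpha\sum_t\mathcal N_t^{\mathrm{cur}}+39\alpha\eta^2R+325\alpha\rho+8\alpha^2\Big(R+\sum_t\E[\chi_tv_t^2]\Big).
\]
Here $\sum_t\E[\chi_t r_t^2]=R$ and $\sum_t\E[\chi_tv_t^2]=P$. Since $39\alpha\le 0.01$, the self-referential term can be absorbed, giving
\[
\sum_t\mathcal N_t^{\mathrm{cur}}\le 1.01\bigl(8\alpha^2R+8\alpha^2P+325\alpha\rho+\eta^2R\bigr).
\]
With $\alpha=8\eta$ we have $8\alpha^2=512\eta^2$, so the $R$-coefficient is about $515\eta^2$, the $P$-coefficient is $9\alpha^2$, and the $\rho$-coefficient is $340\alpha$, matching the statement.

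The main obstacle is the $x$-swap in the first part. The denominator must be lower-bounded uniformly, in expectation rather than pathwise, under weights that are random and stopping-time dependent. I would try to use the pathwise ratio bound $x_{t,i}\ge0.99x_{t-1,i}$ to compare the expected denominators term by term. Keeping the leading constant below $3$, and the $\eta^2R$ coefficient below $3$ so that the final constants $515$ and $340$ close, requires some care. If the constants fail, the fallback is Young's inequality with a different split.
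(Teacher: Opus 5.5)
Your plan is essentially the paper's proof. It uses the same two ingredients: the monotonicity, subadditivity and $\mathcal N(w)\le2\E[w]$ properties from Lemma~\ref{lem:appendix-ratio-functional} for the weight shift, and an AM--GM split with $x_{t,i}\ge0.99x_{t-1,i}$ plus Cauchy--Schwarz and \eqref{eq:known-relative-motion} for the iterate shift. It then closes the loop through Lemma~\ref{lem:appendix-current-norm} exactly as the paper does. The only structural difference is order. In your indexing, the paper first swaps $x_t\to x_{t-1}$ under the weight $\chi_t$, and only then shifts the weight at the fixed pair $(I_{t-1},x_{t-1})$; you shift the weight first. Two points need repair as written.

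First, because you shift the weight first, your iterate swap is done under $w=\chi_{t-1}$. That weight charges the event $\{\chi_{t-1}=1,\chi_t=0\}$, where round $t-1$ ends phase $j$. There the algorithm's $x_t$ is the reset $\frac1K\mathbf 1$, so neither $x_{t,i}\ge0.99x_{t-1,i}$ nor \eqref{eq:known-relative-motion} holds. The repair is easy. The functional $\mathcal N(\chi_t\chi_{t-1};I_{t-1},x_t)$ depends on $x_t$ only on $\{\chi_t\chi_{t-1}=1\}$, and there $x_t$ equals the pre-reset OMD iterate $\bar x_t$. So you can run both the monotonicity step and the swap with $\bar x_t$ in place of $x_t$, and you should say so explicitly. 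Alternatively, use the paper's order, where the swap happens under $\chi_t$ and $x_t$ is always a genuine OMD step from $x_{t-1}$.

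Second, your constants do not close as claimed. With $(a+b)^2\le\frac32a^2+3b^2$ and the denominator factor $1/0.99$, the cross term contributes about $\frac{3}{0.99}\cdot2.1\cdot0.6\,\eta^2\approx3.8\eta^2$ per unit of $\E[\chi_{t-1}r_{t-1}^2]$. That is about $3.8\eta^2R$ in total, not ``about $\eta^2(R+\rho)$'', and it exceeds the $3\eta^2R$ in \eqref{eq:appendix-previous-recursion}. The symmetric split $(a+b)^2\le2a^2+2b^2$, which is the paper's choice, gives at most $2.03$ on $\mathcal N^{\mathrm{cur}}$ and about $2.6\eta^2R$. Your boundary errors ($6\rho+2\rho$) then fit easily inside $25\rho$; in fact your order gives a smaller $\rho$ constant than the paper's.

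Likewise, in the second part your displayed $1.01(\cdots+\eta^2R)$ gives roughly $518\eta^2R$, which overshoots $515\eta^2R$. Use the sharper bounds $39\alpha\eta^2\le0.004\eta^2$ and $1/(1-39\alpha)\le1.004$ instead. These give about $514\eta^2R$, $8.03\alpha^2P$ and $327\alpha\rho$, which are within the stated $515\eta^2R$, $9\alpha^2P$ and $340\alpha\rho$.
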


\begin{proof}

By definition, $\chi_t=\ind\{s<t\le\tau\}$ if the phase is entered and otherwise, $\chi_t=0$ for every $t$.
Therefore, pathwise, we have $\sum_{t=1}^{T-1}(1-\chi_t)\chi_{t+1}
 \le\ind\{s\le T\}$ and $
 \sum_{t=1}^{T-1}\chi_t(1-\chi_{t+1})
 \le\ind\{s\le T\}.$

Using
$\chi_{t+1}=\chi_t\chi_{t+1}+(1-\chi_t)\chi_{t+1}$ and
$\chi_t\chi_{t+1}=\chi_t-\chi_t(1-\chi_{t+1})$, Lemma~\ref{lem:appendix-ratio-functional} gives that
\begin{align}
 \mathcal N(\chi_{t+1};I_t,x_t)
 &\le\mathcal N(\chi_t\chi_{t+1};I_t,x_t)
 +\mathcal N((1-\chi_t)\chi_{t+1};I_t,x_t)\notag\\
 &\le\mathcal N_t^{\mathrm{cur}}
 +6\E[\chi_t(1-\chi_{t+1})]
 +2\E[(1-\chi_t)\chi_{t+1}].
 \label{eq:appendix-survivor-ratio}
\end{align}
We next compare
$\mathcal N_{t+1}^{\mathrm{prev}}
=\mathcal N(\chi_{t+1};I_t,x_{t+1})$
with $\mathcal N(\chi_{t+1};I_t,x_t)$. For every $i$,
\begin{align*}
 \E[\chi_{t+1}(\ind\{I_t=i\}-x_{t+1,i})]
 &=\E[\chi_{t+1}(\ind\{I_t=i\}-x_{t,i})]
 -\E[\chi_{t+1}(x_{t+1,i}-x_{t,i})].
\end{align*}

On $\{\chi_{t+1}=1\}$, applying
\eqref{eq:known-coordinatewise-stability} on round $t$ gives
$x_{t+1,i}\ge0.99x_{t,i}$ and hence
\begin{align*}
 \E[\chi_{t+1}(\ind\{I_t=i\}+x_{t+1,i})]
 \ge0.99\E[\chi_{t+1}(\ind\{I_t=i\}+x_{t,i})].
\end{align*}
Therefore, applying AM-GM inequality and the above inequality give the following
\begin{align*}
 \mathcal N_{t+1}^{\mathrm{prev}}
 &\le\frac{2}{0.99}\mathcal N(\chi_{t+1};I_t,x_t)
 +\frac{2}{0.99}\sum_{i=1}^K
 \frac{\bigl(\E[\chi_{t+1}(x_{t+1,i}-x_{t,i})]\bigr)^2}
 {\E[\chi_{t+1}(\ind\{I_t=i\}+x_{t,i})]}.
\end{align*}

To further bound the second term, \eqref{eq:known-relative-motion} applied on round $t$ and Cauchy-Schwarz inequality yield
\begin{align*}
 \sum_{i=1}^K
 \frac{\bigl(\E[\chi_{t+1}(x_{t+1,i}-x_{t,i})]\bigr)^2}
 {\E[\chi_{t+1}(\ind\{I_t=i\}+x_{t,i})]}&\le\sum_{i=1}^K
 \frac{\bigl(\E[\chi_{t+1}|x_{t+1,i}-x_{t,i}|]\bigr)^2}
 {\E[\chi_{t+1}x_{t,i}]}\\
&\le
\sum_{i=1}^K
\mathbb E\!\left[
\chi_{t+1}\frac{(x_{t+1,i}-x_{t,i})^2}{x_{t,i}}
\right] \tag{using Cauchy-Schwarz inequality}\\
&\le\E\left[\chi_{t+1}\sum_{i=1}^K
 \frac{(x_{t+1,i}-x_{t,i})^2}{x_{t,i}^2}\right]\\
 &\le2.1\eta\E[\chi_{t+1}D_t].
\end{align*}
Combining the above two inequalities gives
\begin{align}
 \mathcal N_{t+1}^{\mathrm{prev}}
 \le2.03\mathcal N(\chi_{t+1};I_t,x_t)
 +4.3\eta\E[\chi_{t+1}D_t].
 \label{eq:appendix-previous-one-step}
\end{align}

Moreover, since $
 \sum_{t=1}^{T-1}\chi_{t+1}r_t^2
 \le\ind\{s\le T\}+\sum_{t=2}^T\chi_tr_t^2$, \eqref{eq:adaptive-D-upper} applied on round $t$ gives
\begin{align*}
 \E\left[\sum_{t=1}^{T-1}\chi_{t+1}D_t\right]
 &\le0.6\eta\E\left[\sum_{t=1}^{T-1}\chi_{t+1}r_t^2\right]
 \le0.6\eta(R+\rho).
\end{align*}
Summing \eqref{eq:appendix-survivor-ratio} and
\eqref{eq:appendix-previous-one-step} over $t=1,\ldots,T-1$, using
$\chi_1=0$ and the preceding bounds, yields
\begin{align*}
 \sum_{t=2}^T\mathcal N_t^{\mathrm{prev}}
 &\le2.03\sum_{t=1}^{T-1}\mathcal N(\chi_{t+1};I_t,x_t)
 +4.3\eta\E\left[\sum_{t=1}^{T-1}\chi_{t+1}D_t\right]\\
 &\le2.03\sum_{t=2}^T\mathcal N_t^{\mathrm{cur}}
 +12.18\E\left[\sum_{t=1}^{T-1}\chi_t(1-\chi_{t+1})\right]+4.06\E\left[\sum_{t=1}^{T-1}(1-\chi_t)\chi_{t+1}\right]
 +4.3\eta\E\left[\sum_{t=1}^{T-1}\chi_{t+1}D_t\right]\\
 &\le2.03\sum_{t=2}^T\mathcal N_t^{\mathrm{cur}}
 +16.24\rho+2.58\eta^2(R+\rho)\\
 &\le3\sum_{t=2}^T\mathcal N_t^{\mathrm{cur}}
 +3\eta^2R+25\rho,
\end{align*}
which proves \eqref{eq:appendix-previous-recursion}.

Finally, summing \eqref{eq:appendix-current-bound} over $t$ and applying
\eqref{eq:appendix-previous-recursion} give
\begin{align*}
 \sum_{t=2}^T\mathcal N_t^{\mathrm{cur}}
 &\le13\alpha\sum_{t=2}^T\mathcal N_t^{\mathrm{prev}}
 +8\alpha^2(R+P)\le39\alpha\sum_{t=2}^T\mathcal N_t^{\mathrm{cur}}
 +(39\alpha\eta^2+8\alpha^2)R
 +8\alpha^2P+325\alpha\rho.
\end{align*}
Rearranging and using $\alpha=8\eta$ and $\eta\le10^{-5}$ yield
\begin{align*}
 \sum_{t=2}^T\mathcal N_t^{\mathrm{cur}}
 &\le\frac{39\alpha\eta^2+8\alpha^2}{1-39\alpha}R
 +\frac{8\alpha^2}{1-39\alpha}P
 +\frac{325\alpha}{1-39\alpha}\rho\le515\eta^2R+9\alpha^2P+340\alpha\rho,
\end{align*}
which proves \eqref{eq:appendix-current-sum}.
\end{proof}

The following lemma bounds the weighted sum of squared differences used after
summation by parts.

\begin{lemma}
\label{lem:appendix-squared-differences}
Let $g_{t+1,i}\triangleq\phi_{2,t+1,i}-\phi_{2,t,i}$ and
$q_{t,i}\triangleq\E[\chi_t(\ind\{I_t=i\}+x_{t,i})]$. Then
\begin{align}
 \sum_{t=2}^{T-1}\sum_{i=1}^Kq_{t,i}g_{t+1,i}^2
 \le12360\eta^2R+13P+200\rho.
 \label{eq:appendix-squared-difference-sum}
\end{align}
\end{lemma}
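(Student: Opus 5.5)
Split $q_{t,i}=\E[\chi_t\ind\{I_t=i\}]+\E[\chi_t x_{t,i}]$ and bound the two pieces separately, using $g_{t+1,i}^2\le4d_{t+1,i}^2$ with $d_{t+1,i}=\ell_{t+1,i}-\ell_{t,i}$ (the $2$-Lipschitzness of $\varphi_2$). Both pieces will be reduced to the phase-$j$ sum $P=\E[\sum_t\chi_t v_t^2]$ plus error terms.

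\emph{Current-arm piece.} $\E[\chi_t\ind\{I_t=i\}]d_{t+1,i}^2$ summed over $i$ equals $\E[\chi_t v_{t+1}^2]$, since $v_{t+1}=d_{t+1,I_t}$. Because $\chi_t$ and $\chi_{t+1}$ differ only at the boundary, $\chi_t\le\chi_{t+1}+\chi_t(1-\chi_{t+1})$. The pathwise bound $\sum_t\chi_t(1-\chi_{t+1})\le\ind\{s\le T\}$ and $v^2\le1$ then give
\[
\sum_{t}\E[\chi_t v_{t+1}^2]\le P+\rho .
\]
This piece therefore contributes at most $4(P+\rho)$.

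\emph{OMD-iterate piece.} We need to control $\sum_{t,i}\E[\chi_tx_{t,i}]d_{t+1,i}^2$, which is a $J\sim x_t$ average rather than a played-arm average. Mirroring the proof of Lemma~\ref{lem:appendix-current-norm}, apply Proposition~\ref{prop:aux} with $A=\E[\chi_tx_{t,i}]$ and $B=\E[\chi_t\ind\{I_t=i\}]$ (the current arm, not the previous one). Multiplying by $d_{t+1,i}^2\le1$ and summing over $i$ gives
\[
\sum_i\E[\chi_tx_{t,i}]d_{t+1,i}^2\le2\E[\chi_tv_{t+1}^2]+6\mathcal N_t^{\mathrm{cur}},
\]
where $\mathcal N_t^{\mathrm{cur}}=\mathcal N(\chi_t;I_t,x_t)$ is exactly the ratio functional with numerator $\E[\chi_t(\ind\{I_t=i\}-x_{t,i})]$. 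Summing over $t$ and using the boundary shift again bounds this by $2(P+\rho)+6\sum_t\mathcal N_t^{\mathrm{cur}}$. Then \eqref{eq:appendix-current-sum} of Lemma~\ref{lem:appendix-stopped-recursion} gives
\[
6\sum_t\mathcal N_t^{\mathrm{cur}}\le6\bigl(515\eta^2R+9\alpha^2P+340\alpha\rho\bigr).
\]

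\emph{Collecting terms.} Multiplying the two pieces by $4$ and adding gives roughly
\[
4\bigl[(P+\rho)+2(P+\rho)+3090\eta^2R+54\alpha^2P+2040\alpha\rho\bigr]\le12360\eta^2R+13P+200\rho,
\]
using $\alpha=8\eta\le8\cdot10^{-5}$ so that the $\alpha^2P$ and $\alpha\rho$ terms are absorbed into the slack. The coefficient $12360=24\cdot515$ matches exactly, which supports this route.

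\emph{Main obstacle.} The delicate step is the boundary shift converting $\E[\chi_tv_{t+1}^2]$ into $P$, since $P$ is defined with $\chi_{t+1}$ weighting $v_{t+1}^2$. The difference $\chi_t-\chi_{t+1}$ is nonzero only at the last round of the phase, and there the pathwise exit-count bound applies. I also need to confirm that Proposition~\ref{prop:aux} has the form $A\le2B+6(A-B)^2/(A+B)$, as used in Lemma~\ref{lem:appendix-current-norm}.
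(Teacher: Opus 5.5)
Your proof is correct and uses the same ingredients as the paper: the split of $q_{t,i}$ into played-arm and $x_t$ weights, the bound $g_{t+1,i}^2\le4d_{t+1,i}^2$, Proposition~\ref{prop:aux} with the $x$-weight as $A$ and the played-arm weight as $B$, and then \eqref{eq:appendix-current-sum}, which is where $12360=24\cdot515$ comes from. The only difference is the order of the boundary shift. The paper replaces $\chi_t$ by $\chi_{t+1}$ on all of $q_{t,i}$ first, lands on $\mathcal N(\chi_{t+1};I_t,x_t)$, and needs \eqref{eq:appendix-survivor-ratio} to return to $\mathcal N_t^{\mathrm{cur}}$ at a cost of $192\rho$. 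You apply Proposition~\ref{prop:aux} with $\chi_t$ weights directly and shift only the scalar $\E[\chi_t v_{t+1}^2]$, which is slightly cleaner and gives $13\rho$ instead of $199\rho$.
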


\begin{proof}
The inequalities
$\chi_t\le\chi_{t+1}+\chi_t(1-\chi_{t+1})$,
$|g_{t+1,i}|\le3/2$, and
$\sum_{i=1}^K(\ind\{I_t=i\}+x_{t,i})=2$ give
\begin{align*}
 \E\left[\sum_{t=2}^{T-1}\chi_t(1-\chi_{t+1})
 \sum_{i=1}^K(\ind\{I_t=i\}+x_{t,i})g_{t+1,i}^2\right]\le\frac{9}{2}\E\left[\sum_{t=2}^{T-1}\chi_t(1-\chi_{t+1})\right]\le7\rho.
\end{align*}
By the definition of $q_{t,i}$ and
$\chi_t\le\chi_{t+1}+\chi_t(1-\chi_{t+1})$,
\begin{align*}
 \sum_{t=2}^{T-1}\sum_{i=1}^Kq_{t,i}g_{t+1,i}^2
 &=\sum_{t=2}^{T-1}\sum_{i=1}^K
 \E[\chi_t(\ind\{I_t=i\}+x_{t,i})]g_{t+1,i}^2\\
 &\le\sum_{t=2}^{T-1}\sum_{i=1}^K
 \E[\chi_{t+1}(\ind\{I_t=i\}+x_{t,i})]g_{t+1,i}^2+\E\left[\sum_{t=2}^{T-1}\chi_t(1-\chi_{t+1})
 \sum_{i=1}^K(\ind\{I_t=i\}+x_{t,i})g_{t+1,i}^2\right]\\
 &\le7\rho
 +\sum_{t=2}^{T-1}\sum_{i=1}^K
 \E[\chi_{t+1}\ind\{I_t=i\}]g_{t+1,i}^2+\sum_{t=2}^{T-1}\sum_{i=1}^K
 \E[\chi_{t+1}x_{t,i}]g_{t+1,i}^2,
\end{align*}
Since $\varphi_2$ is $2$-Lipschitz,
$g_{t+1,i}^2\le4(\ell_{t+1,i}-\ell_{t,i})^2$. Hence, we have
\begin{align*}
 \sum_{t=2}^{T-1}\sum_{i=1}^K
 \E[\chi_{t+1}\ind\{I_t=i\}]g_{t+1,i}^2
 &\le4\E\left[\sum_{t=2}^{T-1}
 \chi_{t+1}(\ell_{t+1,I_t}-\ell_{t,I_t})^2\right]=4\E\left[\sum_{u=3}^T
 \chi_u(\ell_{u,I_{u-1}}-\ell_{u-1,I_{u-1}})^2\right]
 \le4P.
\end{align*}

Applying Proposition~\ref{prop:aux} with
$A=\E[\chi_{t+1}x_{t,i}]$ and
$B=\E[\chi_{t+1}\ind\{I_t=i\}]$, and using
$g_{t+1,i}^2\le4(\ell_{t+1,i}-\ell_{t,i})^2\le4$, give
\begin{align*}
 \E[\chi_{t+1}x_{t,i}]g_{t+1,i}^2
 &\le8\E[\chi_{t+1}\ind\{I_t=i\}]
 (\ell_{t+1,i}-\ell_{t,i})^2+24
 \frac{\bigl(\E[\chi_{t+1}(\ind\{I_t=i\}-x_{t,i})]\bigr)^2}
 {\E[\chi_{t+1}(\ind\{I_t=i\}+x_{t,i})]}.
\end{align*}
Summing over $t\in\{2,\dots,T-1\}$ and $i\in[K]$ yields
\begin{align*}
 \sum_{t=2}^{T-1}\sum_{i=1}^K
 \E[\chi_{t+1}x_{t,i}]g_{t+1,i}^2
 &\le8\E\left[\sum_{t=3}^T
 \chi_t(\ell_{t,I_{t-1}}-\ell_{t-1,I_{t-1}})^2\right]+24\sum_{t=2}^{T-1}
 \mathcal N(\chi_{t+1};I_t,x_t)\\
 &\le8P+24\sum_{t=2}^{T-1}
 \mathcal N(\chi_{t+1};I_t,x_t).
\end{align*}
Summing \eqref{eq:appendix-survivor-ratio} over
$t=2,\ldots,T-1$ gives
\begin{align*}
 \sum_{t=2}^{T-1}\mathcal N(\chi_{t+1};I_t,x_t)
 &\le\sum_{t=2}^{T-1}\mathcal N_t^{\mathrm{cur}}
 +6\E\left[\sum_{t=2}^{T-1}\chi_t(1-\chi_{t+1})\right]
 +2\E\left[\sum_{t=2}^{T-1}(1-\chi_t)\chi_{t+1}\right]\le\sum_{t=2}^{T-1}\mathcal N_t^{\mathrm{cur}}+8\rho.
\end{align*}
Combining the preceding bounds and applying
\eqref{eq:appendix-current-sum} yield
\begin{align*}
 \sum_{t=2}^{T-1}\sum_{i=1}^Kq_{t,i}g_{t+1,i}^2
 &\le7\rho+4P+8P
 +24\sum_{t=2}^{T-1}\mathcal N(\chi_{t+1};I_t,x_t)\\
 &\le12P+24\sum_{t=2}^{T-1}\mathcal N_t^{\mathrm{cur}}+199\rho\\
 &\le12360\eta^2R+(12+216\alpha^2)P+(199+8160\alpha)\rho\\
 &\le12360\eta^2R+13P+200\rho,
\end{align*}
where $\alpha=8\eta\le8\cdot10^{-5}$ implies
$216\alpha^2\le1$ and $8160\alpha\le1$.
\end{proof}
\begin{proposition}\label{prop:aux}
For any $A,B\geq 0$, we have $
 A\le2B+6\frac{(A-B)^2}{A+B}$.
\end{proposition}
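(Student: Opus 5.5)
The plan is a two-case split on how $A$ compares with $2B$, with the degenerate case $A+B=0$ handled first. If $A=B=0$, both sides vanish under the convention $0/0=0$, so assume $A+B>0$. The right-hand side is a sum of two nonnegative terms, $2B$ and $6(A-B)^2/(A+B)$. It therefore suffices to show that one of them alone is at least $A$.

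\emph{Case $A\le 2B$.} The first term already suffices, since $A\le 2B\le 2B+6(A-B)^2/(A+B)$.

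\emph{Case $A>2B$.} Here the second term carries the bound.
\begin{itemize}
\item From $B<A/2$ we get $A-B>A/2>0$, hence $(A-B)^2>A^2/4$.
\item Also $A+B<3A/2$.
\item Combining these gives
\[
6\frac{(A-B)^2}{A+B}\ge 6\cdot\frac{A^2/4}{3A/2}=A,
\]
and adding $2B\ge0$ yields the claim.
\end{itemize}
The constant $6$ is exactly what this worst-case ratio requires.

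There is no real obstacle; the only delicate point is the degenerate case $A+B=0$. It is settled by the paper's $0/0=0$ convention, and it cannot arise in the second case because $A>2B\ge0$ forces $A>0$.
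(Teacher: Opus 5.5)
Your proof is correct and is the same two-case argument as the paper's: the case $A\le 2B$ is immediate, and when $A>2B$ you use $(A-B)^2\ge A^2/4$ and $A+B\le 3A/2$, with only the extra (harmless) care for $A+B=0$ added. One small caveat on your closing remark: $6$ is only the constant that these crude bounds produce, not what the inequality itself requires, since the sharp constant is $9/8$ (attained at $B=A/5$).
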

\begin{proof}
    
The inequality is immediate when $A\le2B$. When $A>2B$, we have
$
 6\frac{(A-B)^2}{A+B}
 \ge6\frac{(A/2)^2}{3A/2}=A.$
\end{proof}

\end{document}